\documentclass[twoside]{article}

\usepackage{amssymb}
\usepackage{enumitem}
\usepackage{anyfontsize}
\usepackage{amsmath, amsthm, amssymb, graphicx, color, hyperref}

\theoremstyle{plain}
\newtheorem{thm}{Theorem}
\newtheorem{lem}{Lemma}
\newtheorem{prop}{Proposition}

\theoremstyle{definition}

\theoremstyle{remark}


\DeclareMathOperator{\sgn}{sgn}

%
\usepackage[accepted]{aistats2025}
%




\begin{document}

%

%

\twocolumn[
\runningtitle{Optimizing Noise Schedules of Generative Models in High Dimensions}
\aistatstitle{Optimizing Noise Schedules of Generative Models\\ in High Dimensions}
\aistatsauthor{ Santiago Aranguri \And Giulio Biroli \And  Marc Mézard \And Eric Vanden-Eijnden }
{\fontsize{9.6pt}{12pt}\selectfont
\aistatsaddress{ Courant Institute of\\ Mathematical Sciences,\\New York University \And  Laboratoire de Physique de\\l’Ecole Normale Supérieure\\ENS, Université PSL\And Department of Computing \\Sciences, Bocconi University \And Courant Institute of\\ Mathematical Sciences,\\New York University }}
]

\begin{abstract}
Recent works have shown that diffusion models can undergo phase transitions, the resolution of which is needed for accurately generating samples. This has motivated the use of different noise schedules, the two most common choices being referred to as variance preserving (VP) and variance exploding (VE). Here we revisit these schedules within the framework of stochastic interpolants. Using the Gaussian Mixture (GM) and Curie-Weiss (CW) data distributions as test case models, we first investigate the effect of the variance of the initial noise distribution and show that VP recovers the low-level feature (the distribution of each mode) but misses the high-level feature (the asymmetry between modes), whereas VE performs oppositely. We also show that this dichotomy, which happens when denoising by a constant amount in each step, can be avoided by using noise schedules specific to VP and VE that allow for the recovery of both high- and low-level features. Finally we show that these schedules yield generative models for the GM and CW model whose probability flow ODE can be discretized using $\Theta_d(1)$ steps in dimension $d$ instead of the $\Theta_d(\sqrt{d})$ steps required by constant denoising.
\end{abstract}

\section{Introduction}
Generative models based on dynamical transport of measure have emerged as powerful tools in unsupervised learning  (\cite{sohl2015deep}, \cite{ho2020denoising}, \cite{song2021scorebasedgenerativemodelingstochastic}). These models are trained on samples from a data distribution and tasked with generating new samples from this distribution. This can for example be achieved with probability flow ODEs, in which  the  data samples are used to learn a velocity field that pushes samples from a simple distribution (say a Gaussian) onto new samples from the data distribution. Probability flow ODEs can achieve state of the art performance in  image generation \cite{esser2024scalingrectifiedflowtransformers}, but  they are not fully understood theoretically. A theoretical framework not only would suggest improvements, but also greatly reduce the architecture and hyperparameter search that practitioners must do to get accurate results.

The main sources of error for probability flow ODE are (a) estimating the velocity field for the generative model and (b) running a discretization of this velocity field. In this work, we assume access to the exact velocity field and analyze the error from running the generative model with a discretized version of the velocity field, as the dimension of the input data goes to infinity.

To this end, we build upon the work of \cite{biroli2024dynamicalregimesdiffusionmodels}, where score-based diffusion models are studied and shown to exhibit a phase transition in the generative process. This transition occurs at a time that goes to infinity as the dimension of the data grows, thereby requiring to run the diffusion model for longer times as the dimension increases to capture the transition. Here we revisit these results within the framework of Stochastic Interpolants \cite{albergo2023buildingnormalizingflowsstochastic,albergo2023stochasticinterpolantsunifyingframework} (also see \cite{lipman2023flow} and \cite{liu2022flowstraightfastlearning}), where the generative process happens in $[0, 1]$ instead of $[0, \infty)$ which is the case in \cite{biroli2024dynamicalregimesdiffusionmodels}, and is performed via solving a probability flow ODE. With a uniform noise schedule, we show the solutions to this ODE display a speciation transition a time that goes to zero as the dimension tends to infinity and therefore cannot be resolved by uniform time-discretization of the ODE. However, we also show  that we can select an appropriate noise schedule to ensure that this speciation transition is resolved.

Specifically, we analyze the probability flow ODE generating samples from a two-mode Gaussian mixture.
\begin{itemize}[leftmargin=0.15in]
    \item We shed light into the Variance Preserving (VP) and Variance Exploding (VE) regimes used by practitioners by showing that with a uniform noise schedule, VP captures only the high-level feature whereas VE captures only the low-level feature. We show that with the appropriate noise schedules, VP and VE can capture both phases.
    \item We find that the velocity field simplifies for each phase, reducing to the estimation of the high-level feature in the beginning and then sharply changing to the estimation of the low-level feature.
    \item We show that the length of the two phases depends on the initial variance and the time-schedule of the interpolation, which need to be chosen correctly for accurate estimation of both phases. Not doing so leads to either of the phases having length that vanishes as $d\to\infty,$ implying that the generative process does not capture some of the features from the data. 
    \item In fact, with the correct initial variance and time schedule, the generative process has a well-defined limiting ODE as $d\to\infty$. In particular, this means that we get accurate estimation of the data discretizing the generative process using $\Theta_d(1)$ points\footnote{We recall that $f(d)=\Theta_d(g(d))$ if $\lim_{d\to\infty} f(d)/g(d) \in (0,\infty).$}. If we discretize with a uniform grid, we require $\Theta(\sqrt{d})$ points for accurate estimation.
\end{itemize}

\section{Related work}
\cite{bm} consider the Curie-Weiss model as the data distribution, and introduced the idea of an speciation time, being the time after which the generative process has found the high-level structure of the data. They further find the dependence of the speciation time on the dimension of the data. Later \cite{biroli2024dynamicalregimesdiffusionmodels} generalize the definition of speciation time, which can be described in terms of cloning trajectories \cite{biroli2024dynamicalregimesdiffusionmodels} which consists of noising by a certain amount and defining the speciation transition as the amount of noise that is needed to make trajectories with independent Brownian motion terms started at the noisy datapoint speciate (i.e. go to different classes.) See also the related U-Turn method from \cite{behjoo2024uturndiffusion} and \cite{sclocchi2024phasetransitiondiffusionmodels}. The present paper builds on their work, and focuses on understanding how the speciation time is affected by time-dilations and the initial variance of the generative process. The correct choice of these leads to a well-defined limiting probability flow ODE as $d\to\infty$ which allows us to characterize the generative model in the different phases. 

\cite{raya2023spontaneoussymmetrybreakinggenerative} show the existence of spontaneous symmetry breaking, dividing the generative dynamics into two phases: 1) a linear steady-state dynamics around a central fixed-point, 2) an attractor dynamics towards the data manifold. They provide a theoretical analysis supporting their claim for the empirical distribution on $k$ undistinguishable samples and for the hypersphere. In the present paper, however, through considering asymptotics and time-dilations, we are able to analyze more complex data distributions, and give exact expressions for the velocity field of the generative model. This allows us to show how each feature of the data is learned, and show that the velocity field simplifies in each phase.

\cite{li2024criticalwindowsnonasymptotictheory} study diffusion models through critical windows, showing that for strongly log-concave data, one can associate a narrow time window to the learning of a given feature of the data. Further, they interpret diffusion models as hierarchical samplers that progressively decide output features. The main differences with our work is that they run the generative model in continuous time, whereas our analysis answers the important practical question of how many discretization steps are needed to capture different features. Our analysis allows us to give precise characterizations of the dynamics in each phase. We can also handle non-log-concave distributions, and we do so for the Curie-Weiss distribution, showing that the separation of phases extends beyond log-concave distributions. As opposed to ours, their analysis is non-asymptotic.

Several works focus on providing bounds for the number of required steps for generative models to accurately generate data. They either have general mild assumptions on the smoothness of the data (\cite{NEURIPS2023_d84a27ff}, \cite{benton2024nearlydlinearconvergencebounds}) or closer to our work, assume Gaussian Mixture data
(\cite{gatmiry2024learningmixturesgaussiansusing}, \cite{shah2023learningmixturesgaussiansusing}). They require a polynomial in dimension number of steps for accurate estimation, with the best bound being for the probability flow ODE requiring $O(\sqrt{d})$ steps. In the present work we assume a simpler distribution (a two-mode Gaussian Mixture) but we are able to show that if one uses the right noise schedule, $\Theta_d(1)$ steps suffice for accurate estimation of this distribution.

\section{Results}
\subsection{Set up}
We consider the Gaussian Mixture (GM) distribution
\begin{align*}
    \mu = p\mathcal{N}(r,\sigma^2\text{Id}) + (1-p)\mathcal{N}(-r,\sigma^2\text{Id})
\end{align*}
where $p\in [0,1], r\in\mathbb{R}^{d}$ such that $|r|^{2}=d$ and $\sigma^2=\Theta_d(1).$

We will benchmark different settings for the probability flow ODE generating samples from the GM distribution testing whether it captures the high-level parameter $p$ and low-level parameter $\sigma^2.$ More precisely, we consider the stochastic interpolant \cite{albergo2023stochasticinterpolantsunifyingframework} 
\begin{equation*}
    I_{\tau}=c\alpha_\tau z+\beta_\tau a
\end{equation*}
where  $c>0$, $\alpha_\tau,\beta_\tau \in C^2([0,1]),$ $\alpha_0=1=\beta_1,$ $\alpha_1=0=\beta_0,$  $\dot\alpha_\tau<0 <\dot \beta_\tau,$ $z\sim{\mathcal{N}}(0,\text{Id}_{d})$ and $a\sim\mu.$ The choices of $\alpha_t,\beta_t$ imply that $I_0=cz\sim {\mathcal{N}}(0,c^2\text{Id}_{d})$ and $I_1=a\sim\mu$, that is, $I_t$ interpolates between noise and data. We assume that $\alpha_\tau, \beta_\tau$ do not depend on~$d$, and will use $c$ to scale the noise with dimension. Borrowing terminology from \cite{song2021scorebasedgenerativemodelingstochastic}, we will call the stochastic interpolants with $c=1$ the \textbf{Variance Preserving} (VP) interpolants and $c=\sqrt{d}$ the \textbf{Variance Exploding} (VE) interpolants.

Associated with any interpolant there is a generative model that pushes through an ODE or an SDE to generate samples from the data distribution. In this setup, \cite{albergo2023stochasticinterpolantsunifyingframework} prove the following result.

\begin{lem}
    Let $X_{\tau}$ solve the probability flow ODE  
    \begin{equation}
    \Dot{X}_{\tau}=b_{\tau}(X_{\tau})\qquad\text{with}\qquad b_{\tau}(x)=\mathbb{E}[\Dot{I}_{\tau}|I_{\tau}=x]\label{eq:ode}
    \end{equation}
    If $X_{0}\sim \mathcal{N}(0,c^2\text{Id}_{d})$, then $X_{\tau}\stackrel{d}{=}I_{\tau}$ for all $\tau\in[0,1]$, and in particular, $X_{\tau=1}\sim\mu$. 
\end{lem}

In practice, the ODE from \eqref{eq:ode} needs to be discretized. We are interested in how well this probability flow ODE generates samples from the data, when the number of discretization points does not grow with $d.$ In the next proposition, we consider a VP interpolant, and show that if we discretize with a uniform grid whose step size is $\Theta_d(1)$ we will not recover the parameter $p.$ However, the VP interpolants are able to recover the distribution within each mode (given by the parameter $\sigma^2$).

\begin{prop}[VP only captures $\sigma^2$] Let $X^{\Delta \tau}_\tau$ be obtained from the probability flow ODE \eqref{eq:ode} associated with a VP interpolant discretized with a uniform grid with step size $\Delta \tau.$ Let 
\label{prp:vp}
\begin{align*}
    M_\tau^{\Delta \tau, d}=r\cdot X_\tau^{\Delta \tau}/d, \quad  X_\tau^{\perp} = X^{\Delta \tau}_\tau - M_\tau^{\Delta \tau, d}r.
\end{align*} Then
\begin{align*}
    \lim_{\Delta \tau\to 0}\lim_{d\to\infty} M^{\Delta \tau, d}_1 \stackrel{d}{=} \hat p \delta_1 + (1-\hat p) \delta_{-1}
\end{align*}
where
\begin{align*}
    \hat p =
    \begin{cases}
        1 \qquad \text{ if } p>1/2\\
        1/2 \quad \text{ if } p=1/2\\
        0 \qquad \text{ if } p<1/2\
    \end{cases}
\end{align*}
Morevover, for $\tau=\ell\Delta \tau\in [0,1]$, $\ell\in\mathbb N,$ we have
\begin{align*}
    X^{\perp}_\tau \sim \mathcal{N}(0,\left( \sigma^{\Delta \tau}_\tau\right)^2\text{Id}_{d-1}).
\end{align*}
where $\sigma^{\Delta \tau}_\tau$ satisfies
\begin{align*}
    \lim_{\Delta \tau\to 0} \sigma^{\Delta \tau}_1 = \sigma.
\end{align*}
\end{prop}

This proposition can be better understood by looking at the speciation time $\tau_s,$ which is the time in the generative process after which it is determined what mode the sample will belong to. \cite{biroli2024dynamicalregimesdiffusionmodels} determine the speciation time for the GM distribution, working with score-based diffusion models on $[0,\infty).$ We show in appendix, Proposition \ref{prp:spec:vp}, that translating their result to the context of stochastic interpolants gives that the VP interpolants have speciation time $\tau_s$ that goes to $0$ as $d$ goes to infinity, meaning that the time when the generative model can recover the parameter $p$ vanishes as the dimension $d$ goes to infinity. This explains why the discretization with a finite $\Delta\tau$ cannot recover the correct speciation transition.

On the contrary, we prove next that the VE interpolants have speciation time $\tau _s\in (0, 1)$ as $d\to\infty$, and therefore their generative model can recover the correct value of $p$ in the limit of $d$ going to infinity when discretized with a uniform grid with $\Theta_d(1)$ steps. However, as this generative model with VE interpolants focuses on the speciation transition, it fails to reconstruct the right probability distribution of each mode: it cannot obtain the right value of $\sigma^2$  in the $d\to \infty$ limit.

\begin{prop}[VE only captures $p$] 
\label{prp:ve}
 Let $X^{\Delta t}_t$ be obtained from the probability flow ODE \eqref{eq:ode} associated with the VE interpolant discretized with a uniform grid with step size $\Delta \tau.$ Let \begin{align*}
    M_\tau^{\Delta \tau, d}=r\cdot X_\tau^{\Delta \tau}/d, \quad  X_\tau^{\perp} = X^{\Delta \tau}_\tau - M_\tau^{\Delta \tau, d}r.
\end{align*} 
Then $M_\tau=\lim_{\Delta \tau\to 0}\lim_{d\to\infty} M^{\Delta \tau, d}_\tau$ fulfills the ODE
\begin{align*}
    \dot M_\tau = \frac{\dot\alpha_\tau}{\alpha_\tau}M_t+ \frac{\alpha_\tau\dot\beta_\tau-\dot\alpha_\tau\beta_\tau}{\alpha_\tau}\tanh\left(h+\frac{\beta_\tau M_{\tau}}{\alpha^2_\tau}\right)
\end{align*} 
where $h$ is such that $e^{h}/(e^{h} + e^{-h}) = p$. This yields a speciation time $\tau_s\in (0,1)$ independent of $d$, and implies that 
\begin{align*}
    \lim_{\Delta \tau\to 0}\lim_{d\to\infty} M^{\Delta \tau, d}_1 \stackrel{d}{=} p \delta_1 + (1- p) \delta_{-1}
\end{align*}
Moreover, for $\tau=\ell\Delta \tau \in [0, 1]$, $\ell\in\mathbb{N},$ we have
\begin{align*}
    X^{\perp}_\tau \sim \mathcal{N}(0,\left( \sigma^{\Delta \tau, d}_\tau\right)^2\text{Id}_{d-1}).
\end{align*}
where $\sigma^{\Delta \tau, d}_\tau$ satisfies
\begin{align*}
    \lim_{\Delta \tau\to 0}\lim_{d\to\infty} \sigma^{\Delta \tau, d}_1 = 0.
\end{align*}
\label{prp:van:ve}
\end{prop}

\textbf{Dichotomy between VP and VE interpolants.} The discussion so far shows a dichotomy: the $p$ is captured by the VE but not the VP interpolant, whereas the $\sigma^2$ is captured by the VP but not the VE interpolant. Next we show how to introduce time dilations to solve this issue. 

\subsection{Time dilated interpolants}
For concretness, we specialize to $\alpha_\tau=1-\tau$ and $\beta_\tau=\tau.$ We define the \textit{Dilated Variance Preserving interpolant}
\begin{align}
    I^\text{P}_{t} := I^\text{P}_{\tau=\tau_t}=(1-\tau_t)z+\tau_t a
\end{align}
and the \textit{Dilated Variance Exploding interpolant}
\begin{align}
    I^\text{E}_{t} := I^\text{E}_{\tau=\tau_t}=\sqrt{d}(1-\tau_t)z+\tau_t a.
\end{align}
If we think of running a discretization of the probability flow ODE associated to any of these interpolants, we can see each step in the discretization as denoising by a certain amount. Introducing a time dilation, then, is equivalent to using a non-uniform noise schedule. Next, we give specific time dilations $\tau_t$ for the VP and VE interpolants, which will enable us to capture both $p$ and $\sigma^2.$ 

\textbf{Dilated Variance Preserving.} 
We saw earlier that the VP interpolants have a speciation times that go to $0$ as $d$ goes to infinity. In fact, if $\alpha_t=1-t,\beta_t=t$ we prove in the appendix, Proposition \ref{prp:spec:vp}, that the VP interpolant has speciation time $1/\sqrt{d}.$ This motivates a time dilation that makes the speciation time happen at a constant time $\tau_s\in (0, 1)$ as $d\to\infty.$ To this end, we consider
\begin{align}
    \label{eq:vp:time_dil}
    \tau(t) = \begin{cases}
        \frac{2\kappa t}{\sqrt{d}} & \text{if } t \in [0,\frac{1}{2}]\\
        \frac{\kappa}{\sqrt{d}} + \left(1-\frac{\kappa}{\sqrt{d}}\right)(2t-1) & \text{if } t \in [\frac{1}{2}, 1]
    \end{cases}
\end{align}
which satisfies $\tau(0)=0,$ $\tau(\frac12)=\kappa/\sqrt{d},$ $\tau(1)=1$ where $\kappa$ is a constant. 

We now prove that with this time dilation, the generative model associated to the VP interpolant has a well-defined limiting ODE. This implies that the dilated VP captures both $p$ and $\sigma^2.$

\begin{thm}[Dilated VP captures $p$ and $\sigma^2$]
    Let $X^{\Delta t}_t$ be obtained from the probability flow ODE associated with the dilated VP interpolant discretized with a uniform grid with step size $\Delta t.$ Then 
    \label{thm:dvpa}
    \begin{align*}
        X^{\Delta t}_t - \frac{r\cdot X^{\Delta t}_t}{{d}}r\sim \mathcal{N}\left(0, \left( \sigma^{\Delta t, d}_t\right)^2\text{Id}_{d-1}\right).
    \end{align*}
    where $ \sigma^{\Delta t, d}_t$ is characterized as follows:\\
    \textbf{First phase}: For $t\in [0,\tfrac{1}{2}]$ we have 
    \begin{align*}
        \lim_{\Delta t\to 0}\lim_{d \to\infty}  \sigma^{\Delta t, d}_t =1. 
    \end{align*}
    In addition  
        \begin{align*}
            \mu_t = \lim_{\Delta t\to 0}\lim_{d \to\infty} \frac{r\cdot X^{\Delta t}_t}{\sqrt{d}}
        \end{align*}
        fulfills 
        \begin{align*}
            \dot \mu_t &= 2\kappa \tanh\left(h+2\kappa t \mu_t\right),\quad \mu_{t=0}\sim \mathcal{N}(0,1).
        \end{align*}
        where $h$ is such that $p=e^h/(e^h+e^{-h}).$ This implies $\mu_{t=1/2}\sim p\mathcal{N}(\kappa, 1) + (1-p)\mathcal{N}(-\kappa, 1).$\\ 
    \textbf{Second phase}: For $t\in [\tfrac{1}{2}, 1]$ we have 
    \begin{align*}
        \lim_{\Delta t\to 0}\lim_{d \to\infty}  \sigma^{\Delta t, d}_t =\sqrt{(2-2t)^2+(2t-1)^2\sigma^2}
    \end{align*}
    In addition 
        \begin{align*}
            M_t = \lim_{\Delta t\to 0}\lim_{d \to\infty}  \frac{r\cdot X^{\Delta t}_t}{{d}}
        \end{align*}
        fulfills, for $t\in (1/2, 1)$, the ODE 
        \begin{align*}
            \dot M_t &= \frac{-(1-t)+\sigma^2(t-\tfrac{1}{2})}
            {(1-t)^2+\sigma^2(t-\tfrac{1}{2})^2}M_{t} + \frac{(1-t) \sgn(M_t)}{(1-t)^2+\sigma^2(t-\tfrac{1}{2})^2}
        \end{align*}
        and satisfies
        \begin{align*}
            M_1 \sim p^\kappa\delta_1 + (1-p^\kappa)\delta_{-1}
        \end{align*}
        where $p^\kappa$ is such that  $\lim_{\kappa\to \infty}p^\kappa=p$
\end{thm}
Since we take first $d\to\infty$ and then $\Delta t\to 0,$ this means we can discretize the ODE with $\Delta t\in \Theta_d(1)$ and get accurate estimation.

We emphasize that the first phase captures the relative asymmetry $p,$ whereas the second phase captures the distribution of each mode, through the estimation of~$\sigma^2.$ This result can be equivalently seen as the fact that using a discretization grid for the non-dilated VP interpolant with half grid points on $[0,\kappa/\sqrt{d}]$ and half on $[\kappa/\sqrt{d}, 1]$ yields accurate estimation of $p$ and $\sigma^2.$ In particular, this means that using a uniform discretization with $\sqrt{d}$ grid points gives correct estimation of $p$ and $\sigma^2$ for the non-dilated VP interpolant. However, we show in the appendix, Proposition \ref{prp:spec2}, that discretizing the probability flow ODE for the VP interpolant with $o(\sqrt{d})$ uniform grid points does not yield correct estimation of $p.$

\textbf{Dilated Variance Exploding.} We saw in Proposition \ref{prp:van:ve} that the vanilla VE interpolant can not recover $\sigma^2$. This can be seen intuitively by looking at the interpolant at a given coordinate 
$$(I^E_\tau)^i = \sqrt{d}(1-\tau)z^i + \tau a^i$$
and noting that only for $\tau\in [1-\kappa/\sqrt{d}, 1]$ the noise and data have coefficients of the same magnitude. Without dilation, this window disappears as $d$ goes to infinity. We prove that dilating time so as to have this window of constant length solves the problem. More precisely, consider the time-dilation
\begin{align}
    \label{eq:t_dil:ve}
    \tau(t) = \begin{cases}
        \left(1-\frac{\kappa}{\sqrt{d}}\right)2t & \text{if } t \in [0,\frac{1}{2}]\\
        \left(1-\frac{\kappa}{\sqrt{d}}\right) + \frac{\kappa}{\sqrt{d}}(2t-1) & \text{if } t \in [\frac{1}{2}, 1]
    \end{cases}
\end{align}

\begin{thm}[Dilated VE captures $p$ and $\sigma^2$]
    \label{prp:dve}
    Let $X^{\Delta t}_t$ be obtained from the probability flow ODE associated with the dilated VE interpolant discretized with a uniform grid with step size $\Delta t.$ Let \begin{align*}
        M_\tau^{\Delta \tau, d}=r\cdot X_\tau^{\Delta \tau}/d, \quad  X_\tau^{\perp} = X^{\Delta \tau}_\tau - M_\tau^{\Delta \tau, d}r
    \end{align*}
    and let $M_t = \lim_{\Delta t\to 0}\lim_{d \to\infty} M^{\Delta t, d}_t.$\\
    \textbf{First phase}: For $t\in [0,\tfrac{1}{2}],$ we have
    \begin{align*}
        X_t^{\perp}\sim \mathcal{N}\left(0, d\left( \sigma^{\Delta t, d}_t\right)^2\text{Id}_{d-1}\right).
    \end{align*}
    where
    \begin{align*}
        \lim_{\Delta t\to 0}\lim_{d \to\infty} \sigma^{\Delta t, d}_t= 1-2t
    \end{align*}
    Moreover, $M_t$ fulfills
    \begin{align*}
        \dot M_t = \frac{-M_t + \tanh\left(h+\frac{2tM_t}{(1-2t)^2}\right)}{\tfrac{1}{2}-t}
    \end{align*}
    with $M_{t=0}\sim \mathcal{N}(0,1)$ and $h$ is such that $p=e^h/(e^h+e^{-h}).$ This implies 
    \begin{align*}
        M_{1/2} \sim p\delta_1+(1-p)\delta_{-1}.
    \end{align*}
    \textbf{Second phase}: For $t\in [\tfrac{1}{2}, 1],$ we have that $M_t=M_{t=1/2}$ remains constant. Moreover,
    \begin{align*}
        X_t^{\perp}\sim \mathcal{N}\left(0, \left( \sigma^{\kappa, \Delta t, d}_t\right)^2\text{Id}_{d-1}\right)
    \end{align*} 
    where 
    \begin{align*}
        \sigma^\kappa_t=\lim_{\Delta t\to 0}\lim_{d\to \infty} \sigma^{\kappa, \Delta t, d}_t = \kappa\sqrt{\frac{\kappa^2(2-2t)^2+\sigma^2}{\kappa^2+\sigma^2}}.
    \end{align*}
    In particular, $\lim_{\kappa\to\infty}\sigma_1^\kappa=\sigma.$
\end{thm}

Similarly to Theorem \ref{thm:dvpa}, we see that the first phase captures $p$ whereas the second phase captures $\sigma^2.$ However, we see that the role of the dilation is different in the VP and the VE interpolants. For the VP interpolant, the estimation of $\sigma^2$ is solved ``by default,'' whereas estimating $p$ requires $\kappa$ large enough. The VE interpolant performs oppositely.


\subsection{Connection with Score-based diffusion models}
\label{sec:sbdm}
\textbf{Variance Preserving SDE.} Consider the VP SDE from \cite{song2021scorebasedgenerativemodelingstochastic}
\begin{align}
    \label{eq:sbdm:vp}
    dY_s = -Y_sds + \sqrt{2}dW_s;\quad Y_{s=0}\sim \mu
\end{align}
Under this SDE, $Y_s$ converges to a standard normal as $s$ tends to $\infty.$ By learning the score of the density of $Y_s,$ we can write the associated backward SDE to \eqref{eq:sbdm:vp} or the probability flow ODE and use either as a generative model. A computation yields that the law of $Y_s$ conditioned on $Y_{0}$ is given by 
\begin{align}
    \label{eq:sde:gamma}
    Y_s \sim \mathcal{N}\left(e^{-s}Y_0,\left(1-e^{-2s}\right)\text{Id}\right)
\end{align}
for $s\in[0,\infty).$ This means that $Y_s$ is equal in law to 
\begin{align*}
    I_s = \sqrt{1-e^{-2s}}z+e^{-s}a
\end{align*}
where $z\sim\mathcal{N}(0,\text{Id})$ and $a\sim \mu.$ Under the change of variables $s(t)=-\ln t,$ we have 
\begin{align}
    \label{eq:van:vp}
    I_t := Y_{s(t)} = \sqrt{1-t^2}z+ta,
\end{align}
which is a Variance Preserving interpolant with $\alpha_t=\sqrt{1-t^2},$ $\beta_t=t.$ In practice, the following more general SDE is considered
\begin{align}
    \label{eq:sbdm:vp:gamma}
    dY_s = -\gamma_sY_sds + \sqrt{2\gamma_s}dW_s,\quad Y_{s=0}\sim \mu,
\end{align}
with $\gamma_s\geq 0.$ In this case, under the change of variables $s(t)=-\ln t,$ we get that $Y_{s(t)}$ equals in law to 
\begin{align*}
    I_t = \sqrt{1-\tau_t^2}z+\tau_t a
\end{align*}
where $\tau_t = \exp\left(-\int^{-\ln t}_0 \gamma_u du\right).$ Hence, $\gamma_s$ acts as a time dilation of the first VP interpolant \eqref{eq:van:vp}.

Practitioners often run the SDE \eqref{eq:sbdm:vp:gamma} for $s\in[0,1]$ and take $\gamma_s=\gamma_{\min} + s(\gamma_{\max} - \gamma_{\min})$ where $\gamma_{\max}$ and $\gamma_{\min}$ are determined empirically. We plot in Figure \ref{fig:dilation} this time dilation using $\gamma_{\max}=20$ and $\gamma_{\min}=0.1$ as chosen by \cite{ho2020denoising} and \cite{song2021scorebasedgenerativemodelingstochastic}.

Although our analysis for the time dilated VP interpolant is for $\alpha_\tau=1-\tau$ and $\beta_\tau=\tau,$ we show in the appendix, Theorem \ref{thm:dvp}, that the VP interpolant with $\alpha_\tau=\sqrt{1-\tau^2}$ and $\beta_\tau=\tau$ also captures both $p$ and $\sigma^2$ when using the time dilation from equation \eqref{eq:vp:time_dil}.

We see in Figure \ref{fig:dilation} that the dilation from \cite{ho2020denoising} and our dilation from equation \eqref{eq:vp:time_dil} both dilate near the beginning. This suggests that our analysis may extend to broader settings beyond the probability flow ODE for the Gaussian Mixture distribution.
\begin{figure}
    \centering
    \includegraphics[width=.492\linewidth]{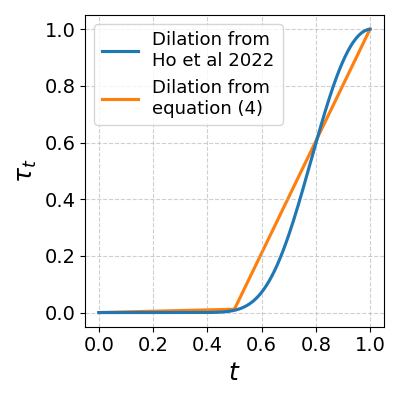}
    \includegraphics[width=.492\linewidth]{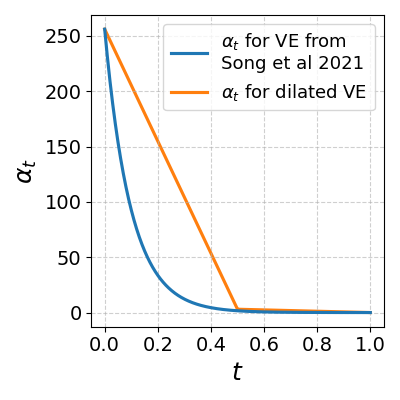}
    \caption{\textbf{(Left panel):} We consider the time dilation used by \cite{ho2020denoising} where $\tau_t = \exp\left(\gamma_{\min}\ln t -(\gamma_{\max}-\gamma_{\min})(\ln t)^2/2\right)$ for $\gamma_{\max}=20$ and $\gamma_{\min}=0.1$ and compare it with the time dilation \eqref{eq:vp:time_dil} used in our analysis, with $d=256^2$ (since \cite{ho2020denoising} works with $256\times256$ images) and $\kappa=3.$ Since the VP SDE is run til $s=1,$ the time dilation from \cite{ho2020denoising} is only used in $t\in[1/e, 1].$ \textbf{(Right panel):} We plot the magnitude $\alpha_t$ of the noise of the dilated VE interpolant, $\alpha_t=\sqrt{d}(1-\tau_t)$ with $\tau_t$ defined in \eqref{eq:t_dil:ve}. We also plot the magnitude of the noise for the VE SDE $\alpha_t=\sqrt{\sigma^2_{1-t} - \sigma^2_0}$ from \cite{song2021scorebasedgenerativemodelingstochastic}.}
    \label{fig:dilation}
\end{figure}

\textbf{Variance Exploding SDE.} The VE SDE is defined as 
\begin{align*} 
    dY_s = \sqrt{\frac{d(\sigma^2_s)}{ds}}dW_s;\quad Y_{s=0}\sim \mu
\end{align*}
where $\sigma_s = \sigma_{\min} \left( \frac{\sigma_{\max}}{\sigma_{\min}} \right)^s$ for $s\in[0, 1].$ It can be seen that $Y_s$ conditioned on $Y_{s=0}$ is given by
\begin{align*}
    Y_s \sim \mathcal{N}(Y_0,(\sigma^2_s-\sigma^2_0)\text{Id}).
\end{align*}

Hence, defining $X_t=Y_{1-t}$ we get
\begin{align}
    \label{eq:ve:sde}
    X_t \stackrel{d}{=} \sqrt{\sigma^2_{1-t} - \sigma^2_0}z + a
\end{align}
where $z\sim \mathcal{N}(0,\text{Id})$ and $a\sim \mu.$ Taking $\sigma_{\max}\gg \sigma_{\min}$ gives $X_0=\sigma_{\max}z.$ In practice, $\sigma_{\min}$ is taken to be a small constant, usually $0.01,$ while $\sigma_{\max}$ is taken to be the maximum Euclidean distance between any pair of sample from the dataset \cite{improved}. For the GM distribution, this distance is of order $\sigma_{\max} = \sqrt{d}.$ This means that the VE SDE from \cite{song2021scorebasedgenerativemodelingstochastic} and our VE interpolant both start from a noise distribution with variance $d \text{Id}.$ 

In Figure \ref{fig:dilation} we plot the magntiude $\alpha_t$ of the noise in the interpolation. For our dilated VE interpolant, this is $\alpha_t=\sqrt{d}(1-\tau_t)$ where $\tau_t$ is given in \eqref{eq:t_dil:ve}. For the $X_t$ coming from the VE SDE, we get from \eqref{eq:ve:sde} that $\alpha_t=\sqrt{\sigma^2_{1-t} - \sigma^2_0}.$ We note that our time dilation makes $\alpha_t=\Theta_d(1)$ for $t\in[1/2,1]$ and a similar behavior is achieved using the noise magnitude from the VE SDE.

\subsection{Curie-Weiss distribution}
We let $\rho(\eta)=p\delta_{m}(\eta) + (1-p)\delta_{-m}(\eta)$ and let 
$$
\rho(a|\eta) = \frac{1}{Z}\prod_i^d e^{\beta \eta a_i}
$$
where $a_i\in \{\pm 1\}$ are $d$ Ising spins and $\beta$ is inverse temperature chosen such that $m=\tanh(\beta m).$ We define the Curie-Weiss (CW) distribution as $\rho(a) = \rho(a|\eta)\rho(\eta).$

For $a\sim \rho,$ its magnetization is defined as $(1/d)\sum_i a_i.$ We note that in the $d$ limit, this magnetization is the same (up to a factor of $m$) as the magnetization $r\cdot a/{d}$ with $a\sim \mu.$ Hence if we pick $r=(1,\cdots,1),$ both the GM and CW distributions are alike in that they consist of two modes with probability $p$ and $1-p.$ However, they differ in the details: in particular, one is $\{\pm 1\}^d$-valued while the other is $\mathbb{R}^d$-valued. 

The same proof technique from Proposition \ref{prp:vp} and \ref{prp:ve} can be used to show that for the CW distribution without time dilating, the VP interpolant can not capture the parameter $p$ and the VE interpolant can not capture the distribution of the spins. However, we show next that we can solve this problem by using the same dilation for the VE interpolant from equation \eqref{eq:t_dil:ve} that we used to capture $p$ and $\sigma^2$ for the GM distribution. (See the appendix, Theorem \ref{prp:char:cw}, for an analogous claim for the dilated VP interpolant.)

\begin{thm}[Dilated VE captures both features for CW]
    \label{thm:cw:ve}
    Let $X^{\Delta t}_t$ be obtained from the probability flow ODE associated with the dilated VE interpolant for the CW distribution discretized with a uniform grid with step size $\Delta t.$ Let $r=(1,\cdots,1)$ and 
    \begin{align*}
        M_\tau^{\kappa, \Delta \tau, d}=r\cdot X_\tau^{\kappa, \Delta \tau, d}/d
    \end{align*}
    and let $M_t = \lim_{\kappa\to\infty}\lim_{\Delta t\to 0}\lim_{d \to\infty} M^{\Delta t, d}_t.$\\
    \textbf{First phase}: For $t\in [0,\tfrac{1}{2}],$ we have that $M_t$ fulfills
    \begin{align*}
        \dot M_t = \frac{-M_t + m\tanh\left(mh+\frac{2tmM_t}{(1-2t)^2}\right)}{\tfrac{1}{2}-t}
    \end{align*}
    with $M_{t=0}\sim \mathcal{N}(0,1)$ and $h$ is such that $p=e^{mh}/(e^{mh}+e^{-mh}).$ In particular, 
    \begin{align*}
        M_{1/2} \sim p\delta_1+(1-p)\delta_{-1}.
    \end{align*}
    In addition, for $w\perp r,$ $|w|=1,$ we have for $t\in[0,1/2]$
    \begin{align*}
        \lim_{\kappa\to\infty}\lim_{\Delta t\to 0}\lim_{d \to\infty} \tfrac{1}{\sqrt{d}}w\cdot (X_t-(1-2t)X_{0}) = 0
    \end{align*}
    \textbf{Second phase}: For $t\in [\tfrac{1}{2}, 1],$ we have that $M_t=M_{t=1/2}$ remains constant. Moreover, for any coordinate $i$ we have that
    \begin{align*}
        X^i_t = \lim_{\Delta t\to 0}\lim_{d\to\infty}(X^{\Delta t, d}_t)^i
    \end{align*}
    satisfies the ODE for $t\in [1/2, 1)$
    \begin{align*}
        \dot X_t^i = \frac{-X_t^i+\tanh\left(\beta m + \frac{X^i_t}{\kappa^2(2-2t)^2}\right)}{1-t}
    \end{align*}
    with the initial condition $X^i_{1/2}=\kappa X^1_{0}+m\sgn(M_{1/2}).$ This equation implies that 
    \begin{align*}
        \lim_{\kappa\to\infty} X_1^i \sim \left(\tfrac{1+m}{2}\right)\delta_1+\left(\tfrac{1-m}{2}\right)\delta_{-1}.
    \end{align*}
\end{thm}

We note that, after taking the appropiate limits, the first phase for GM and CW are identical (up to the factor of $m$, which would appear in the ODEs for GM if we had taken $r$ such that $|r|=m\sqrt{d}.$) This shows that in the first phase, the low-level differences between the GM and the CW model are not seen. It is only in the second phase that the probability flow ODE specializes to capture either the GM or CW model.

\section{Experiments}
\subsection{Numerical simulations for GM and CW models}
To confirm our results numerically, we run a discretized version of the probability flow ODE from equation \eqref{eq:ode} associated with the dilated VE interpolant. We work with dimension $d=10^6$ and discretization step $\Delta t=0.01.$ Note the number of steps is significantly smaller than $\sqrt{d}.$ In Figure \ref{fig:gm:dve}, we plot the magnetization $M_t=r\cdot X_t/d$ for different realizations of $X_t$. The plot confirms that $M_t$ gets determined in the first phase $t\in [0,1/2]$ and remains fixed for the second phase $t\in[1/2, 1].$ We also plot in Figure \ref{fig:gm:dve} the coordinates of a single realization $X_t$ for the probability flow ODE associated with the dilated VE interpolant for both the GM and CW distribution, with the same realization of initial condition $X_{t=0}$. We see that at $t=1/2$ the distributions look alike, but for $t$ close to $1$ they specialize to get samples from either the GM or CW distributions.
\begin{figure}
    \centering
    \includegraphics[width=.75\linewidth]{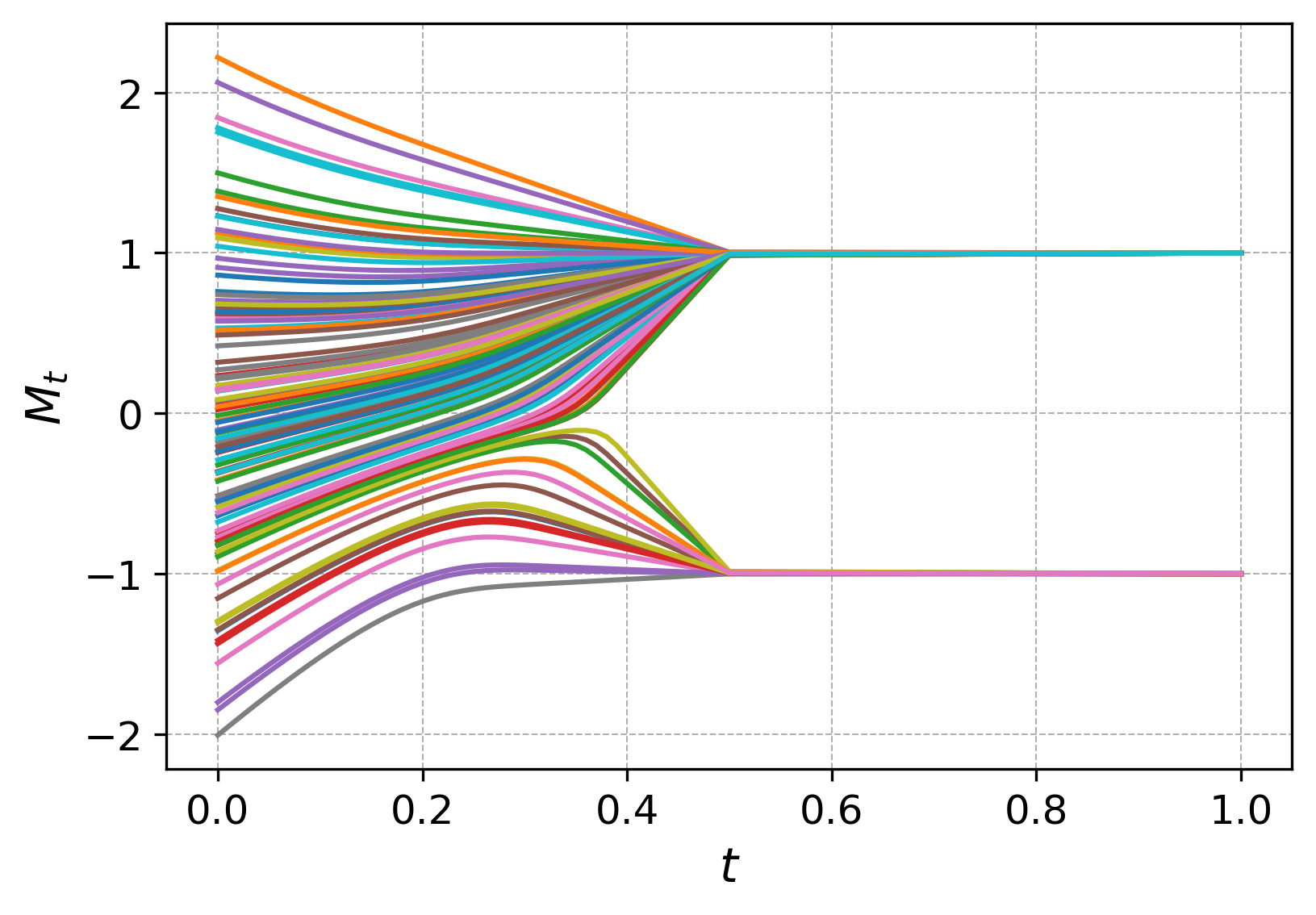}
    \includegraphics[width=.75\linewidth]{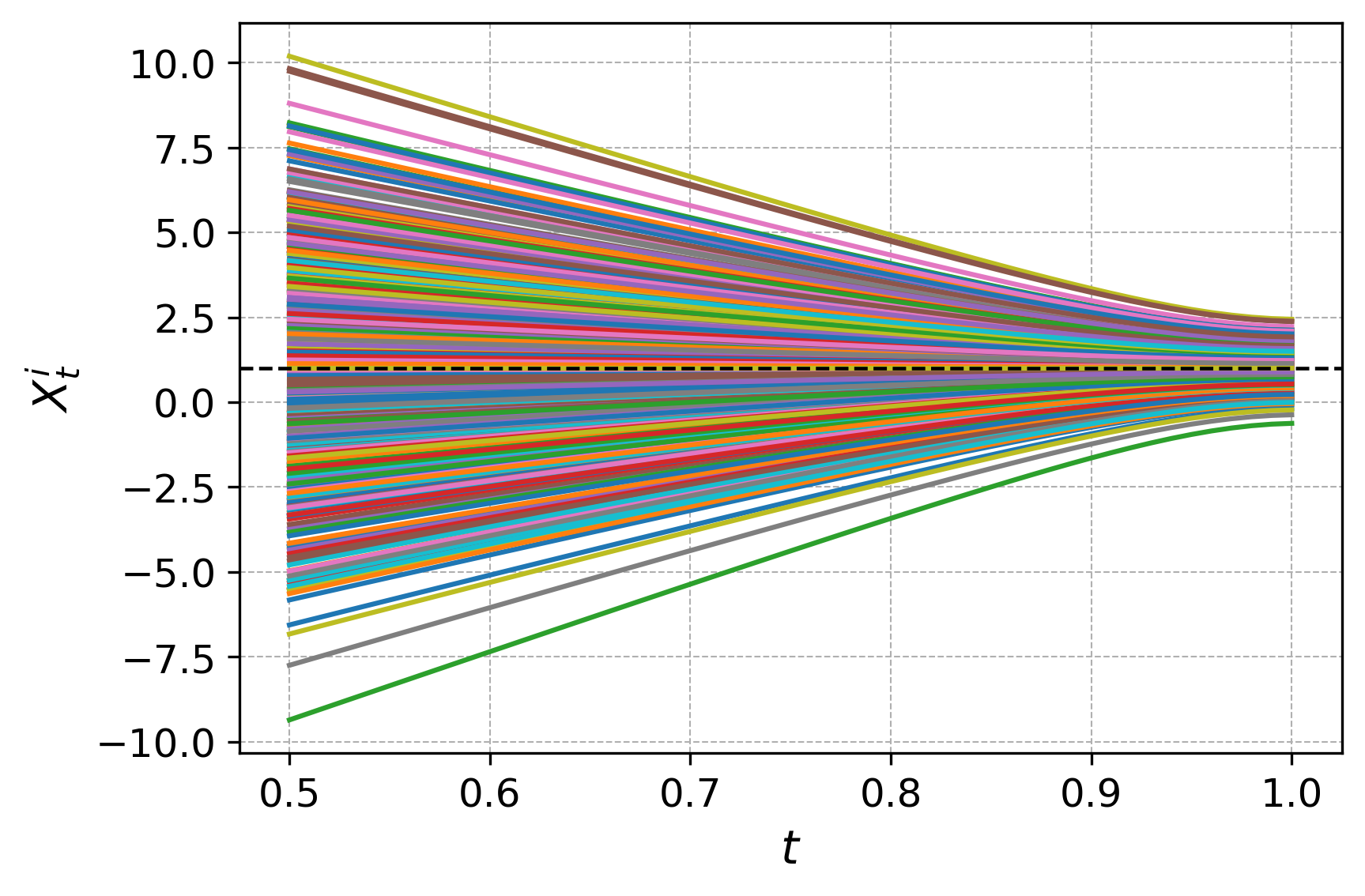}
    \includegraphics[width=.75\linewidth]{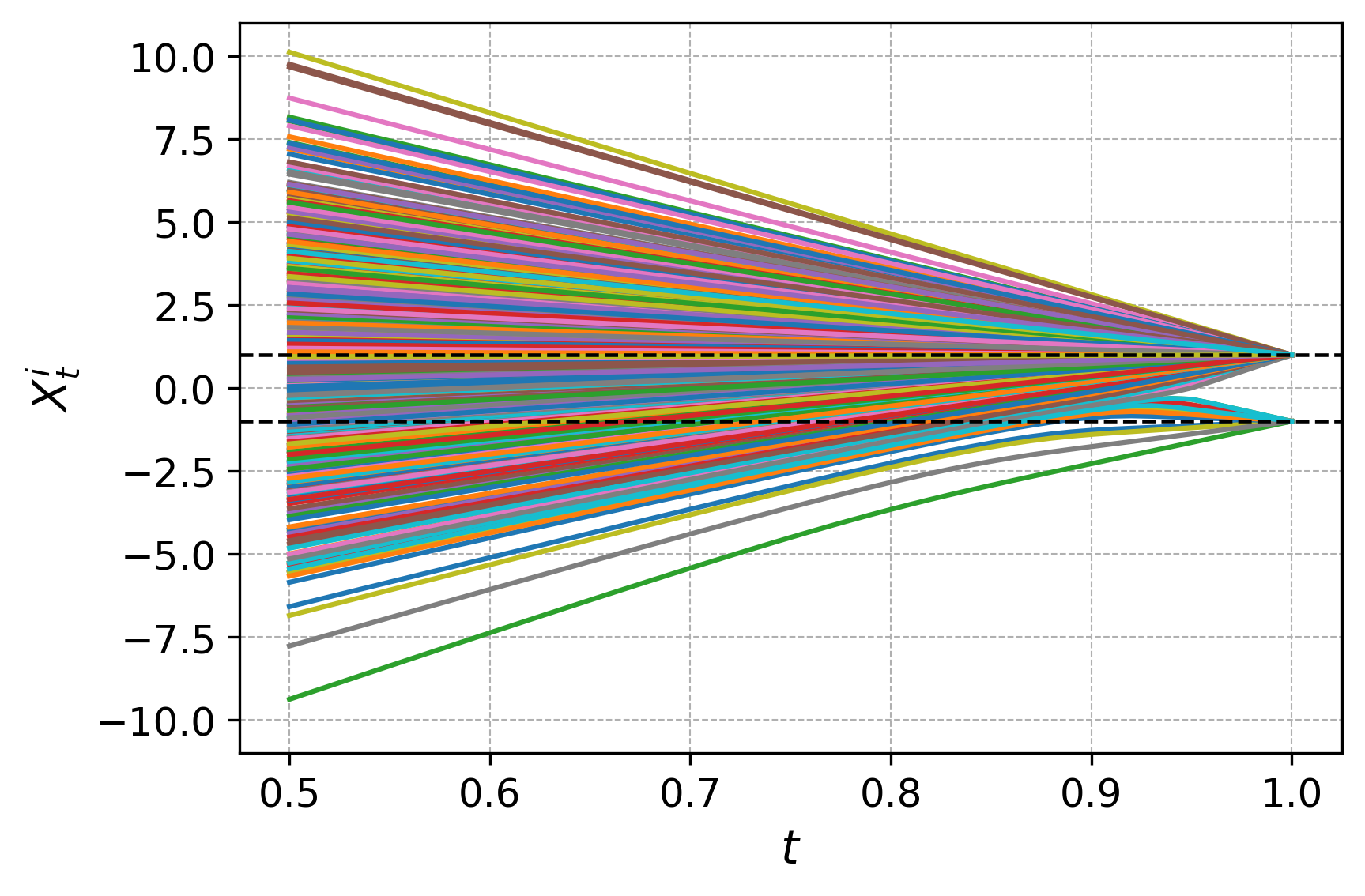}
    \caption{We run $100$ realizations $(X^{(j)}_t)_{j=1}^{100}$ of the probability flow ODE \eqref{eq:ode} associated with the dilated VE interpolant for the GM distribution, uniformly discretized with step size $d=10^6, \Delta t=0.01, \kappa=3,$ $\sigma^2=1/4,$ and $p=0.8.$ For each realization, we plot in the \textbf{top panel} $M^{(j)}_t=r\cdot X_t^{(j)}/d.$ We then take a single realization $X^{(1)}_t$ and plot in the \textbf{middle panel} the trajectory of the coordinates $(X^{(1)}_t)^i$ for $i=1, \cdots, 500$ in the second phase $t\in[1/2, 1].$ We do not plot $t\in[0,1/2]$ since $X^{(1)}_t$ is of order $\sqrt{d}$ there and would clutter the plot. For $t$ close to $1,$ the trajectories converge to a Gaussian centered at $1$ indicated by the dashed line. We also run the probability flow ODE associated with the dilated VE interpolant generating samples from the CW distribution with $d,\Delta t, \kappa$ as for the GM distribution and $\beta=2$. We then plot in the \textbf{bottom panel} the trajectories of the first $500$ coordinates of one realization, with dashed lines at $\pm 1.$}
    \label{fig:gm:dve}
\end{figure}

\subsection{VP and VE on real data: CelebA}
We showed earlier that, without time dilating, the high-level feature of the GM distribution given by the parameter $p$ is captured by the VE but not the VP interpolant, whereas the low-level feature given by $\sigma^2$ is captured by the VP but not the VE interpolant. The theoretical analysis we gave is for the GM distribution under the probability flow ODE. However, the following experiment shows that this behavior is still present when working with real image distributions using the VP and VE SDE from \cite{song2021scorebasedgenerativemodelingstochastic}.

We use the dataset CelebA-HQ from \cite{karras2018progressivegrowinggansimproved} consisting of $30,000$ images of faces from celebrities. We use models pretrained on this dataset for the VP and VE SDEs from \cite{song2021scorebasedgenerativemodelingstochastic}. Then, we generate images with the VP and VE SDEs and measure how well they reproduce the high- and low-level features of the dataset. 

For the low-level feature, we use a neural network from the Deepface library \cite{serengil2024lightface} that detects whether there is a face in the generated image. For the high-level feature, we use another neural network that predicts the race of the person, and calculate the KL divergence between the race distribution of the original dataset and the race distribution of a set of images generated by the VP or VE SDEs. (See the appendix for details.)

For a given number of discretization steps, we generate $7,500$ images with the VP and the VE SDEs using a uniform grid, and then measure the low- and high-level features as described before. The results are shown in Figure \ref{fig:celeba}. We observe that increasing the number of discretization steps for the VP SDE makes it better at capturing high-level features but does not improve the low-level features. Conversely, taking more discretization steps for the VE SDE improves how well it reproduces low-level features, but does not improve the high-level features. This is further evidence that the VP SDE solves the low-level feature ``by default'' whereas the VE SDE solves the high-level feature ``by default''.

\begin{figure}
    \centering
    \includegraphics[width=1\linewidth]{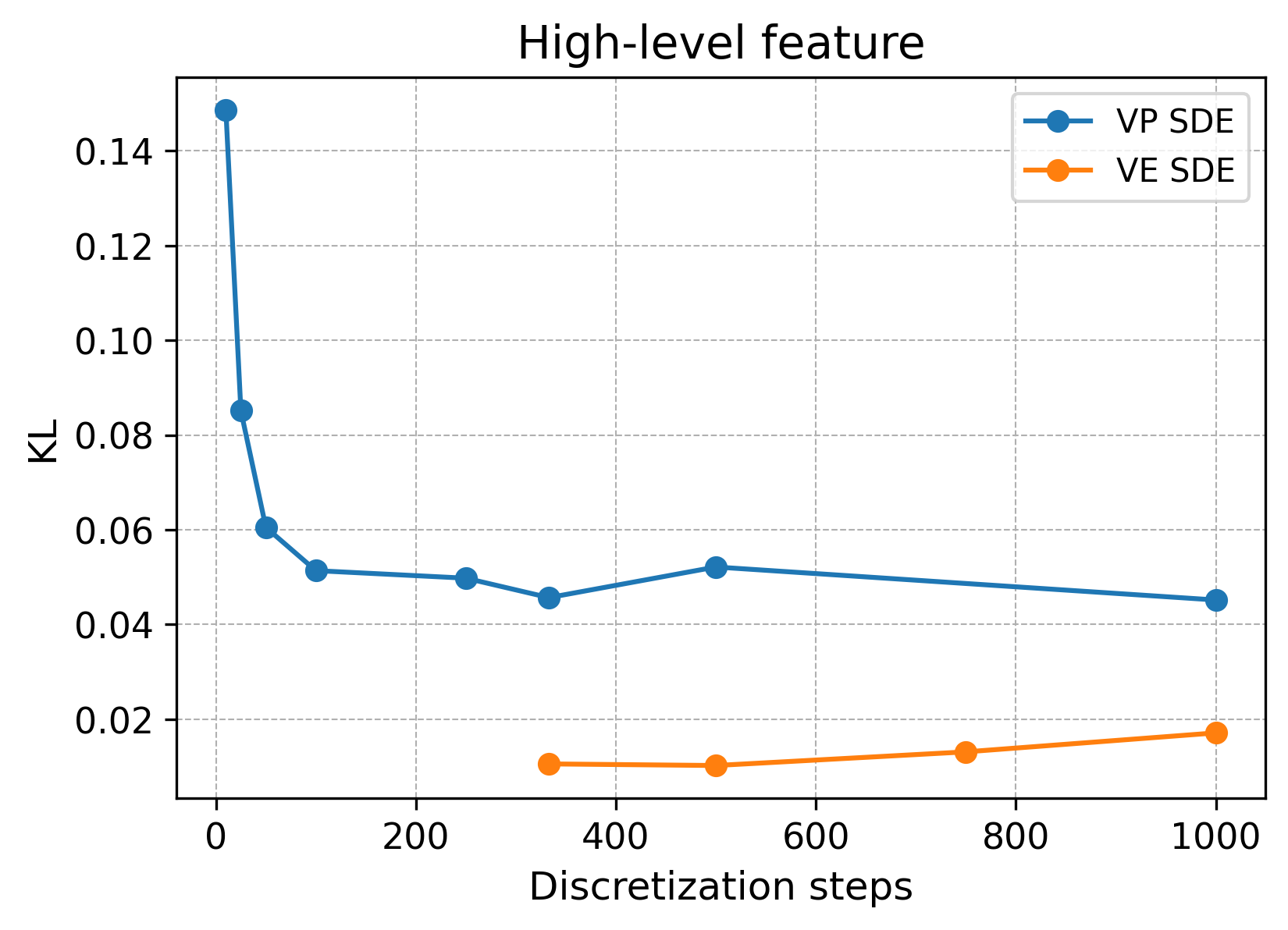}
    \includegraphics[width=1\linewidth]{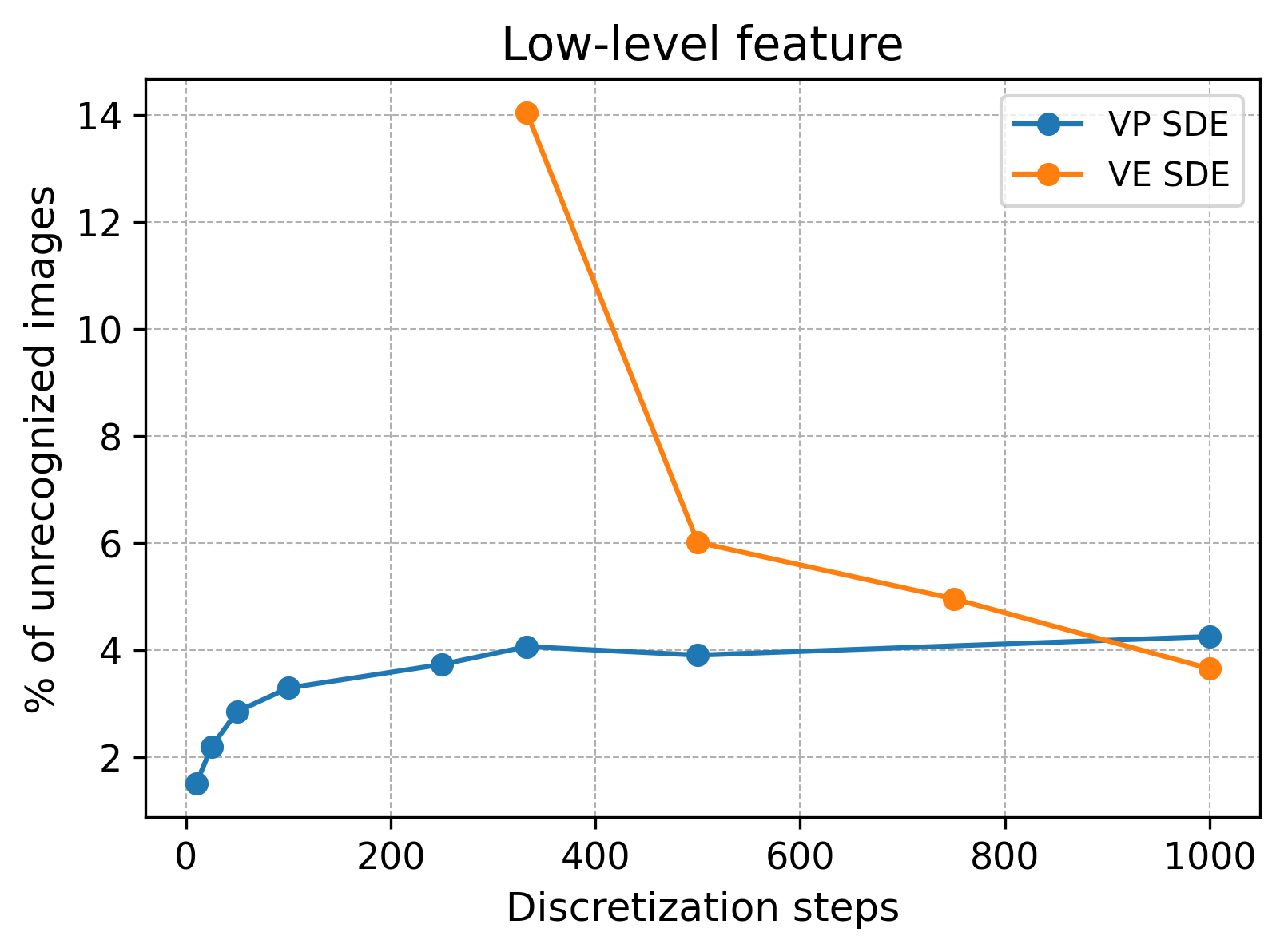}
    \caption{We plot, for different number of discretization steps and for the VP/VE SDEs, the KL divergence between the race distribution of a set of $7,500$ generated images and the race distribution of the original dataset in the \textbf{top panel} (high-level feature). In the \textbf{bottom panel} (low-level feature), we plot the percentage of the generated images that are classified as not containing a face.}
    \label{fig:celeba}
\end{figure}

\bibliographystyle{apalike}
\bibliography{biblio}

\onecolumn
\appendix

\section{General results}
We will need the following results that follow immediately from  \cite{albergo2023stochasticinterpolantsunifyingframework} (Appendix A)
\begin{lem}
\label{lem:gen:gen} Let $z\sim \mathcal{N}(0, \text{Id}_{d})$ and $a\sim p\mathcal{N}(r,\sigma^2\text{Id})+(1-p)\mathcal{N}(-r,\sigma^2\text{Id})$. Then the law of the interpolant $I_{\tau}=c\alpha_{\tau}z+\beta_{\tau}a$ coincides with the law of the solution of the probability flow ODE
\begin{equation}
\begin{aligned}\dot{X}_{\tau} & =\frac{c^2\alpha_{\tau}\dot{\alpha}_{\tau}+\sigma^2\beta_{\tau}\dot{\beta}_{\tau}}{c^2\alpha_{\tau}^{2}+\sigma^2\beta_{\tau}^{2}}X_{\tau} 
+ \frac{c^2\alpha_{\tau}(\alpha_{\tau}\dot{\beta}_{\tau}-\dot{\alpha}_{\tau}\beta_{\tau})}{c^2\alpha_{\tau}^{2}+\sigma^2\beta_{\tau}^{2}} r\tanh\left(h+\frac{\beta_{\tau} r\cdot X_{\tau}}{c^2\alpha_{\tau}^{2}+\sigma^2\beta_{\tau}^{2}}\right),\quad X_{0}\sim{\mathcal{N}}(0,c^2\text{Id}_{d})\end{aligned}
\label{eq:b:gmm:2m}
\end{equation}
where $h$ is such that $e^{h}/(e^{h}+e^{-h})=p.$
\end{lem} 

\begin{lem}
    \label{lem:g:to:g}
    Let $z\sim \mathcal{N}(0,1)$ and $a\sim \mathcal{N}(0,\sigma^2).$ Then the law of the interpolant $I_{\tau}=\alpha_{\tau}z+\beta_{\tau}a$ coincides with the law of the solution of the probability flow ODE
    \begin{equation}
        \dot{X}_{\tau} =\frac{\alpha_{\tau}\dot{\alpha}_{\tau}+\sigma^2\beta_{\tau}\dot{\beta}_{\tau}}{\alpha_{\tau}^{2}+\sigma^2\beta_{\tau}^{2}}X_{\tau} 
        ,\quad X_{0}\sim{\mathcal{N}}(0,1).
    \end{equation}
\end{lem}

\begin{lem}
    \label{lem:g:to:sgmm}
    Let $z\sim \mathcal{N}(0,1)$ and $a\sim p\mathcal{N}(m\kappa,1) + (1-p)\mathcal{N}(-m\kappa,1).$ Then the law of the interpolant $I_{\tau}=\alpha_{\tau}z+\beta_{\tau}a$ coincides with the law of the solution of the probability flow ODE
    \begin{equation}
        \dot{X}_{\tau} =\frac{\alpha_{\tau}\dot{\alpha}_{\tau}+\sigma^2\beta_{\tau}\dot{\beta}_{\tau}}{\alpha_{\tau}^{2}+\sigma^2\beta_{\tau}^{2}}X_{\tau} + \frac{\alpha_{\tau}(\alpha_{\tau}\dot{\beta}_{\tau}-\dot{\alpha}_{\tau}\beta_{\tau})}{\alpha_{\tau}^{2}+\sigma^2\beta_{\tau}^{2}} \kappa mr\tanh\left(mh+\frac{\beta_{\tau} \kappa mr\cdot X_{\tau}}{\alpha_{\tau}^{2}+\sigma^2\beta_{\tau}^{2}}\right)
        ,\quad X_{0}\sim{\mathcal{N}}(0,1).
    \end{equation}
    where $h$ is such that $e^{mh}/(e^{mh}+e^{-mh})=p.$
\end{lem}

\begin{lem}
    \label{lem:g:to:sgmm2}
    Let $z\sim \mathcal{N}(0,\kappa^2)$ and $a\sim \left(\tfrac{1+m}{2}\right)\delta_1 + \left(\tfrac{1-m}{2}\right)\delta_{-1}.$ Then the law of the interpolant $I_{\tau}=(1-\tau)z+a$ coincides with the law of the solution of the probability flow ODE
    \begin{equation}
        \dot{X}_{\tau} = \frac{-X_\tau + \tanh\left(\beta m + \frac{X_\tau}{\kappa^2(1-\tau)^2}\right)}{1-\tau}
        ,\quad X_{0}\sim{\mathcal{N}}(0,\kappa^2)
    \end{equation}
    where $m$ and $\beta$ fulfill $\tanh(\beta m) = m.$
\end{lem}

\section{Non-dilated interpolants fail at capturing either $p$ or $\sigma^2$ for the GM}
In this section we prove Proposition \ref{prp:vp} and \ref{prp:ve} from the main text, showing that without time-dilation, the VP and VE interpolant fail at either capturing $p$ or $\sigma^2.$
\begin{proof}[Proof of Proposition \ref{prp:vp}]
    \textit{(VP does not capture $p.$)} Consider the variance preserving interpolant $I_\tau = \alpha_\tau z+\beta_\tau a$ with $z\sim \mathcal{N}(0,\text{Id})$ and $a\sim \mu.$ Let $X_\tau$ be the solution of the probability flow ODE associated with $I_\tau$ given by Lemma \ref{lem:gen:gen}. If we let $M_\tau=r\cdot X_\tau/{d},$ we have for $\tau \in[0,1]$
    \begin{equation}
    \dot M_\tau= \frac{\alpha_{\tau}\dot{\alpha}_{\tau}+\sigma^2\beta_{\tau}\dot{\beta}_{\tau}}{\alpha_{\tau}^{2}+\sigma^2\beta_{\tau}^{2}}M_{\tau} 
    + \frac{\alpha_{\tau}(\alpha_{\tau}\dot{\beta}_{\tau}-\dot{\alpha}_{\tau}\beta_{\tau})}{\alpha_{\tau}^{2}+\sigma^2\beta_{\tau}^{2}} \tanh\left(h+\frac{\beta_{\tau}dM_{\tau}}{\alpha_{\tau}^{2}+\sigma^2\beta_{\tau}^{2}}\right)
    \label{eq:b:gmm:2m}
    \end{equation}
    with $M_{0}\sim{\mathcal{N}}(0,1/d).$ 
    
    For $\tau>0$, we get that for $d$ large, since $dM_\tau =\Omega(\sqrt{d})$
    \begin{align}
        \label{eq:mu:22}  
        \dot M_{\tau}= \frac{\alpha_{\tau}\dot{\alpha}_{\tau}+\sigma^2\beta_{\tau}\dot{\beta}_{\tau}}{\alpha_{\tau}^{2}+\sigma^2\beta_{\tau}^{2}}M_{\tau} 
        + \frac{\alpha_{\tau}(\alpha_{\tau}\dot{\beta}_{\tau}-\dot{\alpha}_{\tau}\beta_{\tau})}{\alpha_{\tau}^{2}+\sigma^2\beta_{\tau}^{2}} \sgn(M_{\tau}),
    \end{align}
    Fix $\tau_0=\Theta_d(1)$ positive. If we run this equation from $\tau=\tau_0$ til $\tau=1,$ the sign of $M_{\tau}$ will be preserved. This follows because since $\alpha_\tau(\alpha_{\tau}\dot{\beta}_{\tau}-\dot{\alpha}_{\tau}\beta_{\tau}) > 0$ for $\tau\in [0,1)$ means that whenever $M_{\tau}=o_d(1)$ the second term in the RHS of \eqref{eq:mu:22} will dominate implying that $\sgn(\dot M_{\tau}) = \sgn(M_{\tau}).$
    
    For $\tau=0,$ we have
    \begin{align}
        \dot{X}_{0} & = \dot \alpha_0 X_0 + \dot\beta_0 r\tanh(h).
    \end{align}
    If we integrate this ODE with a step of size $\epsilon = \Theta_d(1),$ we get 
    $$X_\epsilon = \epsilon (\dot \alpha_0 X_0 + \dot\beta_0 r\tanh(h))+X_0$$
    This means that $M_\epsilon = \dot\beta_0\epsilon  \tanh(h)+O_d(1/\sqrt{d}).$ Since after this step the sign of $M_{\tau}$ will be preserved, we have that as $d\to\infty$
    \begin{itemize}
        \item $p>1/2\implies h>0\implies$ all samples will go to the $+r$ mode.
        \item $p=1/2\implies h=0\implies$ half of the samples will go to $+r$ and half to $-r.$
        \item $p<1/2\implies h<0\implies$ all samples will go to the $-r$ mode.
    \end{itemize}
    
    \textit{(VP captures $\sigma^2$)} Let $X^\perp_\tau = X^{\Delta t}_\tau - M^{\Delta t, d}_\tau r.$ We note the problem is symmetric in the orthogonal complement of $r,$ so we expect $X^\perp_\tau$ to be the right object to look at. We have again from Lemma \ref{lem:gen:gen} that 
    \begin{align}
        \label{eq:x_2}
        \dot X^\perp_\tau= \frac{\alpha_{\tau}\dot{\alpha}_{\tau}+\sigma^2\beta_{\tau}\dot{\beta}_{\tau}}{\alpha_{\tau}^{2}+\sigma^2\beta_{\tau}^{2}}X^\perp_{\tau},\quad X^\perp_0\sim \mathcal{N}(0,\text{Id}_{d-1})
    \end{align}
    We note that for any $d$ and discretization step size $\Delta \tau,$ we have that $X^\tau_{k\Delta \tau}$ will remain Gaussian. We then have 
    \begin{align*}
        X^{\perp}_\tau \sim \mathcal{N}(0,\left( \sigma^{\Delta \tau}_\tau\right)^2\text{Id}_{d-1}).
    \end{align*}
    To determine $\sigma^{\Delta \tau}_\tau,$ we look at one coordinate $i\in \{1,\cdots, d-1\}$ and note that the resulting velocity field is independent of $d$. Also, Lemma \ref{lem:g:to:g} shows that this is the $1$-dimensional velocity field corresponding to the interpolant $I_\tau=\alpha_\tau z + \beta_\tau a$ that transports $z\sim\mathcal{N}(0, 1)$ to $a\sim {\mathcal{N}}(0, \sigma^2)$ so we are done.
\end{proof}

\begin{proof}[Proof of Proposition \ref{prp:ve}]
    \textit{(VE captures $p.$)}
    Let $I_\tau$ be the variance exploding interpolant $I_\tau = \sqrt{d}\alpha_\tau z + \beta_\tau a$ with $z\sim \mathcal{N}(0,\text{Id})$ and $a\sim \mu.$ Let $X_\tau$ be the solution of the probability flow ODE associated with $I_\tau$ given by Lemma \ref{lem:gen:gen}. 
    For $\tau \in [0,1],$ we have
    \begin{equation}
        \dot{M}_{\tau} =\frac{d\alpha_{\tau}\dot{\alpha}_\tau+\sigma^2\beta_\tau\dot{\beta}_\tau}{d\alpha_\tau^{2}+\sigma^2\beta_\tau^{2}}M_\tau 
        + \frac{d\alpha_\tau(\alpha_\tau\dot{\beta}_\tau-\dot{\alpha}_\tau\beta_\tau)}{d\alpha_\tau^{2}+\sigma^2\beta_\tau^{2}}  r\tanh\left(h+\frac{\beta_\tau dM_\tau}{d\alpha_\tau^{2}+\sigma^2\beta_\tau^{2}}\right),\quad M_{0}\sim{\mathcal{N}}(0,1).
        \label{eq:mag:ve}
    \end{equation}
    Taking the $d\to\infty$ limit of this equation gives for $t\in[0,1)$
    \begin{equation}
        \dot{M}_\tau = \frac{\dot\alpha_\tau}{\alpha_\tau}M_\tau+ \frac{\alpha_\tau\dot\beta_\tau-\dot\alpha_\tau\beta_\tau}{\alpha_\tau}\tanh\left(h+\frac{\beta_\tau M_\tau}{\alpha^2_\tau}\right).
        \label{eq:final:m_t2}
    \end{equation}
    Hence we get a well-defined equation for the magnetization. Define the speciation time $\tau_s$ as the time in the generative process after which the mode of the samples is determined. Since $M_t$ determines the mode of the samples, and we obtained a $d$-independent limiting equation for $M_t,$ we get that $\tau_s \in (0,1)$ as $d$ to $\infty.$ Moreover, from Lemma \ref{lem:gen:gen} we know that the ODE for $M_t$ from equation \eqref{eq:final:m_t2} corresponds to the 1-dimensional velocity field that transports ${\mathcal{N}}(0,1)$ to $p{\mathcal{N}}(1,0) + (1-p){\mathcal{N}}(-1, 0).$
    
\textit{(VE does not capture $\sigma^2$.)}
    We have 
    \begin{equation}
        \dot{X}^\perp_\tau =\frac{d\alpha_\tau\dot{\alpha}_\tau+\sigma^2\beta_\tau\dot{\beta}_\tau}{d\alpha_\tau^{2}+\sigma^2\beta_\tau^{2}}X^\perp_\tau,\quad X_{0}\sim{\mathcal{N}}(0,d\text{Id}_{d-1}).
        \label{eq:mag:ve}
    \end{equation}
    As in the proof of Proposition 1, we see that $X^\perp_\tau$ is Gaussian for any $d$ and $\Delta t.$ Let $w_1,\cdots,w_{d-1}$ be an orthonormal basis of the complement of $r.$ Then for $i\in \{1,\cdots, d-1\}$ with $\nu^i_t = w_i\cdot X_t / \sqrt{d}$ we have
    \begin{equation}
        \dot \nu^i_t = \frac{d\alpha_\tau\dot{\alpha}_\tau+\sigma^2\beta_\tau\dot{\beta}_\tau}{d\alpha_\tau^{2}+\sigma^2\beta_\tau^{2}}\nu_t^i,\quad \nu^i_t \sim{\mathcal{N}}(0,1).
    \end{equation}
    In the $d$ limit, we get 
    \begin{equation}
        \dot \nu^i_t = \frac{\dot{\alpha}_\tau}{\alpha_\tau}\nu_t^i,\quad \nu^i_t \sim{\mathcal{N}}(0,1).
    \end{equation}
    From Lemma \ref{lem:g:to:g} we know that this ODE transports $\mathcal{N}(0,1)$ to $\delta_0.$\\
\end{proof}

\section{Speciation time for the VP interpolant}
In this section, we state and prove Proposition \ref{prp:spec:vp}, showing that without time dilating, the VP interpolant has a speciation time that goes to zero as $d$ goes to infinity.

\begin{prop}
    \label{prp:spec:vp}
    Consider the variance preserving interpolant $I_\tau=\alpha_\tau z + \beta_\tau a$ where $z\sim \mathcal{N}(0,\text{Id})$ and $a \sim p\mathcal{N}(r,\sigma^2\text{Id})+(1-p)\mathcal{N}(-r,\sigma^2\text{Id})$ and let $X_\tau$ be its associated probability flow ODE from equation (1) in the main text. Then, the speciation time $\tau_s$ (i.e. the time in the generative process after which the mode of the samples is determined) goes to zero as $d$ goes to infinity. In particular, the interpolant $(1-\tau)z+\tau a$ has speciation time $1/\sqrt{d}.$
\end{prop}
\begin{proof}
    We have from Lemma \ref{lem:gen:gen} that $X_\tau$ fulfills the ODE
    \begin{equation}
        \begin{aligned}\dot{X}_{\tau} & =\frac{\alpha_{\tau}\dot{\alpha}_{\tau}+\sigma^2\beta_{\tau}\dot{\beta}_{\tau}}{\alpha_{\tau}^{2}+\sigma^2\beta_{\tau}^{2}}X_{\tau} 
        + \frac{\alpha_{\tau}(\alpha_{\tau}\dot{\beta}_{\tau}-\dot{\alpha}_{\tau}\beta_{\tau})}{\alpha_{\tau}^{2}+\sigma^2\beta_{\tau}^{2}} r\tanh\left(h+\frac{\beta_{\tau} r\cdot X_{\tau}}{\alpha_{\tau}^{2}+\sigma^2\beta_{\tau}^{2}}\right),\quad X_{0}\sim{\mathcal{N}}(0,\text{Id}_{d}).
        \label{eq:x:ode:prp}
        \end{aligned}
    \end{equation}
    Let $\mu_\tau=r\cdot X_\tau/\sqrt{d}.$ We then have      
    \begin{equation}
    \mu_{\tau}\stackrel{d}{=}\alpha_\tau Z+\sqrt{d}\beta_\tau m\label{eq:mag:van:1}
    \end{equation}
    where $Z\sim \mathcal{N}(0,1)$ and $m:=r\cdot a/d =\Theta_{d}(1).$ Let us calculate $\tau_0,$ the time where the terms $\alpha_\tau Z$ and $\sqrt{d}\beta_\tau m$ are of the same order. Since $Z$ and $m$ are $\Theta_d(1),$ we are interested in finding $\tau$ such that $\alpha_\tau \approx \sqrt{d}\beta_\tau$, or equivalently $\frac{1}{\sqrt{d}} \approx \frac{\beta_\tau}{\alpha_\tau}.$ Since ${\beta_0/\alpha_0}=0,$ $(\beta_\tau/\alpha_\tau)'>0,$ and $\alpha_\tau,\beta_\tau$ are independent of $d,$ we get 
    \begin{align}
        \tau_0=o_d(1).
    \end{align}
    We have for $\tau \ll \tau_0$ 
    \begin{equation}
    \sqrt{d}\beta_\tau m\ll\alpha_\tau Z,\label{eq:hat_m_t:dominates:tm}
    \end{equation}
    and for $\tau \gg\tau_0,$ 
    \begin{equation}
    \sqrt{d}\beta_\tau m\gg\alpha_\tau Z.\label{eq:tm:dominates:hat_m_t}
    \end{equation}
    This means there is a transition in what term dominates in $\mu_{\tau}$ at $\tau=\tau_0.$ This will imply, as formalized below, that each sample will \textit{speciate} to one of the two modes for $\tau \approx\tau_0,$ and it will remain in that mode for $\tau\gg\tau_0.$

    From equation \eqref{eq:x:ode:prp} we get
    \begin{equation}
        \begin{aligned}\dot{\mu}_{\tau} & =\frac{\alpha_{\tau}\dot{\alpha}_{\tau}+\sigma^2\beta_{\tau}\dot{\beta}_{\tau}}{\alpha_{\tau}^{2}+\sigma^2\beta_{\tau}^{2}}\mu_{\tau} 
        + \frac{\alpha_{\tau}(\alpha_{\tau}\dot{\beta}_{\tau}-\dot{\alpha}_{\tau}\beta_{\tau})}{\alpha_{\tau}^{2}+\sigma^2\beta_{\tau}^{2}} \sqrt{d}\tanh\left(h+\frac{\beta_{\tau} \sqrt{d}\mu_\tau}{\alpha_{\tau}^{2}+\sigma^2\beta_{\tau}^{2}}\right),\quad \mu_{0}\sim{\mathcal{N}}(0,1).\end{aligned}
    \end{equation}
    We write $\mu_\tau$ in terms of a potential $\dot \mu_\tau=-\partial_\mu V_\tau(\mu_\tau)$ with $V_\tau$
    \begin{align}
        V_\tau(\mu) = -\frac{\alpha_{\tau}\dot{\alpha}_{\tau}+\sigma^2\beta_{\tau}\dot{\beta}_{\tau}}{2(\alpha_{\tau}^{2}+\sigma^2\beta_{\tau}^{2})} \mu^2 -\frac{\alpha_{\tau}(\alpha_{\tau}\dot{\beta}_{\tau}-\dot{\alpha}_{\tau}\beta_{\tau})}{\beta_\tau} \log\cosh\left(h+\frac{\beta_{\tau} \sqrt{d}\mu}{\alpha_{\tau}^{2}+\sigma^2\beta_{\tau}^{2}}\right)
        \label{eq:potential}
    \end{align}
    
    Take $\tau\ll\tau_0.$ We have by equation \eqref{eq:hat_m_t:dominates:tm} that $\frac{\beta_{\tau} \sqrt{d}\mu_\tau}{\alpha_{\tau}^{2}+\sigma^2\beta_{\tau}^{2}} \approx \frac{\beta_{\tau} \sqrt{d}\alpha_\tau Z}{\alpha_{\tau}^{2}+\sigma^2\beta_{\tau}^{2}} \ll \frac{\alpha^2_\tau Z}{\alpha_{\tau}^{2}+\sigma^2\beta_{\tau}^{2}} = O_d(1).$ This means we can Taylor expand the $\log\cosh$ term in ${V_\tau}$ to get 
    \begin{align}
        V_\tau(\mu) = -\frac{\alpha_{\tau}\dot{\alpha}_{\tau}+\sigma^2\beta_{\tau}\dot{\beta}_{\tau}}{2(\alpha_{\tau}^{2}+\sigma^2\beta_{\tau}^{2})} \mu^2 -\frac{\alpha_{\tau}(\alpha_{\tau}\dot{\beta}_{\tau}-\dot{\alpha}_{\tau}\beta_{\tau})}{\alpha_{\tau}^{2}+\sigma^2\beta_{\tau}^{2}}\sqrt{d} \tanh\left(h\right)\mu + C_\tau.
    \end{align}
    This is a quadratic well shifted away from the origin which will generate the asymmetry in the relative weights of the modes.

    Take $\tau\gg\tau_0.$ We have by equation \eqref{eq:tm:dominates:hat_m_t} that $\frac{\beta_{\tau} \sqrt{d}\mu_\tau}{\alpha_{\tau}^{2}+\sigma^2\beta_{\tau}^{2}} \approx\frac{\beta_{\tau}^2 dm}{\alpha_{\tau}^{2}+\sigma^2\beta_{\tau}^{2}} \gg 1$ since either $\beta_\tau > \alpha_\tau$ and then $\frac{\beta_{\tau}^2 dm}{\alpha_{\tau}^{2}+\sigma^2\beta_{\tau}^{2}} = \Theta(d)$ or $\beta_\tau \leq \alpha_\tau$ and then $\tau\gg\tau_0$ implies $\beta_\tau \sqrt{d} \gg \alpha_\tau$ meaning that $\frac{\beta_{\tau}^2 dm}{\alpha_{\tau}^{2}+\sigma^2\beta_{\tau}^{2}} \gg \frac{\alpha^2_\tau m}{\alpha_{\tau}^{2}+\sigma^2\beta_{\tau}^{2}} = \Theta_d(1).$ This means we can approximate \eqref{eq:potential} as follows
    \begin{align}
        V_\tau(\mu) = -\frac{\alpha_{\tau}\dot{\alpha}_{\tau}+\sigma^2\beta_{\tau}\dot{\beta}_{\tau}}{2(\alpha_{\tau}^{2}+\sigma^2\beta_{\tau}^{2})} \mu^2 -\frac{\alpha_{\tau}(\alpha_{\tau}\dot{\beta}_{\tau}-\dot{\alpha}_{\tau}\beta_{\tau})\sqrt{d}}{\alpha_{\tau}^{2}+\sigma^2\beta_{\tau}^{2}} |\mu|.
    \end{align}
    This is a symmetric double well structure. Under this potential, the mode that each sample will belong to is determined since the beginning. The relative asymmetry of the modes given by $h$ (which is a function of $p$) does not appear in this potential anymore.
    
    We conclude that the speciation time $\tau_s=\tau_0=o_d(1).$ In particular, if $\alpha_\tau=1-\tau$ and $\beta_\tau = \tau,$ we get that $\tau_s=\tau_0=1/\sqrt{d}.$
\end{proof}

\begin{prop}
    Let $X^{\Delta \tau}_\tau$ be obtained from the probability flow ODE from equation (1) in the main text associated with the VP interpolant $I_\tau=(1-\tau)z+\tau a$ where  $z\sim \mathcal{N}(0,\text{Id})$ and $a \sim p\mathcal{N}(r,\sigma^2\text{Id})+(1-p)\mathcal{N}(-r,\sigma^2\text{Id}).$ Consider running this ODE with a uniform grid with step size $\Delta \tau(d) = o(\sqrt{d}).$ Let $M_\tau^{\Delta \tau, d}=r\cdot X_\tau^{\Delta \tau}/d.$ Then
    \begin{align*}
        \lim_{d\to\infty} M^{\Delta \tau, d}_1 \stackrel{d}{=} \hat p \delta_1 + (1-\hat p) \delta_{-1}
    \end{align*}
    where
    \begin{align*}
        \hat p =
        \begin{cases}
            1 \qquad \text{ if } p>1/2\\
            1/2 \quad \text{ if } p=1/2\\
            0 \qquad \text{ if } p<1/2\
        \end{cases}
    \end{align*}
    \label{prp:spec2}
\end{prop}
\begin{proof}
    We proceed similarly to the proof of Proposition 1. Let $M_\tau=r\cdot X_\tau/{d},$ then we have for $\tau \in[0,1]$
    \begin{equation}
    \dot M_\tau= \frac{(1+\sigma^2)\tau-1}{(1-\tau)^{2}+\sigma^2{\tau}^{2}}M_{\tau} 
    + \frac{1-\tau}{(1-\tau)^{2}+\sigma^2{\tau}^{2}} \tanh\left(h+\frac{{\tau}dM_{\tau}}{(1-\tau)^{2}+\sigma^2{\tau}^{2}}\right)
    \label{eq:b:gmm:2m}
    \end{equation}
    with $M_{0}\sim{\mathcal{N}}(0,1/d).$ 
    
    For $\tau=k\Delta \tau$ with $k$ positive integer, we get that $\tau dM_\tau$ goes to infinity as $d$ grows. This means that for $d$ large
    \begin{align}
        \label{eq:mu:2}  
        \dot M_\tau= \frac{(1+\sigma^2)\tau-1}{(1-\tau)^{2}+\sigma^2{\tau}^{2}}M_{\tau} 
    + \frac{1-\tau}{(1-\tau)^{2}+\sigma^2{\tau}^{2}} \sgn(M_\tau).
    \end{align}
    Fix $\tau_0=k\Delta \tau$ with $k$ positive integer again. If we run this equation from $\tau=\tau_0$ til $\tau=1,$ the sign of $M_{\tau}$ will be preserved. This follows because whenever $M_{\tau}=o_d(1)$ the second term in the RHS of \eqref{eq:mu:2} will dominate implying that $\sgn(\dot M_{\tau}) = \sgn(M_{\tau}).$
    
    For $\tau=0,$ we have
    \begin{align}
        \dot{X}_{0} & = -X_0 + r\tanh(h).
    \end{align}
    If we integrate this ODE with step size $\Delta \tau$ we get 
    $$X_{\Delta \tau} = \Delta \tau (- X_0 + r\tanh(h))+X_0$$
    This means that $M_{\Delta \tau} = \Delta \tau \tanh(h)+O_d(1/\sqrt{d}).$ Since after this step the sign of $M_{\tau}$ will be preserved, we have that as $d\to\infty$
    \begin{itemize}
        \item $p>1/2\implies h>0\implies$ all samples will go to the $+r$ mode.
        \item $p=1/2\implies h=0\implies$ half of the samples will go to $+r$ and half to $-r.$
        \item $p<1/2\implies h<0\implies$ all samples will go to the $-r$ mode.
    \end{itemize}
\end{proof}

\section{Dilated interpolants capture $p$ and $\sigma^2$ for the GM}
In this section, we prove Theorems \ref{thm:dvpa} and \ref{prp:dve} from the main text which show that the dilated VP and VE interpolant can recover $p$ and $\sigma^2$ when time-dilated.

\begin{proof}[Proof of Theorem \ref{thm:dvpa}]
    Consider the dilated variance preserving interpolant $I^P_t = (1-\tau_t)z + \tau_t a$ where $z\sim \mathcal{N}(0,\text{Id}),$ $a\sim \mu,$ and $\tau_t$ is given in equation (4) in the main text. Plugging in $\alpha_t=1-\tau_t$ and $\beta_t=\tau_t$ into the velocity field given by Lemma \ref{lem:gen:gen} yields 
    \begin{align}
    \label{eq:gen:dil3}
    \dot X_t= \frac{-(1-\tau_t)\dot\tau_t+\sigma^2\tau_t\dot\tau_t}
    {(1-\tau_t)^{2}+\sigma^2\tau_t^2}X_{t} 
    + \frac{(1-\tau_t)\dot\tau_t}{(1-\tau_t)^{2}+\sigma^2\tau_t^2} r\tanh\left(h+\frac{\tau_{t} r\cdot X_{t}}{(1-\tau_t)^{2}+\sigma^2\tau_t^2}\right)    
    \end{align}
    \textbf{First phase.} For $t\in [0,1/2],$ we have $\tau_t = \frac{2 \kappa t}{\sqrt{d}}.$ Plugging in into equation \eqref{eq:gen:dil3} gives
    \begin{align}
        \dot{X}_{t} & = -\frac{2\kappa}{\sqrt{d}}X_t +\frac{2\kappa}{\sqrt{d}} r\tanh\left(h+2\kappa t \frac{r\cdot X_{t}}{\sqrt{d}}\right) + O\left(\frac{1}{d}\right).
    \end{align}
    We then have with $\mu_t=r\cdot X_t/\sqrt{d},$
    \begin{align}
        \label{eq:vp:1st:mu}
        \dot{\mu}_{t} & = 2\kappa\tanh\left(h+2\kappa t \mu_t\right) + O\left(\frac{1}{\sqrt{d}}\right).
    \end{align}
    Taking $d\to\infty$ yields the limiting ODE for the $\mu_t$ in the first phase. By reparameterizing time $t(s)=s/2$ with $t:[0,1]\to[0,1/2],$ we get from Lemma \ref{lem:g:to:sgmm} (with $m=1$) that this the $1$-dimensional velocity field associated to the interpolant $I_s=\sqrt{1-s^2}z+sa$ that transports $z\sim{\mathcal{N}}(0,1)$ at $t(s=0)=0$ to $a\sim p{\mathcal{N}}(\kappa,1) + (1-p){\mathcal{N}}(-\kappa, 1)$ at $t(s=1)=1/2.$

    Let $X^\perp_t = X^{\Delta t}_t - \frac{r\cdot X^{\Delta t}_t}{{d}}r.$ We have from equation \eqref{eq:gen:dil3}
    \begin{align}
        \label{eq:x_perp}
        \dot{X}^\perp_{t} & = \frac{-(1-\tau_t)\dot\tau_t+\sigma^2\tau_t\dot\tau_t}
    {(1-\tau_t)^{2}+\sigma^2\tau_t^2} X^\perp_t.
    \end{align}
    Since this is a linear ODE with initial condition Gaussian, we have 
    \begin{align}
        \dot{X}^\perp_{t} \sim \mathcal{N}\left(0, \left(\hat \sigma^{\Delta t, d}_t\right)^2\text{Id}_{d-1}\right).
    \end{align}
    for any $d$ and $\Delta t.$ Further, using equation \eqref{eq:x_perp} with $\tau_t=\frac{2\kappa t}{\sqrt{d}}$ gives $\dot{X}^\perp_{t} = O({1/\sqrt{d}})$ meaning that for $t\in[0,1/2]$
    \begin{align}
        \lim_{\Delta t\to 0}\lim_{d \to\infty} \sigma^{\Delta t, d}_t =1. 
    \end{align}
    \newline
    \textbf{Second phase.} For $t\in[1/2,1],$ we have $\tau_t = \left(1-\frac{\kappa}{\sqrt{d}}\right)(2t-1)+\frac{\kappa}{\sqrt{d}}.$ Using equation \eqref{eq:gen:dil} again gives
    \begin{align}
        \label{eq:gen:dil:2nd}
        \dot X_t= \frac{-(1-t)+\sigma^2(t-\tfrac{1}{2})}
        {(1-t)^2+\sigma^2(t-\tfrac{1}{2})^2}X_{t} 
        + \frac{(1-t) r\tanh\left(h+\frac{(2t-1) r\cdot X_{t}+\kappa \frac{r\cdot X_t}{\sqrt{d}}}{(2-2t)^2+\sigma^2(2t-1)^2}\right)  }{(1-t)^2+\sigma^2(t-\tfrac{1}{2})^2} + O\left(\frac{1}{d}\right).
    \end{align}
    
    Writing $\mu_t=\frac{r\cdot X_t}{\sqrt{d}},$ this implies
    \begin{align}
        \dot \mu_t= \frac{-(1-t)+\sigma^2(t-\tfrac{1}{2})}
        {(1-t)^2+\sigma^2(t-\tfrac{1}{2})^2}\mu_{t} 
        + \frac{\sqrt{d}(1-t)\tanh\left(h+\frac{(2t-1) \sqrt{d}\mu_t +\kappa  \mu_t}{(2-2t)^2+\sigma^2(2t-1)^2}\right)}{(1-t)^2+\sigma^2(t-\tfrac{1}{2})^2} + O\left(\frac{1}{\sqrt{d}}\right).
        \label{eq:ode:mu}
    \end{align}
    From equation \ref{eq:vp:1st:mu} of the first phase, we have that for finite $d$ and discretizing with a step size of $\Delta t,$ we get 
    \begin{align}
        \mu_{t=1/2} = \theta + O\left(\frac{1}{\sqrt{d}}\right) + o_{\Delta t}(1)
    \end{align}
    where $\theta \sim p\mathcal{N}(\kappa, 1) + (1-p)\mathcal{N}(-\kappa, 1)$ and the term $o_{\Delta t}(1)$ goes to zero as $\Delta t$ goes to zero independently of $d,$ since this error only comes from discretizing the $d$-independent ODE $\dot{\mu}_{t} = 2\kappa\tanh\left(h+2\kappa t \mu_t\right)$ with $\mu_{t=0}\sim \mathcal{N}(0,1).$

    At $t=1/2,$ the argument of the $\tanh$ is $h+\kappa\mu_{1/2}.$ Assume $\theta$ takes value on the $+\kappa$ mode. For $d$ large enough and $\Delta t$ small enough (independently of $d$) we have that $|\mu_{1/2} - \theta| < 1$. We also have that $h\ll \kappa \theta$ and hence $h\ll \kappa\mu_t,$ where both inequalities hold with probability going to $1$ as $\kappa$ goes to infinity. This means we can approximate the ODE for $\mu_t$ for $t=1/2$ as 
    \begin{align}
        \dot \mu_t= \frac{-(1-t)+\sigma^2(t-\tfrac{1}{2})}
        {(1-t)^2+\sigma^2(t-\tfrac{1}{2})^2}\mu_{t} 
        + \frac{\sqrt{d}(1-t)\sgn(\mu_t)}{(1-t)^2+\sigma^2(t-\tfrac{1}{2})^2} + O\left(\frac{1}{\sqrt{d}}\right).
        \label{eq:ode:mu2}
    \end{align}
    We note that this remains valid for $t>1/2$ since under the approximation we used in equation \eqref{eq:ode:mu2}, we have that $\mu_t$ is increasing. Indeed, whenever $\mu_t=o(\sqrt{d}),$ the second term in the RHS of \eqref{eq:ode:mu2} will dominate. If $b$ takes value on the $-\kappa$ mode instead, an analogous argument shows that \eqref{eq:ode:mu2} is also valid in that case.
    
    We then use this approximation in the ODEs for $X_t$ to get for $t\in(1/2, 1)$
    \begin{align}
        \label{eq:final:x_t}
        \dot X_t= \frac{-(1-t)+\sigma^2(t-\tfrac{1}{2})}
        {(1-t)^2+\sigma^2(t-\tfrac{1}{2})^2}X_{t} 
        + \frac{(1-t) r\sgn(M_t)}{(1-t)^2+\sigma^2(t-\tfrac{1}{2})^2} + O\left(\frac{1}{d}\right),
    \end{align}
    where $M_t=r\cdot X_t/d.$ This yields the limiting ODE for $M_t$ in the theorem statement. We recall that from the analysis of the first phase (after taking the limit first on $d\to\infty$ and then on $\Delta t\to 0$) we got 
    \begin{align}
        \mu_{t=1/2}\sim p\mathcal{N}(\kappa, 1) + (1-p)\mathcal{N}(-\kappa, 1).
    \end{align}
    We argued above that the sign of $\mu_t$ will be preserved for $t\in[1/2, 1]$ with probability going to $1$ as $\kappa$ tends to $\infty.$ This means that 
    \begin{align}
        M_1 = p^\kappa\delta_1 + (1-p^\kappa)\delta_{-1}
    \end{align}
    where $p^\kappa$ is such that $\lim_{\kappa\to \infty}p^\kappa=p.$ 
    
    Let $X^\perp = X^{\Delta t}_t - \frac{r\cdot X^{\Delta t}_t}{{d}}r$ and note that
    \begin{align}
        \label{eq:final:x_t2}
        \dot X^{\perp}_t= \frac{-(1-t)+\sigma^2(t-\tfrac{1}{2})}
        {(1-t)^2+\sigma^2(t-\tfrac{1}{2})^2}X^{\perp}_{t}.
    \end{align}
    Since this is a linear ODE from a Gaussian initial condition, we have
    \begin{align}
    X^\perp_t \sim \mathcal{N}\left(0, \left(\hat \sigma^{\Delta t, d}_t\right)^2\text{Id}_{d-1}\right).
    \end{align}
    Under the change of variables $t(s)=s/2+1/2,$ the equation \eqref{eq:final:x_t2} for $X_t^\perp$ becomes
    \begin{align}
        \label{eq:final:x_t3}
        \dot X^{\perp}_s= \frac{-(1-s)+\sigma^2s}
        {(1-s)^2+\sigma^2s^2}X^{\perp}_{s}.
    \end{align}
    By taking one coordinate $i\in\{1,\cdots, d-1\}$ of $X_s^\perp$ we get from Lemma \ref{lem:g:to:g} that this is the velocity field associated with the interpolant $I_s=(1-s)z+sa$ where $z\sim \mathcal{N}(0,1)$ is transported to $a\sim \mathcal{N}(0, \sigma^2).$ For fixed $s\in [0, 1],$ the interpolant $I_s$ has variance $(1-s)^2+\sigma^2s^2 = (2-2t)^2+\sigma^2(2t-1)^2$ as claimed.
\end{proof}

We now turn to the proof of Theorem \ref{prp:dve} proving that the dilated variance exploding interpolant yields correct estimation of $p$ and $\sigma^2.$

\begin{proof}[Proof of Theorem \ref{prp:dve}] 
    Consider the variance exploding interpolant $I^E_t=\sqrt{d}\sqrt{1-\tau_t}z+\tau_t a$ with the time dilation given by equation \eqref{eq:t_dil:ve} in the main text. Plugging in $\alpha_t=1-\tau_t,$ $\beta_t=\tau_t,$ and $c=\sqrt{d}$ into the velocity field given by Lemma \ref{lem:gen:gen} yields 
    \begin{align}
        \label{eq:x:ve}
        \dot X_t= \frac{-d(1-\tau_t)\dot\tau_t+\sigma^2\tau_t\dot\tau_t}
        {d(1-\tau_t)^{2}+\sigma^2\tau_t^2}X_{t} 
        + \frac{d(1-\tau_t)\dot\tau_t}{d(1-\tau_t)^{2}+\sigma^2\tau_t^2} r\tanh\left(h+\frac{\tau_{t}r\cdot X_{t}}{d(1-\tau_t)^{2}+\sigma^2\tau_t^2}\right)    
    \end{align}
    \begin{align}
        \label{eq:m:ve}
        \dot M_t= \frac{-d(1-\tau_t)\dot\tau_t+\sigma^2\tau_t\dot\tau_t}
        {d(1-\tau_t)^{2}+\sigma^2\tau_t^2}M_{t} 
        + \frac{d(1-\tau_t)\dot\tau_t}{d(1-\tau_t)^{2}+\sigma^2\tau_t^2} \tanh\left(h+\frac{\tau_{t} dM_{t}}{d(1-\tau_t)^{2}+\sigma^2\tau_t^2}\right)    
    \end{align}
    where $X_0\sim \mathcal{N}(0, d\text{Id})$ and $M_0\sim \mathcal{N}(0, 1).$\\
    \\
    \textbf{First phase.} We consider $t\in[0,1/2],$ where $\tau_t=(1-\kappa/\sqrt{d})2t$ gives
    \begin{equation}
        \dot{M}_{t} = \frac{-M_t + \tanh\left(h + \frac{2t M_t}{(1-2t)^2}\right)}{\tfrac{1}{2}-t} + O\left(\frac{1}{\sqrt{d}}\right)
    \end{equation} 
    We hence get a well-defined equation for the magnetization. In fact, by reparameterizing time $t(s)=s/2$ with $t:[0,1]\to[0,1/2],$ we get from Lemma \ref{lem:gen:gen} that this the $1$-dimensional velocity field that transports ${\mathcal{N}}(0,1)$ at $t=0$ to $p\delta_1 + (1-p)\delta_{-1}$ at $t=1/2.$
    
    We let $X^\perp = X^{\Delta t}_t - \frac{r\cdot X^{\Delta t}_t}{{d}}r$ and note that equation \eqref{eq:x:ve} gives 
    \begin{align}
        \label{eq:x:ve:perp}
        \dot X^\perp_t= \frac{-d(1-\tau_t)\dot\tau_t+\sigma^2\tau_t\dot\tau_t}
        {d(1-\tau_t)^{2}+\sigma^2\tau_t^2}X_{t}^\perp
    \end{align}
    Since this is a linear ODE with Gaussian initial condition, $X^\perp_t$ will be Gaussian for every $t,$ even for nonzero $\Delta t.$ Let us determine its covariance. We decompose $X^\perp_t$ as $X^\perp_t=\sqrt{d}X^1_t + X^0_t.$ Plugging in $\tau_t=(1-\kappa/\sqrt{d})2t$ into equation \eqref{eq:x:ve} gives
    \begin{align}
        \sqrt{d}\dot{X}^1_t + \dot{X}^0_t &= \frac{-d\left(1-2t+\tfrac{2\kappa t}{\sqrt{d}}\right)\left(1-\tfrac{\kappa}{\sqrt{d}}\right)2+\sigma^2\left(1-\tfrac{\kappa}{\sqrt{d}}\right)^24t}
        {\left(\sqrt{d}(1-2t)+2\kappa t\right)^{2}+\sigma^2\left(1-\tfrac{\kappa}{\sqrt{d}}\right)^24t^2}\left(\sqrt{d}{X}^1_t + {X}^0_t\right).
    \end{align}
    Taylor expanding the RHS in powers of $\sqrt{d}$ and matching terms of order $\sqrt{d}$ gives
    \begin{align}
        \dot{X}^1_t = \frac{-X^1_t}{\tfrac{1}{2}-t}
    \end{align}
    with $X_t^1 \sim \mathcal{N}(0,\text{Id}).$ This means that for $t\in[0,1/2]$ we have $X^1_t = X^1_{t=0} (1-2t).$ Matching terms of constant order 
    \begin{align}
        \dot{X}^0_t &= -\frac{1}{\tfrac{1}{2}-t}X^0_t + \frac{2\kappa}{(1-2t)^2}X^1_t\\
        &= \frac{-X^0_t+\kappa X^1_{t=0}}{\tfrac{1}{2}-t}.
    \end{align}
    From here we conclude that $X^0_{t=1/2}=\kappa X^1_{t=0}.$ We then have 
    \begin{align}
        X_t^{\perp}\sim \mathcal{N}\left(0, d\left(\hat \sigma^{\Delta t, d}_t\right)^2\text{Id}_{d-1}\right).
    \end{align}
    where $\lim_{\Delta t\to 0}\lim_{d \to\infty} \sigma^{\Delta t, d}_t= 1-2t.$
    \\
    \textbf{Second phase.} We now consider $t\in[1/2, 1].$ Using the definition of $\tau_t$, we get from equation \eqref{eq:x:ve} that for $d$ large,
    \begin{equation}
        \dot{X}_{t}  = -\frac{2\kappa^2(2-2t)}{\kappa^2(2-2t)^2+\sigma^2}X_t+\frac{2k^2(2-2t)}{\kappa^2(2-2t)^2+\sigma^2} r\sgn(M_t).
    \end{equation}
    In particular, 
    \begin{equation}
        \dot{M}_{t}  = -\frac{2\kappa^2(2-2t)}{\kappa^2(2-2t)^2+\sigma^2}M_t+\frac{2k^2(2-2t)}{\kappa^2(2-2t)^2+\sigma^2} \sgn(M_t).
    \end{equation} 
    Since $M_{t=1/2}$ is either $1$ or $-1,$ we see that it will remain constant in the second phase. On the other hand, we have
    \begin{equation}
        \dot{X}^\perp_{t}  = -\frac{2\kappa^2(2-2t)}{\kappa^2(2-2t)^2+\sigma^2}X^\perp_t.
    \end{equation}
    Solving explicitly gives for $t\in[1/2, 1]$
    \begin{align}
        {X}^\perp_{t} = {X}^\perp_{1/2} \sqrt{\frac{\kappa^2(2-2t)^2+\sigma^2}{\kappa^2+\sigma^2}}.
    \end{align}
    From the analysis of the first phase, we know that 
    \begin{align}
        X^\perp_{1/2} \sim \mathcal{N}\left(0, \left(\hat \sigma^{\Delta t, d}_t\right)^2\text{Id}_{d-1}\right).
    \end{align}
    where $\lim_{\Delta t\to 0}\lim_{d \to\infty} \hat \sigma^{\Delta t, d}_t = \kappa.$ We hence get for $t\in[1/2, 1]$ that 
    \begin{align}
        X^\perp_{t} \sim \mathcal{N}\left(0, \left(\hat \sigma^{\Delta t, d}_t\right)^2\text{Id}_{d-1}\right).
    \end{align}
    where $\lim_{\Delta t\to 0}\lim_{d \to\infty} \hat \sigma^{\Delta t, d}_t = \kappa \sqrt{\frac{\kappa^2(2-2t)^2+\sigma^2}{\kappa^2+\sigma^2}}.$
\end{proof}

\section{Dilated interpolants capture both phases for CW model}
In this section, we prove Theorem \ref{thm:cw:ve} from the main text, showing that the dilated VE interpolant captures both phases for the CW distribution. We also state and prove Theorem \ref{prp:char:cw}, proving that the VP interpolant captures both phases. We will need the following lemma
\begin{lem}
    \label{lem:gen:den}
    For all $t\in [0,1]$, the law of the interpolant ${I}_{t}=c\alpha_t z + \beta_t a$ is the same as the law of ${X}_t,$ the solution to the probability flow ODE
    \begin{equation}   
        \label{eq:gen:y}
          \dot {{X}}_t = \frac{\dot\alpha_t}{\alpha_t} X_t + \frac{\alpha_t\dot\beta_t-\dot\alpha_t\beta_t}{\alpha_t}\mathbb{E}[a|{I}_t={X}_t], \qquad {X}_{t=0} \sim \mathcal{N}(0, c^2\text{Id})
    \end{equation}
\end{lem}
\begin{proof}
    This follows from combining the equations
    \begin{align}
        X_t = \alpha_t\mathbb{E}[z|{I}_t={X}_t] + \beta_t\mathbb{E}[a|{I}_t={X}_t]\\
        b_t(X_t) = \dot\alpha_t\mathbb{E}[z|{I}_t={X}_t] + \dot\beta_t\mathbb{E}[a|{I}_t={X}_t]
    \end{align}
    where the second equation follows from Theorem 2.6 in \cite{albergo2023stochasticinterpolantsunifyingframework}
\end{proof}
\begin{proof}[Proof of Theorem \ref{thm:cw:ve}]
    We write $\eta_t(x):=\mathbb{E}[a|I^\text{E}_t=x]$ explicitly as
    \begin{equation}
        \eta^i_t(x) =\frac{pQ_{+}(x)\tanh\left(\beta m+\frac{\tau_t x_{i}}{d(1-\tau_t)^2}\right)+(1-p)Q_{-}(x)\tanh\left(-\beta m+\frac{\tau_t x_{i}}{d(1-\tau_t)^2}\right)}{pQ_{+}(x)+(1-p)Q_{-}(x)} 
    \end{equation}
    where
    \begin{equation}
        \label{eq:q:def}
        Q_{\pm}(x)=\prod_{i=1}^d\left[1\pm m\tanh\left(\frac{\tau_t x_{i}}{d(1-\tau_t)^2}\right)\right].
    \end{equation}

\textbf{First phase.} 
    For $t \in [0, 1/2],$ we have 
    \begin{align}
        \tanh\left(\beta m+\frac{\tau_t x_{i}}{d(1-\tau_t)^2}\right)&\approx \tanh(\beta m) = m \\
        Q_\pm(x) &\approx \exp\left(\pm m \frac{\tau_t}{d(1-\tau_t)^2} r\cdot x \right)
    \end{align}
    where $r=(1,\cdots,1)$ and we linearized the $\tanh$ to get the second equation. These approximations require $\frac{\tau_t x_i}{d(1-\tau_t)^2}$ to be small. We note that
    \begin{align}
        \frac{\tau_t x_i}{d(1-\tau_t)^2}=\frac{\tau_t \left(\sqrt{d}(1-\tau_t)z+\tau_t a\right)}{d(1-\tau_t)^2} \approx \frac{z}{\sqrt{d}(1-\tau_t)}.
    \end{align}
    Since for $t\in[0,1/2]$ we have $\sqrt{d}(1-\tau_t)\geq \kappa$, then $\frac{\tau_t x_i}{d(1-\tau_t)^2} = O\left(\frac{1}{\kappa}\right).$ Hence $\kappa$ regulates how good these approximations are, so that for $\kappa$ large enough, they are valid.
    
    Combining the two approximations and using that $p=e^{hm}/(e^{hm}+e^{-hm})$ gives
    \begin{equation}
        \label{eta:2nd:phase}
        \eta_t(x) = r m \tanh\left(mh+m \frac{\tau_t}{d(1-\tau_t)^2}r\cdot x\right) +o_\kappa(1)
    \end{equation}
    Lemma \ref{lem:gen:den} tells us that the law of the interpolant $I^E_t$ is the same as that of $X_t,$ the solution to the ODE
    \begin{equation}   
        \label{eq:gen:a}
        \dot {{X}}_t = \frac{\dot\tau_t}{1-\tau_t}(-X_t + \eta_t(X_t)), \qquad {X}_{t=0} \sim \mathcal{N}(0, d\text{Id}).
    \end{equation}
    Putting this together, and using $M_t=\frac{r\cdot X_t}{d}$ gives 
    \begin{align}   
        \label{eq:gen:m}
        \dot {{M}}_t = \frac{\dot\tau_t}{1-\tau_t}\left(-M_t + m \tanh\left(mh+m \frac{\tau_t}{(1-\tau_t)^2}M_t \right) +o_\kappa(1)\right)
    \end{align}
    with ${M}_{t=0} \sim \mathcal{N}(0, 1).$ Using the definition of $\tau_t,$ we get
    \begin{equation}   
        \label{eq:gen:m_t}
        \dot {{M}}_t = \frac{-M_t + m \tanh\left(mh+\frac{2tmM_t}{(1-2t)^2}\right) +o_\kappa(1)}{\tfrac{1}{2}-t} + O\left(\frac{1}{\sqrt{d}}\right).
    \end{equation}
    We hence get a well-defined equation for the magnetization. In fact, after taking the limits $\kappa\to \infty,\Delta t\to0,d\to\infty,$ by reparameterizing time $t(s)=s/2$ with $t:[0,1]\to[0,1/2],$ we get from Lemma \ref{lem:gen:gen} that this the $1$-dimensional velocity field that transports ${\mathcal{N}}(0,1)$ at $t=0$ to $p\delta_m + (1-p)\delta_{-m}$ at $t=1/2$ as desired.

    Now let $X^\perp_t=X_t-\frac{r\cdot X_t}{d} r$ and write $X^\perp_t=\sqrt{d}X_t^1+X_t^0$
    \begin{align}
        \label{eq:ode:aa}
        \sqrt{d}\dot {X}^1_t+\dot {X}^0_t = \frac{1-\tfrac{\kappa}{\sqrt{d}}}{\tfrac{1}{2}-t+\tfrac{\kappa t}{\sqrt{d}}}\left(-\sqrt{d}{X}^1_t-{X}^0_t + o_\kappa(1)\right)
    \end{align}
    Taylor expanding the RHS in powers of $\sqrt{d}$ and matching terms of order $\sqrt{d}$ gives
    \begin{align}
        \dot {X}^1_t=\frac{-X^1_t}{\tfrac{1}{2}-t}
    \end{align}
    where $X_t^1 \sim \mathcal{N}(0,\text{Id}).$ Hence for $t\in[0,1/2]$ we have $X^1_t = X^1_{t=0}(1-2t).$ We now match terms of constant order in \eqref{eq:ode:aa} to get
    \begin{align}
        \dot{X}^0_t &= \frac{-X^0_t+o_\kappa(1)}{\tfrac{1}{2}-t} + \frac{2\kappa X^1_t}{(1-2t)^2}\\
        &= \frac{-X^0_t+\kappa X^1_{t=0}+o_\kappa(1)}{\tfrac{1}{2}-t}.
    \end{align}
    We get $X^0_{t=1/2}=\kappa X^1_{t=0}+o_\kappa(1)$. Fix $w\perp r,$ $|w|=1.$ Since $X^1_t = X^1_{t=0}(1-2t),$  we then have for $t\in[0,1/2]$ that
    \begin{align*}
        \lim_{\kappa\to\infty}\lim_{\Delta t\to 0}\lim_{d \to\infty} \tfrac{1}{\sqrt{d}}w\cdot (X_t-(1-2t)X_{0}) = 0.
    \end{align*} \\
    \textbf{Second phase.} Consider $t\in[1/2, 1].$ Using Lemma \ref{lem:gen:den} again, we get that the law of $I^E_t$ is the same as that of $X_t,$ which solves the ODE
    \begin{equation}   
        \label{eq:gen:a}
        \dot {{X}}_t = \frac{\dot\tau_t}{1-\tau_t}(-X_t + \eta_t(X_t)).
    \end{equation}
    Using the definition of $\tau_t$
    \begin{equation}   
        \label{eq:gen:aa}
        \dot {{X}}_t = \frac{1}{1-t}(-X_t + \eta_t(X_t))
    \end{equation}

    For $\kappa$ large enough, we can approximate $Q_\pm$ for $t=1/2$ as
    $$
    Q_\pm(x) \approx \exp\left(\pm m \frac{\tau_{1/2}}{(1-\tau_{1/2})^2} M_{1/2} \right).
    $$
    We note that $\frac{\tau_{1/2}}{(1-\tau_{1/2})^2}>\frac{d}{\kappa^2}$ and recall from the analysis of the first phase that under the appropiate limits we get $M_{1/2}\sim p\delta_1+(1-p)\delta_{-1}.$ In particular, we have that for $d$ large, either $Q_+ \gg Q_-$ or $Q_- \gg Q_+.$ We will approximate the ODEs under the assumption that this holds for $t>1/2,$ i.e., either $Q_+ \gg Q_-$ or $Q_- \gg Q_+$ for all $t>1/2.$ The resulting ODEs will allow us to compute the value for $Q_\pm$ and check that indeed either $Q_+ \gg Q_-$ or $Q_- \gg Q_+$ for $t>1/2$ showing self-consistency and justifying the assumption. We first note that our assumption on $Q_\pm$ means that we can approximate
    \begin{align}
        \eta_t(X_t) = \tanh\left(\beta m \sgn(M_t)r + \frac{1}{\kappa^2(2-2t)^2}X_t\right).
    \end{align}
    where $\tanh$ is applied elementwise. Combining this with \eqref{eq:gen:aa} gives
    \begin{align}
        \label{eq:x:cw:2nd:b}
        \dot X_t = \frac{-X_t+\tanh\left(\beta m \sgn(M_t)r + \frac{1}{\kappa^2(2-2t)^2}X_t\right)}{1-t} + O\left(\frac{1}{{d}}\right)
    \end{align}

    Fix a coordinate $i\in\{1,\cdots, d\}$ and without loss of generality, consider $\sgn(M_t)=1.$ We have 
    
    \begin{align}
        \label{eq:x:cw:2nd:b22}
        \dot X^i_t = \frac{-X^i_t+\tanh\left(\beta m + \frac{1}{\kappa^2(2-2t)^2}X^i_t\right)}{1-t} + O\left(\frac{1}{{d}}\right)
    \end{align}

    From our analysis of the first phase, we have that $X^i_{1/2} = kZ + m + o_\kappa(1)$ where $Z\sim \mathcal{N}(0,1).$ Under the change of variables $t(s)=s/2+1/2,$ equation \eqref{eq:x:cw:2nd:b22} becomes 
    \begin{align}
        \label{eq:x:cw:2nd:b3}
        \dot X^i_s = \frac{-X^i_s+\tanh\left(\beta m + \frac{1}{\kappa^2(1-s)^2}X^i_s\right)}{1-s} + O\left(\frac{1}{{d}}\right).
    \end{align}
    In the limit of $d\to\infty,$ we get from Lemma \ref{lem:g:to:sgmm2} that this velocity field transports $\kappa Z+m$ with $Z\sim\mathcal{N}(0,1)$ to $a\sim \left(\frac{1+m}{2}\right)\delta_1 + \left(\frac{1-m}{2}\right)\delta_{-1}.$ In particular, we know that for $t\in [1/2, 1],$ $X^i_t\stackrel{d}{=}\kappa (2-2t)Z+a.$ If we instead had $\sgn(M_{1/2})=-1,$ we would have the same results except that $a\sim \left(\frac{1-m}{2}\right)\delta_1 + \left(\frac{1+m}{2}\right)\delta_{-1}.$

    We will now argue that $M_t$ will remain fixed for $t \in [1/2, 1].$ Again without loss of generality, we take $\sgn(M_t) = 1$ and we get the evolution of the $X_t^i$ in equation \eqref{eq:x:cw:2nd:b2}. Since the $X^i_t$ are iid at $t=1/2$ and evolve identically and independently, we get that by the law of large numbers as $d\to\infty$
    $$M_t=\frac{1}{d}\sum_{i=1}^d X^i_t\to \mathbb{E}[X^1_t].$$
    Similarly, we get that as $d\to\infty$ 
    \begin{align}
        \label{eq:x:cw:2nd:b3}
        \dot M_t = \frac{-M_t+\mathbb{E}\left[\tanh\left(\beta m + \frac{1}{\kappa^2(2-2t)^2}X^1_t\right)\right]}{1-t}.
    \end{align}
    Hence, using that for $t\in [1/2, 1],$ $X^i_t\stackrel{d}{=}\kappa (2-2t)Z+a,$ with $\lambda =1/(\kappa(2-2t))$ we have
    \begin{align}
        \label{eq:x:cw:2nd:b4}
        \dot M_t = \frac{-M_t+\mathbb{E}\left[\tanh\left(\beta m + \lambda Z + \lambda^2 a\right)\right]}{1-t}.
    \end{align}
    We claim that $\mathbb{E}\left[\tanh\left(\beta m + \lambda Z + \lambda^2 a\right)\right] = m,$ from where it follows immediately that $M_t$ remains constant for $t\in[1/2,1].$ To prove the claim note that since $\mathbb{E}[a] = m,$ it suffices to show 
    \begin{equation}
        \label{eq:clm:on:b}
        \mathbb{E}\left[\tanh\left(\beta m + \lambda Z + \lambda^2 a\right) - a\right] = 0.
    \end{equation}
    We have 
    \begin{align}
        \mathbb{E}&\left[\tanh\left(\beta m + \lambda Z + \lambda^2 a\right) - a\right]\\ &= \left(\tfrac{1+m}{2}\right)\mathbb{E}\left[\tanh\left(\beta m + \lambda Z + \lambda^2 \right) - 1\right] + \left(\tfrac{1-m}{2}\right)\mathbb{E}\left[\tanh\left(\beta m + \lambda Z - \lambda^2 \right) + 1\right]\\
        &= \int dz e^{-z^2/2}\left(\left(\tfrac{1+m}{2}\right) \left(\tanh\left(\beta m + \lambda z + \lambda^2 \right) - 1\right) + \left(\tfrac{1-m}{2}\right) \left(\tanh\left(\beta m + \lambda z - \lambda^2 \right) + 1 \right) \right)\\
        &= \int dz e^{-(z^2+\lambda^2)/2}\left(e^{z\lambda}\left(\tfrac{1+m}{2}\right) \left(\tanh\left(\beta m + \lambda z \right) - 1\right) + e^{-z\lambda}\left(\tfrac{1-m}{2}\right) \left(\tanh\left(\beta m + \lambda z \right) + 1 \right) \right)
        \label{eq:tanh_exp}
    \end{align}
    where in the last equality we changed variables $z\to z-\lambda$ in the first term of the integral and $z\to z+\lambda$ in the second term.

    Since $\tanh(\beta m)=m$ yields
    $$
    \tanh(\beta m + \lambda z) = \frac{\left(\tfrac{1+m}{2}\right)e^{z\lambda }-\left(\tfrac{1-m}{2}\right)e^{-z\lambda}}{\left(\tfrac{1+m}{2}\right)e^{z\lambda }+\left(\tfrac{1-m}{2}\right)e^{-z\lambda}}
    $$
    we then get by rearranging that 
    $$e^{z\lambda}\left(\tfrac{1+m}{2}\right) \left(\tanh\left(\beta m + \lambda z \right) - 1\right) + e^{-z\lambda}\left(\tfrac{1-m}{2}\right) \left(\tanh\left(\beta m + \lambda z \right) + 1 \right) =0 $$ which implies that the integrand of equation \eqref{eq:tanh_exp} is zero, giving the desired result.

    We now check the self-consistency of the assumption that either $Q_+ \gg Q_-$ or $Q_- \gg Q_+.$ Using the definition of $Q_\pm$ and $\tau_t$ and the fact that the $X^i$ evolve independently and identically, we get by the law of large numbers that 
    \begin{align}
        \frac{1}{d}\log Q_\pm &= \frac{1}{d}\sum^d_{i=1}\log\left(1\pm m\tanh \left(\frac{X^i_t}{\kappa^2(2-2t)^2}\right)\right)\\
        &\to \mathbb{E}\left[\log\left(1\pm m\tanh \left(\frac{X^1_t}{\kappa^2(2-2t)^2}\right)\right)\right].
    \end{align}

    This means that to leading order in $d,$ we have $Q_\pm = \mathbb{E}[Q_\pm].$ We then have using the fact that $X^i_t$ evolve independently
    \begin{align}
        \mathbb{E}[Q_\pm] = \left(1\pm m\mathbb{E}\left[\tanh\left(\frac{X^1_t}{\kappa^2(2-2t)^2}\right)\right]\right)^d.
    \end{align}
    A similar computation to the one used to prove equation \eqref{eq:clm:on:b} gives 
    \begin{align}
        \mathbb{E}\left[\tanh\left(\frac{X^1_t}{\kappa^2(2-2t)^2}\right)\right] = m e^{-\lambda^2/2}\mathbb{E}\left[\frac{\sinh(\lambda Z)^2}{\cosh(\lambda Z)}\right] > 0.
    \end{align}
    where we evolve taking $\sgn(M_t)=1.$ This yields $Q_+ \gg Q_-$ as desired. If we had taken $\sgn(M_t)=-1,$ we would have gotten $Q_- \gg Q_+.$
\end{proof}

\begin{thm}[Dilated VP captures both features for CW model]
Let $X^{\Delta t}_t$ be obtained from the probability flow ODE associated with the dilated VP interpolant for the CW distribution discretized with a uniform grid with step size $\Delta t.$ Let $r=(1,\cdots,1)$ \\
    \textbf{First phase}: For $t\in [0,\tfrac{1}{2}],$ we have that
        \begin{align*}
            \mu_t = \lim_{\Delta t\to 0}\lim_{d \to\infty} \frac{r\cdot X^{\Delta t}_t}{\sqrt{d}}
        \end{align*}
        fulfills 
        \begin{align*}
            \dot \mu_t &= 2\kappa m \tanh\left(mh+2\kappa m t \mu_t\right),\quad \mu_{t=0}\sim \mathcal{N}(0,1).
        \end{align*}
        with $h$ such that $p=e^{mh}/(e^{mh}+e^{-mh}).$  This implies $\mu_{t=1/2}\sim p\mathcal{N}(\kappa m, 1) + (1-p)\mathcal{N}(-\kappa m, 1).$\\ 
  In addition, for $w\perp r,$ $|w|=1,$ we have for $t\in[0,1/2]$
    \begin{align*}
        \lim_{d \to\infty} \tfrac{1}{\sqrt{d}}w\cdot (X_t-X_{0}) = 0
    \end{align*}
    \textbf{Second phase}: For $t\in [\tfrac{1}{2}, 1],$ we have that
        \begin{align*}
            M_t = \lim_{\kappa\to\infty}\lim_{\Delta t\to 0}\lim_{d \to\infty}  \frac{r\cdot X^{\Delta t}_t}{{d}}
        \end{align*}
        fulfills, for $t\in (1/2, 1)$, the ODE 
        \begin{align*}
            \dot M_t=\frac{-M_t+m\sgn(\mu_{1/2})}{1-t}
        \end{align*}
        Moreover, for any coordinate $i$ we have that
    \begin{align*}
        X^i_t = \lim_{\kappa\to\infty} \lim_{\Delta t\to 0}\lim_{d\to\infty}(X^{\Delta t, d}_t)^i
    \end{align*}
    satisfies the ODE for $t\in [1/2, 1)$
    \begin{align*}
        \dot X_t^i = \frac{-X_t^i+\tanh\left(\beta m\sgn(M_t)+ \frac{2t-1}{(2-2t)^2} X^i_t\right)}{1-t}
    \end{align*}
    with the intial condition $X^i_{1/2}\sim \mathcal{N}(0, 1).$ This equation implies that 
    \begin{align*}
        X_1^i \sim \left(\tfrac{1+m\sgn(M_1)}{2}\right)\delta_1+\left(\tfrac{1-m\sgn(M_1)}{2}\right)\delta_{-1}.
    \end{align*}
    
    \label{prp:char:cw}
\end{thm}

\begin{proof}[Proof of Theorem 4]
    Following \cite{bm}, we write $\eta_t(x):=\mathbb{E}[a|I^P_t=x]$ explicitly as
    \begin{equation}
        \label{eq:rsc:eta}
        \eta^i_t(x) =\frac{pQ_{+}(x)\tanh\left(\beta m+\frac{\tau_t x_{i}}{(1-\tau_t)^2}\right)+(1-p)Q_{-}(x)\tanh\left(-\beta m+\frac{\tau_t x_{i}}{(1-\tau_t)^2}\right)}{pQ_{+}(x)+(1-p)Q_{-}(x)} 
    \end{equation}
    where
    \begin{equation}
        \label{eq:Qpm}
        Q_{\pm}(x)=\prod_{i=1}^d\left[1\pm m\tanh\left(\frac{\tau_t x_{i}}{(1-\tau_t)^2}\right)\right].
    \end{equation}

\textbf{First phase.} For $t \in [0, 1/2],$ we have 
\begin{align}
    \label{eq:eta:approx}
    \tanh\left(\beta m+\frac{\tau_t x_{i}}{(1-\tau_t)^2}\right)&\approx \tanh(\beta m) = m \\
    \label{eq:Q:approx}
    Q_\pm(x) &\approx \exp\left(\pm m \frac{\tau_t}{(1-\tau_t)^2} r\cdot x \right)
\end{align}
where $r=(1,\cdots,1)$ and we linearized the $\tanh$ to get the second equation. These approximations are valid since in the first phase $\frac{\tau_t x_i}{(1-\tau_t)^2}=O\left(\frac{1}{\sqrt{d}}\right)$ is small. Combining the two approximations gives
\begin{equation}
    \label{eta:1st:phase}
    \eta_t(x) \approx r m \tanh\left(mh+m \frac{\tau_t}{(1-\tau_t)^2}r\cdot x\right)
\end{equation}
Lemma \ref{lem:gen:den} tells us that the law of the interpolant $I^P_t$ is the same as that of $X_t,$ the solution to the ODE
\begin{equation}   
    \label{eq:gen:y}
      \dot {{X}}_t = \frac{\dot\tau_t}{1-\tau_t}(-X_t + \eta_t(X_t)), \qquad {X}_{t=0} \sim \mathcal{N}(0, \text{Id}).
\end{equation}
Combining the last two equations with $\tau_t=2\kappa t/\sqrt{d}$ gives us the ODE
\begin{equation}
    \label{eq:ode:yt}
    \dot {{X}}_t = \frac{2\kappa}{\sqrt{d}} \left(-X_t+rm\tanh\left(mh+2kmt\frac{r\cdot X_t}{\sqrt{d}}\right) \right)+ O\left(\frac{1}{d}\right), \qquad {X}_{t=0} \sim \mathcal{N}(0, \text{Id})
\end{equation}
Writing $\mu_t = r\cdot X_t /\sqrt{d}$ gives the induced equation
\begin{equation}
    \dot \mu_t = 2\kappa m\tanh\left(mh+2kmt\mu_t\right) + O\left(\frac{1}{\sqrt{d}}\right), \qquad {\mu}_{t=0} \sim \mathcal{N}(0, 1)
\end{equation}
  Taking $d\to\infty$ yields the limiting ODE for the $\mu_t$ in the first phase. By reparameterizing time $t(s)=s/2$ with $t:[0,1]\to[0,1/2],$ we get from Lemma \ref{lem:g:to:sgmm} that this the $1$-dimensional velocity field associated to the interpolant $I_s=\sqrt{1-s^2}z+sa$ that transports $z\sim{\mathcal{N}}(0,1)$ at $t(s=0)=0$ to $a\sim p{\mathcal{N}}(\kappa m,1) + (1-p){\mathcal{N}}(-\kappa m, 1)$ at $t(s=1)=1/2.$
  Now fix $w\perp r$ with $|w|=1.$ Let $\nu_t=w\cdot X_t/\sqrt{d}.$ From equation \eqref{eq:ode:yt}, we get that for $t\in[0,1/2]$
  $$
  \dot \nu_t = O\left(\frac{1}{\sqrt{d}}\right)
  $$
  This means that $\lim_{d\to\infty} \nu_t-\nu_0 = 0$ as claimed. 

\textbf{Second phase.} For $t\in [1/2, 1],$ we have using Lemma \ref{lem:gen:den} and the definition of $\tau_t$
\begin{align}
    \label{eq:x:cw:2nda}
    \dot X_t = \frac{-X_t+\eta_t(X_t)}{1-t}.
\end{align}
We will approximate $\eta_t(x)$ based on the fact that either $Q_+ \gg Q_-$ or $Q_- \gg Q_+.$ To see this, write $a=smr + z$ where $p=\mathbb{P}(s=1) = 1-\mathbb{P}(s=-1)$ and $z\sim{\mathcal{N}}(0,\text{Id}_{d})$ and note that at $t=1/2$ we have 
$$
\mu_{1/2} = \frac{r\cdot I_{1/2}}{\sqrt{d}} \stackrel{d}{=} Z + \kappa ms + O\left(\frac{1}{\sqrt{d}}\right)
$$
where $Z\sim \mathcal{N}(0,1).$ This means that for $\kappa$ large enough, we can approximate $Q_\pm$ as
$$
Q_\pm(x) \approx \exp\left(\pm m \frac{\tau_{1/2}}{(1-\tau_{1/2})^2} \sqrt{d}\mu_{1/2} \right) \approx \exp\left(\pm km^2s \frac{\sqrt{d}\tau_{1/2}}{(1-\tau_{1/2})^2} \right).
$$
We note that $\frac{\sqrt{d}\tau_{1/2}}{(1-\tau_{1/2})^2}>\kappa.$ In particular, we have that with probability that goes to $1$ as $\kappa$ goes to infinity, either $Q_+ \gg Q_-$ or $Q_- \gg Q_+.$ We will approximate the ODEs under the assumption that this holds for $t>1/2,$ i.e., either $Q_+ \gg Q_-$ or $Q_- \gg Q_+$ for all $t>1/2.$ Similarly to the proof of Theorem \ref{thm:cw:ve}, one can use the resulting equations to show self-consistency of this assumption. 

Our assumption on $Q_\pm$ implies that we can approximate
\begin{align}
    \eta_t(X_t) = \tanh\left(\beta m \sgn(M_t)r + \frac{2t-1}{(2-2t)^2}X_t\right).
\end{align}
where $\tanh$ is applied elementwise. Combining this with \eqref{eq:x:cw:2nda} gives
\begin{align}
    \label{eq:x:cw:2nd:b}
    \dot X_t = \frac{-X_t+\tanh\left(\beta m \sgn(M_t)r + \frac{2t-1}{(2-2t)^2}X_t\right)}{1-t} + O\left(\frac{1}{{d}}\right)
\end{align}
From our analysis of the first phase, we note that $X_{t=1/2}=Y+O(1/\sqrt{d})$ where $Y\sim \mathcal{N}(0,\text{Id}).$ Let us assume without loss of generality that $\sgn(\mu_{1/2})=1$ and fix a coordinate $i\in\{1,\cdots, d\}$
\begin{align}
    \label{eq:x:cw:2nd:b}
    \dot X^i_t = \frac{-X^i_t+\tanh\left(\beta m + \frac{2t-1}{(2-2t)^2}X^i_t\right)}{1-t} + O\left(\frac{1}{{d}}\right).
\end{align}
Under the change of variables $t(s)=s/2+1/2$ we get that the ODE becomes
\begin{align}
    \label{eq:x:cw:2nd:b2}
    \dot X^i_s = \frac{-X^i_s+\tanh\left(\beta m + \frac{s}{(1-s)^2}X^i_s\right)}{1-s} + O\left(\frac{1}{{d}}\right).
\end{align}
In the limit of $d\to\infty,$ we get from Lemma \ref{lem:g:to:sgmm2} that this velocity field transports $Z\sim\mathcal{N}(0,1)$ to $a\sim \left(\frac{1+m}{2}\right)\delta_1 + \left(\frac{1-m}{2}\right)\delta_{-1}$ using the interpolant $I_s=(1-s)Z+sa.$

From the equation for $X^i_t$ we deduce
\begin{align}
    \label{eq:x:cw:2nd:b}
    \dot M_t = \frac{-M_t+\mathbb{E}\left[\tanh\left(\beta m + \frac{2t-1}{(2-2t)^2}X^i_t\right)\right]}{1-t} + O\left(\frac{1}{{d}}\right).
\end{align}

A similar computation to the one in the proof of Theorem \ref{thm:cw:ve} yields $\mathbb{E}\left[\tanh\left(\beta m + \frac{2t-1}{(2-2t)^2}X^i_t\right)\right]=m,$ giving the desired equation for $M_t.$

\end{proof}

\section{Dilated VP $I_\tau=\sqrt{1-\tau^2}z+\tau a$ captures $p$ and $\sigma^2$}
We proved in Theorem 1 from the main text that taking the VP interpolant $I_\tau=(1-\tau)z+\tau a$ and using the dilation $\tau_t$ from equation (4) in the main text leads to correct estimation for both $p$ and $\sigma^2$ for the GM distribution. We now show that the same time dilation leads to correct estimation if we instead use the interpolant $I_\tau=\sqrt{1-\tau^2}z+\tau a.$ The analysis of the first phase mimics that of Theorem 1, since $\sqrt{1-\tau^2}\approx 1 \approx 1-\tau$ in the first phase. The second phases for these two interpolants are also similar, but the details of the ODEs change.
\begin{thm}
    Let $X^{\Delta t}_t$ be obtained from the probability flow ODE associated with the dilated VP interpolant $I_t=\sqrt{1-\tau_t^2}z+\tau_t a$ discretized with a uniform grid with step size $\Delta t.$ Then 
    \label{thm:dvp}
    \begin{align*}
        X^{\Delta t}_t - \frac{r\cdot X^{\Delta t}_t}{{d}}r\sim \mathcal{N}\left(0, \left(\hat \sigma^{\Delta t, d}_t\right)^2\text{Id}_{d-1}\right).
    \end{align*}
    where $\hat \sigma^{\Delta t, d}_t$ is characterized as follows:\\
    \textbf{First phase}: For $t\in [0,\tfrac{1}{2}]$ we have 
    \begin{align*}
        \lim_{\Delta t\to 0}\lim_{d \to\infty} \sigma^{\Delta t, d}_t =1. 
    \end{align*}
    In addition  
        \begin{align*}
            \mu_t = \lim_{\Delta t\to 0}\lim_{d \to\infty} \frac{r\cdot X^{\Delta t}_t}{\sqrt{d}}
        \end{align*}
        fulfills 
        \begin{align*}
            \dot \mu_t &= 2\kappa \tanh\left(h+2\kappa t \mu_t\right),\quad \mu_{t=0}\sim \mathcal{N}(0,1).
        \end{align*}
        where $h$ is such that $p=e^h/(e^h+e^{-h}).$ This implies $\mu_{t=1/2}\sim p\mathcal{N}(\kappa, 1) + (1-p)\mathcal{N}(-\kappa, 1).$\\ 
    \textbf{Second phase}: For $t\in [\tfrac{1}{2}, 1]$ we have 
    \begin{align*}
        \lim_{\Delta t\to 0}\lim_{d \to\infty} \sigma^{\Delta t, d}_t =\sqrt{1+(\sigma^2-1)(2t-1)^2}
    \end{align*}
    In addition 
        \begin{align*}
            M_t = \lim_{\Delta t\to 0}\lim_{d \to\infty}  \frac{r\cdot X^{\Delta t}_t}{{d}}
        \end{align*}
        fulfills, for $t\in (1/2, 1)$, the ODE 
        \begin{align*}
            \dot M_t=\frac{(\sigma^2-1)(2t-1)2}{1+(\sigma^2-1)(2t-1)^2}M_t + \frac{2r\sgn(M_t)}{1+(\sigma^2-1)(2t-1)^2}
        \end{align*}
        and  satisfies
        \begin{align*}
            M_1 = p^\kappa\delta_1 + (1-p^\kappa)\delta_{-1}
        \end{align*}
        where $p^\kappa$ is such that  $\lim_{\kappa\to \infty}p^\kappa=p$
\end{thm}
\begin{proof}
        Consider the dilated variance preserving interpolant $I_t = \sqrt{1-\tau_t^2}z + \tau_t a$ where $z\sim \mathcal{N}(0,\text{Id}),$ $a\sim \mu,$ and $\tau_t$ is given in equation (4) in the main text. Plugging in $\alpha_t=\sqrt{1-\tau_t}$ and $\beta_t=\tau_t$ into the velocity field given by Lemma \ref{lem:gen:gen} yields 
    \begin{align}
    \label{eq:gen:dil}
    \dot{X}_{t} = \frac{(\sigma^2-1)\tau_t\dot\tau_t}{1+(\sigma^2-1)\tau^2_t}X_t+\frac{\dot\tau_t}{1+(\sigma^2-1)\tau_t^2}  r\tanh\left(h+\frac{\tau_{t} r\cdot X_{t}}{1+(\sigma^2-1)\tau_t^2}\right)
    \end{align}
    \textbf{First phase.} For $t\in [0,1/2],$ we have $\tau_t = \frac{2 \kappa t}{\sqrt{d}}.$ Plugging in into equation \eqref{eq:gen:dil} gives
    \begin{align}
        \dot{X}_{t} & = \frac{2\kappa}{\sqrt{d}} r\tanh\left(h+2\kappa t \frac{r\cdot X_{t}}{\sqrt{d}}\right) + O\left(\frac{1}{d}\right).
    \end{align}
    The remaining of the analysis of the first phase to yield the desired results is almost identical to what we did in the proof of Theorem 1 and is omitted.\\
    \textbf{Second phase.} For $t\in[1/2,1],$ we have $\tau_t = \left(1-\frac{\kappa}{\sqrt{d}}\right)(2t-1)+\frac{\kappa}{\sqrt{d}}.$ Using equation \eqref{eq:gen:dil} again yields 
    \begin{align}
        \dot X_t = \frac{(\sigma^2-1)(2t-1)2}{1+(\sigma^2-1)(2t-1)^2}X_t + \frac{2r\tanh\left(h + \frac{(2t-1)r\cdot X_t + \kappa \frac{r\cdot X_t}{\sqrt{d}}}{1+(\sigma^2-1)(2t-1)^2}\right)}{1+(\sigma^2-1)(2t-1)^2} + O\left(\frac{1}{d}\right).
        \label{eq:ode:x:2}
    \end{align}
    Writing $\mu_t=\frac{r\cdot X_t}{\sqrt{d}},$ this implies
    \begin{align}
        \dot \mu_t = \frac{(\sigma^2-1)(2t-1)2}{1+(\sigma^2-1)(2t-1)^2}\mu_t + \frac{2\sqrt{d}\tanh\left(h + \frac{(2t-1)\sqrt{d}\mu_t + \kappa \mu_t}{1+(\sigma^2-1)(2t-1)^2}\right)}{1+(\sigma^2-1)(2t-1)^2} + O\left(\frac{1}{\sqrt{d}}\right).
        \label{eq:ode:mu:2}
    \end{align}
    From the analysis of the first phase (see equation \eqref{eq:vp:1st:mu} in the proof of Theorem 1), we have that for finite $d$ and discretizing with step size $\Delta t$ 
    \begin{align}
        \mu_{t=1/2} = \theta + O\left(\frac{1}{\sqrt{d}}\right) + o_{\Delta t}(1)
    \end{align}
    where $\theta \sim p\mathcal{N}(\kappa, 1) + (1-p)\mathcal{N}(-\kappa, 1)$ and the term $o_{\Delta t}(1)$ goes to zero as $\Delta t$ goes to zero independently of $d,$ since this error only comes from discretizing the $d$-independent ODE $\dot{\mu}_{t} = 2\kappa\tanh\left(h+2\kappa t \mu_t\right)$ with $\mu_{t=0}\sim \mathcal{N}(0,1).$

    At $t=1/2,$ the argument of the $\tanh$ is $h+\kappa\mu_{1/2}.$ Assume $\theta$ takes value on the $+\kappa$ mode. For $d$ large enough and $\Delta t$ small enough (independently of $d$) we have that $|\mu_{1/2} - \theta| < 1$. We also have that $h\ll \kappa \theta$ and hence $h\ll \kappa\mu_t,$ where both inequalities hold with probability going to $1$ as $\kappa$ goes to infinity. This means we can approximate the ODE for $\mu_t$ for $t=1/2$ as 
    \begin{align}
        \dot \mu_t = \frac{(\sigma^2-1)(2t-1)2}{1+(\sigma^2-1)(2t-1)^2}\mu_t + \frac{2\sqrt{d}\sgn(\mu_t)}{1+(\sigma^2-1)(2t-1)^2} + O\left(\frac{1}{\sqrt{d}}\right).
        \label{eq:ode:mu:3}
    \end{align}
    We note that this remains valid for $t>1/2$ since under the approximation we used in equation \eqref{eq:ode:mu:3}, we have that $\mu_t$ is increasing. Indeed, whenever $\mu_t=o(\sqrt{d}),$ the second term in the RHS of \eqref{eq:ode:mu:3} will dominate. If $b$ takes value on the $-\kappa$ mode instead, an analogous argument shows that \eqref{eq:ode:mu:3} is also valid in that case.
    
    We then use this approximation in the ODEs for $X_t$ to get for $t\in(1/2, 1)$
    \begin{align}
        \label{eq:final:x_t:2}
        \dot X_t=\frac{(\sigma^2-1)(2t-1)2}{1+(\sigma^2-1)(2t-1)^2}X_t + \frac{2r\sgn(M_t)}{1+(\sigma^2-1)(2t-1)^2} + O\left(\frac{1}{d}\right).
    \end{align}
    where $M_t=r\cdot X_t/d.$ This yields the limiting ODE for $M_t$ in the theorem statement. We recall that from the analysis of the first phase (after taking the limit first on $d\to\infty$ and then on $\Delta t\to 0$) we got 
    \begin{align}
        \mu_{t=1/2}\sim p\mathcal{N}(\kappa, 1) + (1-p)\mathcal{N}(-\kappa, 1).
    \end{align}
    We argued above that the sign of $\mu_t$ will be preserved for $t\in[1/2, 1]$ with probability going to $1$ as $\kappa$ tends to $\infty.$ This means that 
    \begin{align}
        M_1 = p^\kappa\delta_1 + (1-p^\kappa)\delta_{-1}
    \end{align}
    where $p^\kappa$ is such that  $\lim_{\kappa\to \infty}p^\kappa=p.$ 
    
    Let $X^\perp = X^{\Delta t}_t - \frac{r\cdot X^{\Delta t}_t}{{d}}r$ and note that
    \begin{align}
        \label{eq:final:x_t23}
        \dot X^{\perp}_t= \frac{(\sigma^2-1)(2t-1)2}{1+(\sigma^2-1)(2t-1)^2}X^{\perp}_{t}.
    \end{align}
    Since this is a linear ODE from a Gaussian initial condition, we have
    \begin{align}
    X^\perp_t \sim \mathcal{N}\left(0, \left(\hat \sigma^{\Delta t, d}_t\right)^2\text{Id}_{d-1}\right).
    \end{align}
    Under the change of variables $t(s)=s/2+1/2,$ the equation \eqref{eq:final:x_t23} for $X_t^\perp$ becomes
    \begin{align}
        \label{eq:final:x_t4}
        \dot X^{\perp}_s= \frac{-s+\sigma^2s}
        {(1-s^2)+\sigma^2s^2}X^{\perp}_{s}.
    \end{align}
    By taking one coordinate $i\in\{1,\cdots, d-1\}$ of $X_s^\perp$ we get from Lemma \ref{lem:g:to:g} that this is the velocity field associated with the interpolant $I_s=\sqrt{1-s^2}z+sa$ where $z\sim \mathcal{N}(0,1)$ is transported to $a\sim \mathcal{N}(0, \sigma^2).$ For fixed $s\in [0, 1],$ the interpolant $I_s$ has variance $(1-s^2)+\sigma^2s^2 = 1+(\sigma^2-1)(2t-1)^2$ as claimed.
\end{proof}

\section{Connection with the sub-VP SDE.}
Similarly to the connection described in Section 3.3 between our VP interpolant and the VP SDE from \cite{song2021scorebasedgenerativemodelingstochastic}, we note that the sub-VP SDE from \cite{song2021scorebasedgenerativemodelingstochastic} corresponds to the interpolant $I_t = (1-\tau_t)z + \tau_t a$ with $\tau_t=exp(-\int_0^{-\ln t}\gamma_u du)$ as in Section 3.3. Hence, the sub-VP SDE corresponds exactly to a time-dilation of the VP interpolant we analyze in Theorem 1.

\section{Details for the CelebA experiment}
As mentioned in the main text, we use pretrained models for the VP and VE SDEs from \cite{song2021scorebasedgenerativemodelingstochastic}. We use pretrained models on the CelebA-HQ dataset \cite{karras2018progressivegrowinggansimproved}. These models are available publicly on the HuggingFace library for the VP SDE \cite{huggingface_ddpm_celebahq_256} and for the VE SDE \cite{huggingface_ncsnpp_celebahq_256}. We note that for the VP SDE we actually use the DDPM model from \cite{ho2020denoising} which was later shown to correspond to a particular discretization of Song et al's VP SDE (see Appendix E in \cite{song2021scorebasedgenerativemodelingstochastic}.)

We then generate samples running the VP or VE SDEs with different number of discretization steps with a uniform grid. For a given number of discretization steps, we generate $7,500$ samples and then use the DeepFace library from \cite{serengil2024lightface} to detect whether there is a face in the generated image. This measures the low-level feature of the image. For the high-level feature, when the DeepFace library does detect a face, it tries to predict the race of the generated face, giving one of the following $6$ races: Asian, Black, Indian, Latino/Hispanic, Middle Eastern, White. Given the predicted races in the samples were a face was detected among the $7,500$, we calculate an empirical distribution supported on $6$ points. We also calculated the race distribution on the original CelebA-HQ dataset which has $30,000$ real images. We then compute the KL Divergence between the distribution on races of the generated images and the images in the dataset.

A technical detail is that the DDPM implementation from \cite{huggingface_ddpm_celebahq_256} can only handle a number of discretization steps that is of the form $\left \lfloor{1000/n}\right \rfloor $ where $n$ is an integer. For the VP SDE, we use one of the following options $\{10, 25, 50, 100, 250, 333, 500, 1000\}$ for the number of discretization steps. For the VE SDE, we instead use one of $\{333, 500, 750, 1000\}.$ A smaller number of discretization steps for the VE SDE leads to images that are too low-quality for our purposes.

As a sanity check, we include non-cherry-picked samples from the VP SDE in Figure \ref{fig:vp_sde} and from the VE SDE in Figure \ref{fig:ve_sde}. We confirm that diversity increases for the images generated by the VP SDE as the number of steps grows, whereas quality increases for the images generated by the VE SDE as we take larger number of steps.
\begin{figure}
    \centering
    \includegraphics[width=\linewidth]{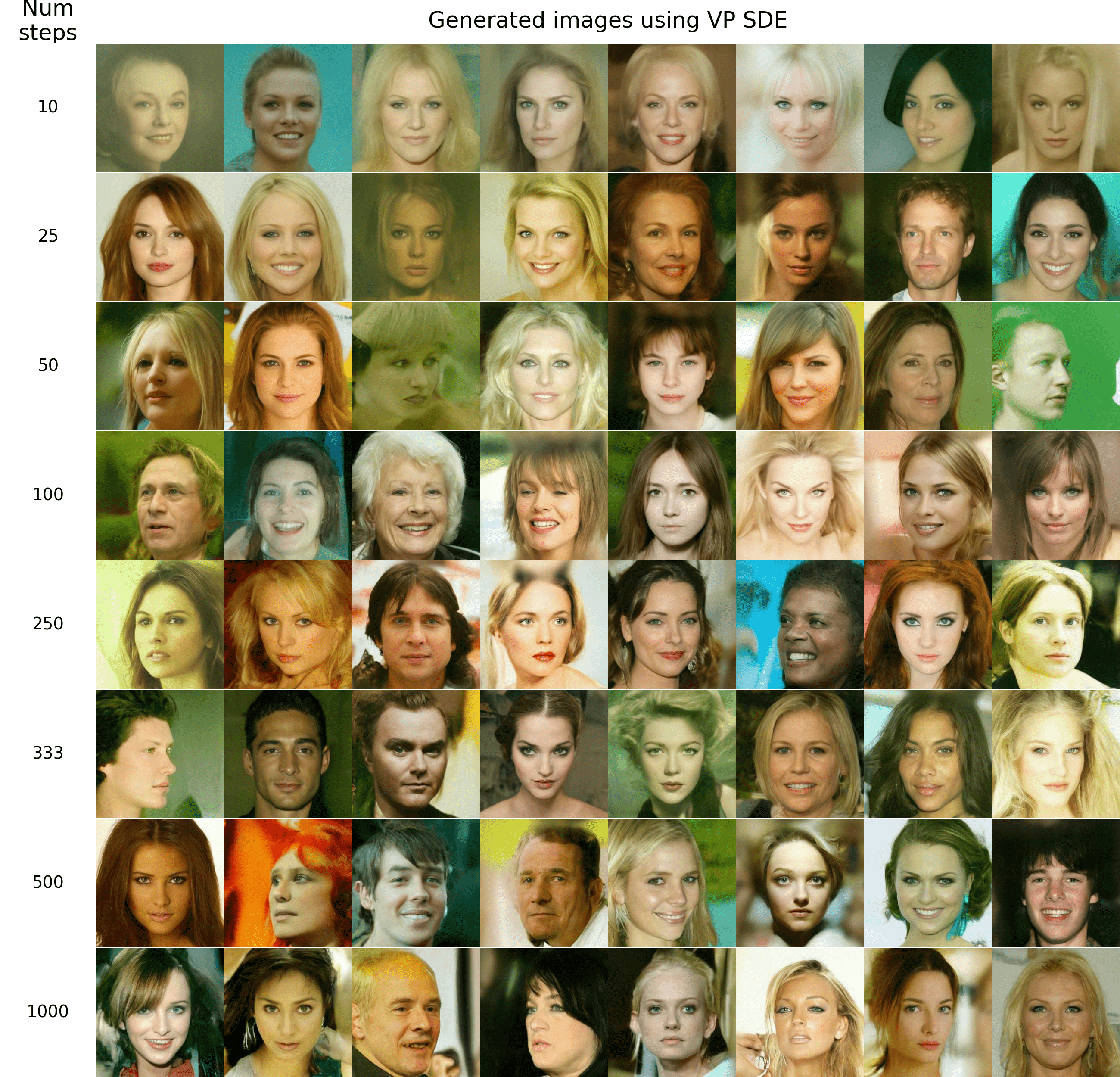}
    \caption{For different number of discretization steps, we include images generated by the VP SDE from \cite{song2021scorebasedgenerativemodelingstochastic} pretrained on the CelebA-HQ dataset \cite{huggingface_ddpm_celebahq_256}. We see that for small number of steps, the samples look alike, and diversity increases with the number of steps.}
    \label{fig:vp_sde}
\end{figure}

\begin{figure}
    \centering
    \includegraphics[width=1\linewidth]{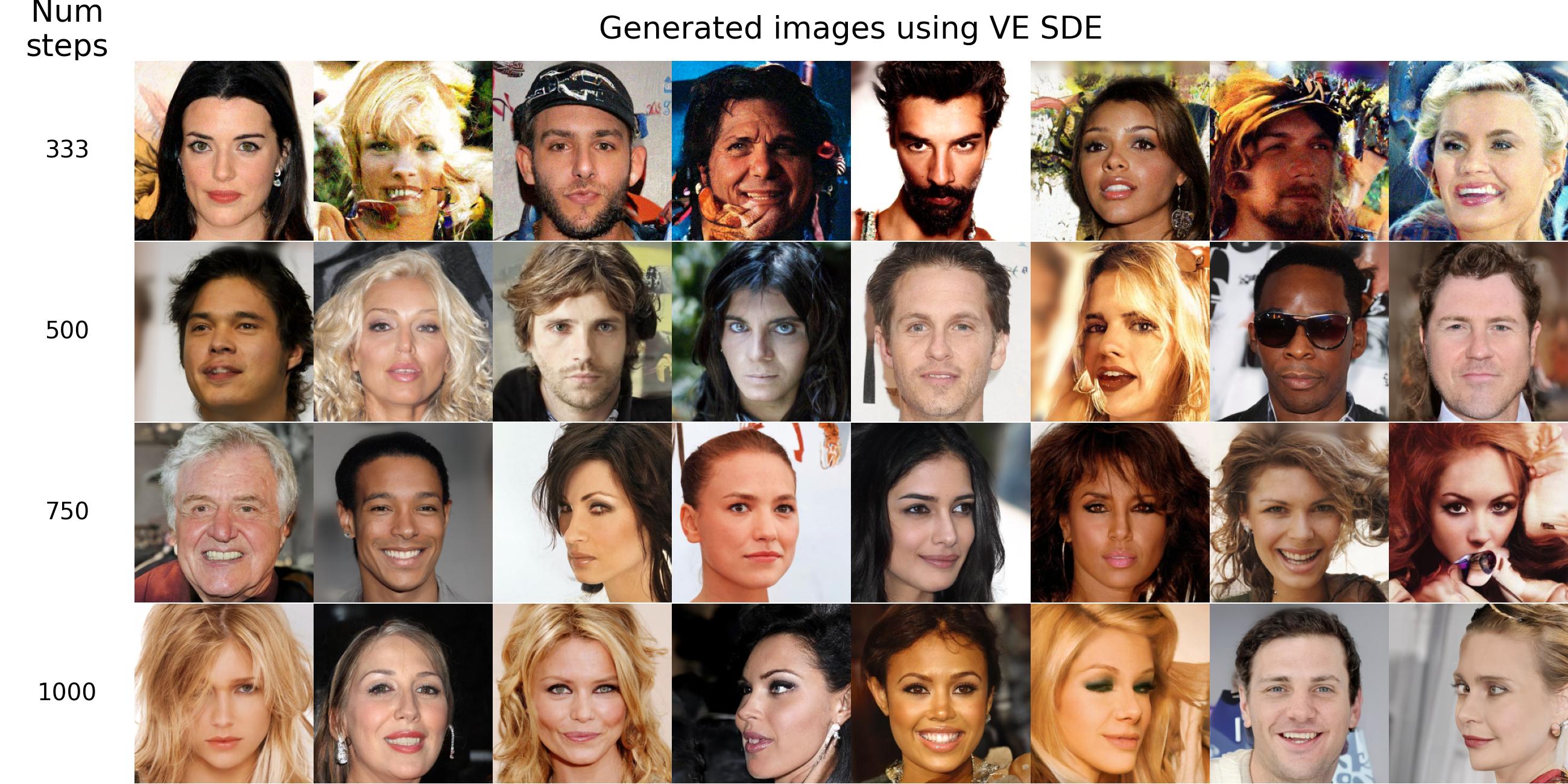}
    \caption{or different number of discretization steps, we show images generated by the VE SDE from \cite{song2021scorebasedgenerativemodelingstochastic} pretrained on the CelebA-HQ dataset \cite{huggingface_ncsnpp_celebahq_256}. Samples with small number of steps are lacking in quality, but not in diversity. As we increase the number of steps, the quality improves.}
    \label{fig:ve_sde}
\end{figure}

\end{document}


\setcounter{lem}{1}
\setcounter{thm}{3}
\setcounter{prop}{2}
\setcounter{equation}{10} 
%

%

\aistatstitle{Optimizing Noise Schedules of Generative Models in High Dimensions}
\aistatsauthor{Santiago Aranguri \And Giulio Biroli \And  Marc Mézard \And Eric Vanden-Eijnden}

\onecolumn
\appendix

\section{General results}
We will need the following results that follow immediately from  \cite{albergo2023stochasticinterpolantsunifyingframework} (Appendix A)
\begin{lem}
\label{lem:gen:gen} Let $z\sim \mathcal{N}(0, \text{Id}_{d})$ and $a\sim p\mathcal{N}(r,\sigma^2\text{Id})+(1-p)\mathcal{N}(-r,\sigma^2\text{Id})$. Then the law of the interpolant $I_{\tau}=c\alpha_{\tau}z+\beta_{\tau}a$ coincides with the law of the solution of the probability flow ODE
\begin{equation}
\begin{aligned}\dot{X}_{\tau} & =\frac{c^2\alpha_{\tau}\dot{\alpha}_{\tau}+\sigma^2\beta_{\tau}\dot{\beta}_{\tau}}{c^2\alpha_{\tau}^{2}+\sigma^2\beta_{\tau}^{2}}X_{\tau} 
+ \frac{c^2\alpha_{\tau}(\alpha_{\tau}\dot{\beta}_{\tau}-\dot{\alpha}_{\tau}\beta_{\tau})}{c^2\alpha_{\tau}^{2}+\sigma^2\beta_{\tau}^{2}} r\tanh\left(h+\frac{\beta_{\tau} r\cdot X_{\tau}}{c^2\alpha_{\tau}^{2}+\sigma^2\beta_{\tau}^{2}}\right),\quad X_{0}\sim{\mathcal{N}}(0,c^2\text{Id}_{d})\end{aligned}
\label{eq:b:gmm:2m}
\end{equation}
where $h$ is such that $e^{h}/(e^{h}+e^{-h})=p.$
\end{lem} 

\begin{lem}
    \label{lem:g:to:g}
    Let $z\sim \mathcal{N}(0,1)$ and $a\sim \mathcal{N}(0,\sigma^2).$ Then the law of the interpolant $I_{\tau}=\alpha_{\tau}z+\beta_{\tau}a$ coincides with the law of the solution of the probability flow ODE
    \begin{equation}
        \dot{X}_{\tau} =\frac{\alpha_{\tau}\dot{\alpha}_{\tau}+\sigma^2\beta_{\tau}\dot{\beta}_{\tau}}{\alpha_{\tau}^{2}+\sigma^2\beta_{\tau}^{2}}X_{\tau} 
        ,\quad X_{0}\sim{\mathcal{N}}(0,1).
    \end{equation}
\end{lem}

\begin{lem}
    \label{lem:g:to:sgmm}
    Let $z\sim \mathcal{N}(0,1)$ and $a\sim p\mathcal{N}(m\kappa,1) + (1-p)\mathcal{N}(-m\kappa,1).$ Then the law of the interpolant $I_{\tau}=\alpha_{\tau}z+\beta_{\tau}a$ coincides with the law of the solution of the probability flow ODE
    \begin{equation}
        \dot{X}_{\tau} =\frac{\alpha_{\tau}\dot{\alpha}_{\tau}+\sigma^2\beta_{\tau}\dot{\beta}_{\tau}}{\alpha_{\tau}^{2}+\sigma^2\beta_{\tau}^{2}}X_{\tau} + \frac{\alpha_{\tau}(\alpha_{\tau}\dot{\beta}_{\tau}-\dot{\alpha}_{\tau}\beta_{\tau})}{\alpha_{\tau}^{2}+\sigma^2\beta_{\tau}^{2}} \kappa mr\tanh\left(mh+\frac{\beta_{\tau} \kappa mr\cdot X_{\tau}}{\alpha_{\tau}^{2}+\sigma^2\beta_{\tau}^{2}}\right)
        ,\quad X_{0}\sim{\mathcal{N}}(0,1).
    \end{equation}
    where $h$ is such that $e^{mh}/(e^{mh}+e^{-mh})=p.$
\end{lem}

\begin{lem}
    \label{lem:g:to:sgmm2}
    Let $z\sim \mathcal{N}(0,\kappa^2)$ and $a\sim \left(\tfrac{1+m}{2}\right)\delta_1 + \left(\tfrac{1-m}{2}\right)\delta_{-1}.$ Then the law of the interpolant $I_{\tau}=(1-\tau)z+a$ coincides with the law of the solution of the probability flow ODE
    \begin{equation}
        \dot{X}_{\tau} = \frac{-X_\tau + \tanh\left(\beta m + \frac{X_\tau}{\kappa^2(1-\tau)^2}\right)}{1-\tau}
        ,\quad X_{0}\sim{\mathcal{N}}(0,\kappa^2)
    \end{equation}
    where $m$ and $\beta$ fulfill $\tanh(\beta m) = m.$
\end{lem}

\section{Non-dilated interpolants fail at capturing either $p$ or $\sigma^2$ for the GM}
In this section we prove Proposition 1 and 2 from the main text, showing that without time-dilation, the VP and VE interpolant fail at either capturing $p$ or $\sigma^2.$
\begin{proof}[Proof of Proposition 1]
    \textit{(VP does not capture $p.$)} Consider the variance preserving interpolant $I_\tau = \alpha_\tau z+\beta_\tau a$ with $z\sim \mathcal{N}(0,\text{Id})$ and $a\sim \mu.$ Let $X_\tau$ be the solution of the probability flow ODE associated with $I_\tau$ given by Lemma \ref{lem:gen:gen}. If we let $M_\tau=r\cdot X_\tau/{d},$ we have for $\tau \in[0,1]$
    \begin{equation}
    \dot M_\tau= \frac{\alpha_{\tau}\dot{\alpha}_{\tau}+\sigma^2\beta_{\tau}\dot{\beta}_{\tau}}{\alpha_{\tau}^{2}+\sigma^2\beta_{\tau}^{2}}M_{\tau} 
    + \frac{\alpha_{\tau}(\alpha_{\tau}\dot{\beta}_{\tau}-\dot{\alpha}_{\tau}\beta_{\tau})}{\alpha_{\tau}^{2}+\sigma^2\beta_{\tau}^{2}} \tanh\left(h+\frac{\beta_{\tau}dM_{\tau}}{\alpha_{\tau}^{2}+\sigma^2\beta_{\tau}^{2}}\right)
    \label{eq:b:gmm:2m}
    \end{equation}
    with $M_{0}\sim{\mathcal{N}}(0,1/d).$ 
    
    For $\tau>0$, we get that for $d$ large, since $dM_\tau =\Omega(\sqrt{d})$
    \begin{align}
        \label{eq:mu:2}  
        \dot M_{\tau}= \frac{\alpha_{\tau}\dot{\alpha}_{\tau}+\sigma^2\beta_{\tau}\dot{\beta}_{\tau}}{\alpha_{\tau}^{2}+\sigma^2\beta_{\tau}^{2}}M_{\tau} 
        + \frac{\alpha_{\tau}(\alpha_{\tau}\dot{\beta}_{\tau}-\dot{\alpha}_{\tau}\beta_{\tau})}{\alpha_{\tau}^{2}+\sigma^2\beta_{\tau}^{2}} \sgn(M_{\tau}),
    \end{align}
    Fix $\tau_0=\Theta_d(1)$ positive. If we run this equation from $\tau=\tau_0$ til $\tau=1,$ the sign of $M_{\tau}$ will be preserved. This follows because since $\alpha_\tau(\alpha_{\tau}\dot{\beta}_{\tau}-\dot{\alpha}_{\tau}\beta_{\tau}) > 0$ for $\tau\in [0,1)$ means that whenever $M_{\tau}=o_d(1)$ the second term in the RHS of \eqref{eq:mu:2} will dominate implying that $\sgn(\dot M_{\tau}) = \sgn(M_{\tau}).$
    
    For $\tau=0,$ we have
    \begin{align}
        \dot{X}_{0} & = \dot \alpha_0 X_0 + \dot\beta_0 r\tanh(h).
    \end{align}
    If we integrate this ODE with a step of size $\epsilon = \Theta_d(1),$ we get 
    $$X_\epsilon = \epsilon (\dot \alpha_0 X_0 + \dot\beta_0 r\tanh(h))+X_0$$
    This means that $M_\epsilon = \dot\beta_0\epsilon  \tanh(h)+O_d(1/\sqrt{d}).$ Since after this step the sign of $M_{\tau}$ will be preserved, we have that as $d\to\infty$
    \begin{itemize}
        \item $p>1/2\implies h>0\implies$ all samples will go to the $+r$ mode.
        \item $p=1/2\implies h=0\implies$ half of the samples will go to $+r$ and half to $-r.$
        \item $p<1/2\implies h<0\implies$ all samples will go to the $-r$ mode.
    \end{itemize}
    
    \textit{(VP captures $\sigma^2$)} Let $X^\perp_\tau = X^{\Delta t}_\tau - M^{\Delta t, d}_\tau r.$ We note the problem is symmetric in the orthogonal complement of $r,$ so we expect $X^\perp_\tau$ to be the right object to look at. We have again from Lemma \ref{lem:gen:gen} that 
    \begin{align}
        \label{eq:x_2}
        \dot X^\perp_\tau= \frac{\alpha_{\tau}\dot{\alpha}_{\tau}+\sigma^2\beta_{\tau}\dot{\beta}_{\tau}}{\alpha_{\tau}^{2}+\sigma^2\beta_{\tau}^{2}}X^\perp_{\tau},\quad X^\perp_0\sim \mathcal{N}(0,\text{Id}_{d-1})
    \end{align}
    We note that for any $d$ and discretization step size $\Delta \tau,$ we have that $X^\tau_{k\Delta \tau}$ will remain Gaussian. We then have 
    \begin{align*}
        X^{\perp}_\tau \sim \mathcal{N}(0,\left( \sigma^{\Delta \tau}_\tau\right)^2\text{Id}_{d-1}).
    \end{align*}
    To determine $\sigma^{\Delta \tau}_\tau,$ we look at one coordinate $i\in \{1,\cdots, d-1\}$ and note that the resulting velocity field is independent of $d$. Also, Lemma \ref{lem:g:to:g} shows that this is the $1$-dimensional velocity field corresponding to the interpolant $I_\tau=\alpha_\tau z + \beta_\tau a$ that transports $z\sim\mathcal{N}(0, 1)$ to $a\sim {\mathcal{N}}(0, \sigma^2)$ so we are done.
\end{proof}

\begin{proof}[Proof of Proposition 2]
    \textit{(VE captures $p.$)}
    Let $I_\tau$ be the variance exploding interpolant $I_\tau = \sqrt{d}\alpha_\tau z + \beta_\tau a$ with $z\sim \mathcal{N}(0,\text{Id})$ and $a\sim \mu.$ Let $X_\tau$ be the solution of the probability flow ODE associated with $I_\tau$ given by Lemma \ref{lem:gen:gen}. 
    For $\tau \in [0,1],$ we have
    \begin{equation}
        \dot{M}_{\tau} =\frac{d\alpha_{\tau}\dot{\alpha}_\tau+\sigma^2\beta_\tau\dot{\beta}_\tau}{d\alpha_\tau^{2}+\sigma^2\beta_\tau^{2}}M_\tau 
        + \frac{d\alpha_\tau(\alpha_\tau\dot{\beta}_\tau-\dot{\alpha}_\tau\beta_\tau)}{d\alpha_\tau^{2}+\sigma^2\beta_\tau^{2}}  r\tanh\left(h+\frac{\beta_\tau dM_\tau}{d\alpha_\tau^{2}+\sigma^2\beta_\tau^{2}}\right),\quad M_{0}\sim{\mathcal{N}}(0,1).
        \label{eq:mag:ve}
    \end{equation}
    Taking the $d\to\infty$ limit of this equation gives for $t\in[0,1)$
    \begin{equation}
        \dot{M}_\tau = \frac{\dot\alpha_\tau}{\alpha_\tau}M_\tau+ \frac{\alpha_\tau\dot\beta_\tau-\dot\alpha_\tau\beta_\tau}{\alpha_\tau}\tanh\left(h+\frac{\beta_\tau M_\tau}{\alpha^2_\tau}\right).
        \label{eq:final:m_t2}
    \end{equation}
    Hence we get a well-defined equation for the magnetization. Define the speciation time $\tau_s$ as the time in the generative process after which the mode of the samples is determined. Since $M_t$ determines the mode of the samples, and we obtained a $d$-independent limiting equation for $M_t,$ we get that $\tau_s \in (0,1)$ as $d$ to $\infty.$ Moreover, from Lemma \ref{lem:gen:gen} we know that the ODE for $M_t$ from equation \eqref{eq:final:m_t2} corresponds to the 1-dimensional velocity field that transports ${\mathcal{N}}(0,1)$ to $p{\mathcal{N}}(1,0) + (1-p){\mathcal{N}}(-1, 0).$
    
\textit{(VE does not capture $\sigma^2$.)}
    We have 
    \begin{equation}
        \dot{X}^\perp_\tau =\frac{d\alpha_\tau\dot{\alpha}_\tau+\sigma^2\beta_\tau\dot{\beta}_\tau}{d\alpha_\tau^{2}+\sigma^2\beta_\tau^{2}}X^\perp_\tau,\quad X_{0}\sim{\mathcal{N}}(0,d\text{Id}_{d-1}).
        \label{eq:mag:ve}
    \end{equation}
    As in the proof of Proposition 1, we see that $X^\perp_\tau$ is Gaussian for any $d$ and $\Delta t.$ Let $w_1,\cdots,w_{d-1}$ be an orthonormal basis of the complement of $r.$ Then for $i\in \{1,\cdots, d-1\}$ with $\nu^i_t = w_i\cdot X_t / \sqrt{d}$ we have
    \begin{equation}
        \dot \nu^i_t = \frac{d\alpha_\tau\dot{\alpha}_\tau+\sigma^2\beta_\tau\dot{\beta}_\tau}{d\alpha_\tau^{2}+\sigma^2\beta_\tau^{2}}\nu_t^i,\quad \nu^i_t \sim{\mathcal{N}}(0,1).
    \end{equation}
    In the $d$ limit, we get 
    \begin{equation}
        \dot \nu^i_t = \frac{\dot{\alpha}_\tau}{\alpha_\tau}\nu_t^i,\quad \nu^i_t \sim{\mathcal{N}}(0,1).
    \end{equation}
    From Lemma \ref{lem:g:to:g} we know that this ODE transports $\mathcal{N}(0,1)$ to $\delta_0.$\\
\end{proof}

\section{Speciation time for the VP interpolant}
In this section, we state and prove Proposition \ref{prp:spec:vp}, showign that without time dilating, the VP interpolant has a speciation time that goes to zero as $d$ goes to infinity.

\begin{prop}
    \label{prp:spec:vp}
    Consider the variance preserving interpolant $I_\tau=\alpha_\tau z + \beta_\tau a$ where $z\sim \mathcal{N}(0,\text{Id})$ and $a \sim p\mathcal{N}(r,\sigma^2\text{Id})+(1-p)\mathcal{N}(-r,\sigma^2\text{Id})$ and let $X_\tau$ be its associated probability flow ODE from equation (1) in the main text. Then, the speciation time $\tau_s$ (i.e. the time in the generative process after which the mode of the samples is determined) goes to zero as $d$ goes to infinity. In particular, the interpolant $(1-\tau)z+\tau a$ has speciation time $1/\sqrt{d}.$
\end{prop}
\begin{proof}
    We have from Lemma \ref{lem:gen:gen} that $X_\tau$ fulfills the ODE
    \begin{equation}
        \begin{aligned}\dot{X}_{\tau} & =\frac{\alpha_{\tau}\dot{\alpha}_{\tau}+\sigma^2\beta_{\tau}\dot{\beta}_{\tau}}{\alpha_{\tau}^{2}+\sigma^2\beta_{\tau}^{2}}X_{\tau} 
        + \frac{\alpha_{\tau}(\alpha_{\tau}\dot{\beta}_{\tau}-\dot{\alpha}_{\tau}\beta_{\tau})}{\alpha_{\tau}^{2}+\sigma^2\beta_{\tau}^{2}} r\tanh\left(h+\frac{\beta_{\tau} r\cdot X_{\tau}}{\alpha_{\tau}^{2}+\sigma^2\beta_{\tau}^{2}}\right),\quad X_{0}\sim{\mathcal{N}}(0,\text{Id}_{d}).
        \label{eq:x:ode:prp}
        \end{aligned}
    \end{equation}
    Let $\mu_\tau=r\cdot X_\tau/\sqrt{d}.$ We then have      
    \begin{equation}
    \mu_{\tau}\stackrel{d}{=}\alpha_\tau Z+\sqrt{d}\beta_\tau m\label{eq:mag:van:1}
    \end{equation}
    where $Z\sim \mathcal{N}(0,1)$ and $m:=r\cdot a/d =\Theta_{d}(1).$ Let us calculate $\tau_0,$ the time where the terms $\alpha_\tau Z$ and $\sqrt{d}\beta_\tau m$ are of the same order. Since $Z$ and $m$ are $\Theta_d(1),$ we are interested in finding $\tau$ such that $\alpha_\tau \approx \sqrt{d}\beta_\tau$, or equivalently $\frac{1}{\sqrt{d}} \approx \frac{\beta_\tau}{\alpha_\tau}.$ Since ${\beta_0/\alpha_0}=0,$ $(\beta_\tau/\alpha_\tau)'>0,$ and $\alpha_\tau,\beta_\tau$ are independent of $d,$ we get 
    \begin{align}
        \tau_0=o_d(1).
    \end{align}
    We have for $\tau \ll \tau_0$ 
    \begin{equation}
    \sqrt{d}\beta_\tau m\ll\alpha_\tau Z,\label{eq:hat_m_t:dominates:tm}
    \end{equation}
    and for $\tau \gg\tau_0,$ 
    \begin{equation}
    \sqrt{d}\beta_\tau m\gg\alpha_\tau Z.\label{eq:tm:dominates:hat_m_t}
    \end{equation}
    This means there is a transition in what term dominates in $\mu_{\tau}$ at $\tau=\tau_0.$ This will imply, as formalized below, that each sample will \textit{speciate} to one of the two modes for $\tau \approx\tau_0,$ and it will remain in that mode for $\tau\gg\tau_0.$

    From equation \eqref{eq:x:ode:prp} we get
    \begin{equation}
        \begin{aligned}\dot{\mu}_{\tau} & =\frac{\alpha_{\tau}\dot{\alpha}_{\tau}+\sigma^2\beta_{\tau}\dot{\beta}_{\tau}}{\alpha_{\tau}^{2}+\sigma^2\beta_{\tau}^{2}}\mu_{\tau} 
        + \frac{\alpha_{\tau}(\alpha_{\tau}\dot{\beta}_{\tau}-\dot{\alpha}_{\tau}\beta_{\tau})}{\alpha_{\tau}^{2}+\sigma^2\beta_{\tau}^{2}} \sqrt{d}\tanh\left(h+\frac{\beta_{\tau} \sqrt{d}\mu_\tau}{\alpha_{\tau}^{2}+\sigma^2\beta_{\tau}^{2}}\right),\quad \mu_{0}\sim{\mathcal{N}}(0,1).\end{aligned}
    \end{equation}
    We write $\mu_\tau$ in terms of a potential $\dot \mu_\tau=-\partial_\mu V_\tau(\mu_\tau)$ with $V_\tau$
    \begin{align}
        V_\tau(\mu) = -\frac{\alpha_{\tau}\dot{\alpha}_{\tau}+\sigma^2\beta_{\tau}\dot{\beta}_{\tau}}{2(\alpha_{\tau}^{2}+\sigma^2\beta_{\tau}^{2})} \mu^2 -\frac{\alpha_{\tau}(\alpha_{\tau}\dot{\beta}_{\tau}-\dot{\alpha}_{\tau}\beta_{\tau})}{\beta_\tau} \log\cosh\left(h+\frac{\beta_{\tau} \sqrt{d}\mu}{\alpha_{\tau}^{2}+\sigma^2\beta_{\tau}^{2}}\right)
        \label{eq:potential}
    \end{align}
    
    Take $\tau\ll\tau_0.$ We have by equation \eqref{eq:hat_m_t:dominates:tm} that $\frac{\beta_{\tau} \sqrt{d}\mu_\tau}{\alpha_{\tau}^{2}+\sigma^2\beta_{\tau}^{2}} \approx \frac{\beta_{\tau} \sqrt{d}\alpha_\tau Z}{\alpha_{\tau}^{2}+\sigma^2\beta_{\tau}^{2}} \ll \frac{\alpha^2_\tau Z}{\alpha_{\tau}^{2}+\sigma^2\beta_{\tau}^{2}} = O_d(1).$ This means we can Taylor expand the $\log\cosh$ term in ${V_\tau}$ to get 
    \begin{align}
        V_\tau(\mu) = -\frac{\alpha_{\tau}\dot{\alpha}_{\tau}+\sigma^2\beta_{\tau}\dot{\beta}_{\tau}}{2(\alpha_{\tau}^{2}+\sigma^2\beta_{\tau}^{2})} \mu^2 -\frac{\alpha_{\tau}(\alpha_{\tau}\dot{\beta}_{\tau}-\dot{\alpha}_{\tau}\beta_{\tau})}{\alpha_{\tau}^{2}+\sigma^2\beta_{\tau}^{2}}\sqrt{d} \tanh\left(h\right)\mu + C_\tau.
    \end{align}
    This is a quadratic well shifted away from the origin which will generate the asymmetry in the relative weights of the modes.

    Take $\tau\gg\tau_0.$ We have by equation \eqref{eq:tm:dominates:hat_m_t} that $\frac{\beta_{\tau} \sqrt{d}\mu_\tau}{\alpha_{\tau}^{2}+\sigma^2\beta_{\tau}^{2}} \approx\frac{\beta_{\tau}^2 dm}{\alpha_{\tau}^{2}+\sigma^2\beta_{\tau}^{2}} \gg 1$ since either $\beta_\tau > \alpha_\tau$ and then $\frac{\beta_{\tau}^2 dm}{\alpha_{\tau}^{2}+\sigma^2\beta_{\tau}^{2}} = \Theta(d)$ or $\beta_\tau \leq \alpha_\tau$ and then $\tau\gg\tau_0$ implies $\beta_\tau \sqrt{d} \gg \alpha_\tau$ meaning that $\frac{\beta_{\tau}^2 dm}{\alpha_{\tau}^{2}+\sigma^2\beta_{\tau}^{2}} \gg \frac{\alpha^2_\tau m}{\alpha_{\tau}^{2}+\sigma^2\beta_{\tau}^{2}} = \Theta_d(1).$ This means we can approximate \eqref{eq:potential} as follows
    \begin{align}
        V_\tau(\mu) = -\frac{\alpha_{\tau}\dot{\alpha}_{\tau}+\sigma^2\beta_{\tau}\dot{\beta}_{\tau}}{2(\alpha_{\tau}^{2}+\sigma^2\beta_{\tau}^{2})} \mu^2 -\frac{\alpha_{\tau}(\alpha_{\tau}\dot{\beta}_{\tau}-\dot{\alpha}_{\tau}\beta_{\tau})\sqrt{d}}{\alpha_{\tau}^{2}+\sigma^2\beta_{\tau}^{2}} |\mu|.
    \end{align}
    This is a symmetric double well structure. Under this potential, the mode that each sample will belong to is determined since the beginning. The relative asymmetry of the modes given by $h$ (which is a function of $p$) does not appear in this potential anymore.
    
    We conclude that the speciation time $\tau_s=\tau_0=o_d(1).$ In particular, if $\alpha_\tau=1-\tau$ and $\beta_\tau = \tau,$ we get that $\tau_s=\tau_0=1/\sqrt{d}.$
\end{proof}

\begin{prop}
    Let $X^{\Delta \tau}_\tau$ be obtained from the probability flow ODE from equation (1) in the main text associated with the VP interpolant $I_\tau=(1-\tau)z+\tau a$ where  $z\sim \mathcal{N}(0,\text{Id})$ and $a \sim p\mathcal{N}(r,\sigma^2\text{Id})+(1-p)\mathcal{N}(-r,\sigma^2\text{Id}).$ Consider running this ODE with a uniform grid with step size $\Delta \tau(d) = o(\sqrt{d}).$ Let $M_\tau^{\Delta \tau, d}=r\cdot X_\tau^{\Delta \tau}/d.$ Then
    \begin{align*}
        \lim_{d\to\infty} M^{\Delta \tau, d}_1 \stackrel{d}{=} \hat p \delta_1 + (1-\hat p) \delta_{-1}
    \end{align*}
    where
    \begin{align*}
        \hat p =
        \begin{cases}
            1 \qquad \text{ if } p>1/2\\
            1/2 \quad \text{ if } p=1/2\\
            0 \qquad \text{ if } p<1/2\
        \end{cases}
    \end{align*}
\end{prop}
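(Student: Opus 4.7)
The plan is to formalize, in the Euler-discretized setting, the informal mechanism described at the end of the proof of Proposition 1 (``VP does not capture $p$''), combined with the speciation-time estimate $\tau_s = 1/\sqrt d$ from Proposition~\ref{prp:spec:vp}. The key observation is that the step-size hypothesis guarantees $\Delta\tau$ exceeds the $1/\sqrt d$ speciation window, so the very first Euler step already ``overshoots'' speciation and commits each trajectory to a sign.

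First I would specialize Lemma~\ref{lem:gen:gen} to $\alpha_\tau = 1-\tau$, $\beta_\tau = \tau$ and evaluate the drift at $\tau = 0$. Because $\beta_0 = 0$, the $\tanh$ argument reduces to $h$ and the velocity collapses to $v_0(X_0) = -X_0 + r\tanh(h)$. The first Euler step is therefore
\[
X_{\Delta\tau}^{\Delta\tau} = (1-\Delta\tau)X_0 + \Delta\tau\, r\tanh(h),
\]
and projecting onto $r/d$, using $|r|^2 = \Theta(d)$ and $r\cdot X_0/d = O_P(1/\sqrt d)$, yields
\[
M_{\Delta\tau}^{\Delta\tau,d} = \Delta\tau\,\tanh(h) + O_P(1/\sqrt d).
\]

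Next I would show that the sign of $M_\tau^{\Delta\tau,d}$ is preserved on all subsequent discrete steps up to $\tau = 1$. This is a discrete version of the potential argument in Proposition~\ref{prp:spec:vp}: for $\tau \gg 1/\sqrt d$ the effective potential $V_\tau$ is a symmetric double well (the $\tanh$ argument saturates, so the $h$-dependent asymmetry drops out), and a trajectory entering this regime with positive (resp.\ negative) magnetization descends into the $+$ (resp.\ $-$) well and is driven to $\pm 1$. The discrete analogue requires a Gronwall-type bound on the local truncation error, using the Lipschitz constant of the velocity field on the relevant bounded set together with the strong monotonic restoring drift of the double well away from the origin.

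Finally I would conclude by cases on $p$. For $p > 1/2$, $\tanh(h) > 0$, so the step-size hypothesis forces $\Delta\tau\,\tanh(h)$ to dominate the $O_P(1/\sqrt d)$ fluctuation, giving $\sgn(M_{\Delta\tau}^{\Delta\tau,d}) = +1$ with probability tending to $1$, and hence $M_1^{\Delta\tau,d} \to 1$; the case $p < 1/2$ is symmetric. For $p = 1/2$, $h = 0$ so the bias vanishes and $M_{\Delta\tau}^{\Delta\tau,d}$ inherits the sign of the symmetric Gaussian $r \cdot X_0/d$, which is $\pm 1$ with probability $1/2$ each, producing the equal mixture. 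The main obstacle is the sign-preservation step: one has to close the gap between the continuous-time potential picture of Proposition~\ref{prp:spec:vp} and the discrete Euler iteration across $\Theta(1/\Delta\tau)$ steps without allowing accumulated discretization errors to carry a trajectory across the origin of the double well.
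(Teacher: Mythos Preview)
Your proposal is correct and follows essentially the same line as the paper's proof: compute the first Euler step exactly to get $M_{\Delta\tau}=\Delta\tau\,\tanh(h)+O(1/\sqrt d)$, then argue sign preservation for all later discrete times, then split into the three cases on $p$. The only cosmetic difference is in the sign-preservation step: the paper does not invoke the potential picture of Proposition~\ref{prp:spec:vp} or a Gronwall/Lipschitz error bound, but simply observes that for every positive grid point $\tau=k\Delta\tau$ the argument of the $\tanh$ satisfies $\tau d M_\tau\to\infty$, so $\tanh$ may be replaced by $\sgn(M_\tau)$ directly in the discrete iteration, and in the resulting recursion the $\sgn$ term dominates whenever $M_\tau$ is small, forcing the sign to persist. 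Your more careful discrete-error scheme would work too, but the paper's argument is the lighter version of the same idea.
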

\begin{proof}
    We proceed similarly to the proof of Proposition 1. Let $M_\tau=r\cdot X_\tau/{d},$ then we have for $\tau \in[0,1]$
    \begin{equation}
    \dot M_\tau= \frac{(1+\sigma^2)\tau-1}{(1-\tau)^{2}+\sigma^2{\tau}^{2}}M_{\tau} 
    + \frac{1-\tau}{(1-\tau)^{2}+\sigma^2{\tau}^{2}} \tanh\left(h+\frac{{\tau}dM_{\tau}}{(1-\tau)^{2}+\sigma^2{\tau}^{2}}\right)
    \label{eq:b:gmm:2m}
    \end{equation}
    with $M_{0}\sim{\mathcal{N}}(0,1/d).$ 
    
    For $\tau=k\Delta \tau$ with $k$ positive integer, we get that $\tau dM_\tau$ goes to infinity as $d$ grows. This means that for $d$ large
    \begin{align}
        \label{eq:mu:2}  
        \dot M_\tau= \frac{(1+\sigma^2)\tau-1}{(1-\tau)^{2}+\sigma^2{\tau}^{2}}M_{\tau} 
    + \frac{1-\tau}{(1-\tau)^{2}+\sigma^2{\tau}^{2}} \sgn(M_\tau).
    \end{align}
    Fix $\tau_0=k\Delta \tau$ with $k$ positive integer again. If we run this equation from $\tau=\tau_0$ til $\tau=1,$ the sign of $M_{\tau}$ will be preserved. This follows because whenever $M_{\tau}=o_d(1)$ the second term in the RHS of \eqref{eq:mu:2} will dominate implying that $\sgn(\dot M_{\tau}) = \sgn(M_{\tau}).$
    
    For $\tau=0,$ we have
    \begin{align}
        \dot{X}_{0} & = -X_0 + r\tanh(h).
    \end{align}
    If we integrate this ODE with step size $\Delta \tau$ we get 
    $$X_{\Delta \tau} = \Delta \tau (- X_0 + r\tanh(h))+X_0$$
    This means that $M_{\Delta \tau} = \Delta \tau \tanh(h)+O_d(1/\sqrt{d}).$ Since after this step the sign of $M_{\tau}$ will be preserved, we have that as $d\to\infty$
    \begin{itemize}
        \item $p>1/2\implies h>0\implies$ all samples will go to the $+r$ mode.
        \item $p=1/2\implies h=0\implies$ half of the samples will go to $+r$ and half to $-r.$
        \item $p<1/2\implies h<0\implies$ all samples will go to the $-r$ mode.
    \end{itemize}
\end{proof}

\section{Dilated interpolants capture $p$ and $\sigma^2$ for the GM}
In this section, we prove Theorems 1 and 2 from the main text which show that the dilated VP and VE interpolant can recover $p$ and $\sigma^2$ when time-dilated.

\begin{proof}[Proof of Theorem 1]
    Consider the dilated variance preserving interpolant $I^P_t = (1-\tau_t)z + \tau_t a$ where $z\sim \mathcal{N}(0,\text{Id}),$ $a\sim \mu,$ and $\tau_t$ is given in equation (4) in the main text. Plugging in $\alpha_t=1-\tau_t$ and $\beta_t=\tau_t$ into the velocity field given by Lemma \ref{lem:gen:gen} yields 
    \begin{align}
    \label{eq:gen:dil3}
    \dot X_t= \frac{-(1-\tau_t)\dot\tau_t+\sigma^2\tau_t\dot\tau_t}
    {(1-\tau_t)^{2}+\sigma^2\tau_t^2}X_{t} 
    + \frac{(1-\tau_t)\dot\tau_t}{(1-\tau_t)^{2}+\sigma^2\tau_t^2} r\tanh\left(h+\frac{\tau_{t} r\cdot X_{t}}{(1-\tau_t)^{2}+\sigma^2\tau_t^2}\right)    
    \end{align}
    \textbf{First phase.} For $t\in [0,1/2],$ we have $\tau_t = \frac{2 \kappa t}{\sqrt{d}}.$ Plugging in into equation \eqref{eq:gen:dil3} gives
    \begin{align}
        \dot{X}_{t} & = -\frac{2\kappa}{\sqrt{d}}X_t +\frac{2\kappa}{\sqrt{d}} r\tanh\left(h+2\kappa t \frac{r\cdot X_{t}}{\sqrt{d}}\right) + O\left(\frac{1}{d}\right).
    \end{align}
    We then have with $\mu_t=r\cdot X_t/\sqrt{d},$
    \begin{align}
        \label{eq:vp:1st:mu}
        \dot{\mu}_{t} & = 2\kappa\tanh\left(h+2\kappa t \mu_t\right) + O\left(\frac{1}{\sqrt{d}}\right).
    \end{align}
    Taking $d\to\infty$ yields the limiting ODE for the $\mu_t$ in the first phase. By reparameterizing time $t(s)=s/2$ with $t:[0,1]\to[0,1/2],$ we get from Lemma \ref{lem:g:to:sgmm} (with $m=1$) that this the $1$-dimensional velocity field associated to the interpolant $I_s=\sqrt{1-s^2}z+sa$ that transports $z\sim{\mathcal{N}}(0,1)$ at $t(s=0)=0$ to $a\sim p{\mathcal{N}}(\kappa,1) + (1-p){\mathcal{N}}(-\kappa, 1)$ at $t(s=1)=1/2.$

    Let $X^\perp_t = X^{\Delta t}_t - \frac{r\cdot X^{\Delta t}_t}{{d}}r.$ We have from equation \eqref{eq:gen:dil3}
    \begin{align}
        \label{eq:x_perp}
        \dot{X}^\perp_{t} & = \frac{-(1-\tau_t)\dot\tau_t+\sigma^2\tau_t\dot\tau_t}
    {(1-\tau_t)^{2}+\sigma^2\tau_t^2} X^\perp_t.
    \end{align}
    Since this is a linear ODE with initial condition Gaussian, we have 
    \begin{align}
        \dot{X}^\perp_{t} \sim \mathcal{N}\left(0, \left(\hat \sigma^{\Delta t, d}_t\right)^2\text{Id}_{d-1}\right).
    \end{align}
    for any $d$ and $\Delta t.$ Further, using equation \eqref{eq:x_perp} with $\tau_t=\frac{2\kappa t}{\sqrt{d}}$ gives $\dot{X}^\perp_{t} = O({1/\sqrt{d}})$ meaning that for $t\in[0,1/2]$
    \begin{align}
        \lim_{\Delta t\to 0}\lim_{d \to\infty} \sigma^{\Delta t, d}_t =1. 
    \end{align}
    \newline
    \textbf{Second phase.} For $t\in[1/2,1],$ we have $\tau_t = \left(1-\frac{\kappa}{\sqrt{d}}\right)(2t-1)+\frac{\kappa}{\sqrt{d}}.$ Using equation \eqref{eq:gen:dil} again gives
    \begin{align}
        \label{eq:gen:dil:2nd}
        \dot X_t= \frac{-(1-t)+\sigma^2(t-\tfrac{1}{2})}
        {(1-t)^2+\sigma^2(t-\tfrac{1}{2})^2}X_{t} 
        + \frac{(1-t) r\tanh\left(h+\frac{(2t-1) r\cdot X_{t}+\kappa \frac{r\cdot X_t}{\sqrt{d}}}{(2-2t)^2+\sigma^2(2t-1)^2}\right)  }{(1-t)^2+\sigma^2(t-\tfrac{1}{2})^2} + O\left(\frac{1}{d}\right).
    \end{align}
    
    Writing $\mu_t=\frac{r\cdot X_t}{\sqrt{d}},$ this implies
    \begin{align}
        \dot \mu_t= \frac{-(1-t)+\sigma^2(t-\tfrac{1}{2})}
        {(1-t)^2+\sigma^2(t-\tfrac{1}{2})^2}\mu_{t} 
        + \frac{\sqrt{d}(1-t)\tanh\left(h+\frac{(2t-1) \sqrt{d}\mu_t +\kappa  \mu_t}{(2-2t)^2+\sigma^2(2t-1)^2}\right)}{(1-t)^2+\sigma^2(t-\tfrac{1}{2})^2} + O\left(\frac{1}{\sqrt{d}}\right).
        \label{eq:ode:mu}
    \end{align}
    From equation \ref{eq:vp:1st:mu} of the first phase, we have that for finite $d$ and discretizing with a step size of $\Delta t,$ we get 
    \begin{align}
        \mu_{t=1/2} = \theta + O\left(\frac{1}{\sqrt{d}}\right) + o_{\Delta t}(1)
    \end{align}
    where $\theta \sim p\mathcal{N}(\kappa, 1) + (1-p)\mathcal{N}(-\kappa, 1)$ and the term $o_{\Delta t}(1)$ goes to zero as $\Delta t$ goes to zero independently of $d,$ since this error only comes from discretizing the $d$-independent ODE $\dot{\mu}_{t} = 2\kappa\tanh\left(h+2\kappa t \mu_t\right)$ with $\mu_{t=0}\sim \mathcal{N}(0,1).$

    At $t=1/2,$ the argument of the $\tanh$ is $h+\kappa\mu_{1/2}.$ Assume $\theta$ takes value on the $+\kappa$ mode. For $d$ large enough and $\Delta t$ small enough (independently of $d$) we have that $|\mu_{1/2} - \theta| < 1$. We also have that $h\ll \kappa \theta$ and hence $h\ll \kappa\mu_t,$ where both inequalities hold with probability going to $1$ as $\kappa$ goes to infinity. This means we can approximate the ODE for $\mu_t$ for $t=1/2$ as 
    \begin{align}
        \dot \mu_t= \frac{-(1-t)+\sigma^2(t-\tfrac{1}{2})}
        {(1-t)^2+\sigma^2(t-\tfrac{1}{2})^2}\mu_{t} 
        + \frac{\sqrt{d}(1-t)\sgn(\mu_t)}{(1-t)^2+\sigma^2(t-\tfrac{1}{2})^2} + O\left(\frac{1}{\sqrt{d}}\right).
        \label{eq:ode:mu2}
    \end{align}
    We note that this remains valid for $t>1/2$ since under the approximation we used in equation \eqref{eq:ode:mu2}, we have that $\mu_t$ is increasing. Indeed, whenever $\mu_t=o(\sqrt{d}),$ the second term in the RHS of \eqref{eq:ode:mu2} will dominate. If $b$ takes value on the $-\kappa$ mode instead, an analogous argument shows that \eqref{eq:ode:mu2} is also valid in that case.
    
    We then use this approximation in the ODEs for $X_t$ to get for $t\in(1/2, 1)$
    \begin{align}
        \label{eq:final:x_t}
        \dot X_t= \frac{-(1-t)+\sigma^2(t-\tfrac{1}{2})}
        {(1-t)^2+\sigma^2(t-\tfrac{1}{2})^2}X_{t} 
        + \frac{(1-t) r\sgn(M_t)}{(1-t)^2+\sigma^2(t-\tfrac{1}{2})^2} + O\left(\frac{1}{d}\right),
    \end{align}
    where $M_t=r\cdot X_t/d.$ This yields the limiting ODE for $M_t$ in the theorem statement. We recall that from the analysis of the first phase (after taking the limit first on $d\to\infty$ and then on $\Delta t\to 0$) we got 
    \begin{align}
        \mu_{t=1/2}\sim p\mathcal{N}(\kappa, 1) + (1-p)\mathcal{N}(-\kappa, 1).
    \end{align}
    We argued above that the sign of $\mu_t$ will be preserved for $t\in[1/2, 1]$ with probability going to $1$ as $\kappa$ tends to $\infty.$ This means that 
    \begin{align}
        M_1 = p^\kappa\delta_1 + (1-p^\kappa)\delta_{-1}
    \end{align}
    where $p^\kappa$ is such that $\lim_{\kappa\to \infty}p^\kappa=p.$ 
    
    Let $X^\perp = X^{\Delta t}_t - \frac{r\cdot X^{\Delta t}_t}{{d}}r$ and note that
    \begin{align}
        \label{eq:final:x_t2}
        \dot X^{\perp}_t= \frac{-(1-t)+\sigma^2(t-\tfrac{1}{2})}
        {(1-t)^2+\sigma^2(t-\tfrac{1}{2})^2}X^{\perp}_{t}.
    \end{align}
    Since this is a linear ODE from a Gaussian initial condition, we have
    \begin{align}
    X^\perp_t \sim \mathcal{N}\left(0, \left(\hat \sigma^{\Delta t, d}_t\right)^2\text{Id}_{d-1}\right).
    \end{align}
    Under the change of variables $t(s)=s/2+1/2,$ the equation \eqref{eq:final:x_t2} for $X_t^\perp$ becomes
    \begin{align}
        \label{eq:final:x_t3}
        \dot X^{\perp}_s= \frac{-(1-s)+\sigma^2s}
        {(1-s)^2+\sigma^2s^2}X^{\perp}_{s}.
    \end{align}
    By taking one coordinate $i\in\{1,\cdots, d-1\}$ of $X_s^\perp$ we get from Lemma \ref{lem:g:to:g} that this is the velocity field associated with the interpolant $I_s=(1-s)z+sa$ where $z\sim \mathcal{N}(0,1)$ is transported to $a\sim \mathcal{N}(0, \sigma^2).$ For fixed $s\in [0, 1],$ the interpolant $I_s$ has variance $(1-s)^2+\sigma^2s^2 = (2-2t)^2+\sigma^2(2t-1)^2$ as claimed.
\end{proof}

We now turn to the proof of Theorem 2 proving that the dilated variance exploding interpolant yields correct estimation of $p$ and $\sigma^2.$

\begin{proof}[Proof of Theorem 2] 
    Consider the variance exploding interpolant $I^E_t=\sqrt{d}\sqrt{1-\tau_t}z+\tau_t a$ with the time dilation given by equation $(5)$ in the main text. Plugging in $\alpha_t=1-\tau_t,$ $\beta_t=\tau_t,$ and $c=\sqrt{d}$ into the velocity field given by Lemma \ref{lem:gen:gen} yields 
    \begin{align}
        \label{eq:x:ve}
        \dot X_t= \frac{-d(1-\tau_t)\dot\tau_t+\sigma^2\tau_t\dot\tau_t}
        {d(1-\tau_t)^{2}+\sigma^2\tau_t^2}X_{t} 
        + \frac{d(1-\tau_t)\dot\tau_t}{d(1-\tau_t)^{2}+\sigma^2\tau_t^2} r\tanh\left(h+\frac{\tau_{t}r\cdot X_{t}}{d(1-\tau_t)^{2}+\sigma^2\tau_t^2}\right)    
    \end{align}
    \begin{align}
        \label{eq:m:ve}
        \dot M_t= \frac{-d(1-\tau_t)\dot\tau_t+\sigma^2\tau_t\dot\tau_t}
        {d(1-\tau_t)^{2}+\sigma^2\tau_t^2}M_{t} 
        + \frac{d(1-\tau_t)\dot\tau_t}{d(1-\tau_t)^{2}+\sigma^2\tau_t^2} \tanh\left(h+\frac{\tau_{t} dM_{t}}{d(1-\tau_t)^{2}+\sigma^2\tau_t^2}\right)    
    \end{align}
    where $X_0\sim \mathcal{N}(0, d\text{Id})$ and $M_0\sim \mathcal{N}(0, 1).$\\
    \\
    \textbf{First phase.} We consider $t\in[0,1/2],$ where $\tau_t=(1-\kappa/\sqrt{d})2t$ gives
    \begin{equation}
        \dot{M}_{t} = \frac{-M_t + \tanh\left(h + \frac{2t M_t}{(1-2t)^2}\right)}{\tfrac{1}{2}-t} + O\left(\frac{1}{\sqrt{d}}\right)
    \end{equation} 
    We hence get a well-defined equation for the magnetization. In fact, by reparameterizing time $t(s)=s/2$ with $t:[0,1]\to[0,1/2],$ we get from Lemma \ref{lem:gen:gen} that this the $1$-dimensional velocity field that transports ${\mathcal{N}}(0,1)$ at $t=0$ to $p\delta_1 + (1-p)\delta_{-1}$ at $t=1/2.$
    
    We let $X^\perp = X^{\Delta t}_t - \frac{r\cdot X^{\Delta t}_t}{{d}}r$ and note that equation \eqref{eq:x:ve} gives 
    \begin{align}
        \label{eq:x:ve:perp}
        \dot X^\perp_t= \frac{-d(1-\tau_t)\dot\tau_t+\sigma^2\tau_t\dot\tau_t}
        {d(1-\tau_t)^{2}+\sigma^2\tau_t^2}X_{t}^\perp
    \end{align}
    Since this is a linear ODE with Gaussian initial condition, $X^\perp_t$ will be Gaussian for every $t,$ even for nonzero $\Delta t.$ Let us determine its covariance. We decompose $X^\perp_t$ as $X^\perp_t=\sqrt{d}X^1_t + X^0_t.$ Plugging in $\tau_t=(1-\kappa/\sqrt{d})2t$ into equation \eqref{eq:x:ve} gives
    \begin{align}
        \sqrt{d}\dot{X}^1_t + \dot{X}^0_t &= \frac{-d\left(1-2t+\tfrac{2\kappa t}{\sqrt{d}}\right)\left(1-\tfrac{\kappa}{\sqrt{d}}\right)2+\sigma^2\left(1-\tfrac{\kappa}{\sqrt{d}}\right)^24t}
        {\left(\sqrt{d}(1-2t)+2\kappa t\right)^{2}+\sigma^2\left(1-\tfrac{\kappa}{\sqrt{d}}\right)^24t^2}\left(\sqrt{d}{X}^1_t + {X}^0_t\right).
    \end{align}
    Taylor expanding the RHS in powers of $\sqrt{d}$ and matching terms of order $\sqrt{d}$ gives
    \begin{align}
        \dot{X}^1_t = \frac{-X^1_t}{\tfrac{1}{2}-t}
    \end{align}
    with $X_t^1 \sim \mathcal{N}(0,\text{Id}).$ This means that for $t\in[0,1/2]$ we have $X^1_t = X^1_{t=0} (1-2t).$ Matching terms of constant order 
    \begin{align}
        \dot{X}^0_t &= -\frac{1}{\tfrac{1}{2}-t}X^0_t + \frac{2\kappa}{(1-2t)^2}X^1_t\\
        &= \frac{-X^0_t+\kappa X^1_{t=0}}{\tfrac{1}{2}-t}.
    \end{align}
    From here we conclude that $X^0_{t=1/2}=\kappa X^1_{t=0}.$ We then have 
    \begin{align}
        X_t^{\perp}\sim \mathcal{N}\left(0, d\left(\hat \sigma^{\Delta t, d}_t\right)^2\text{Id}_{d-1}\right).
    \end{align}
    where $\lim_{\Delta t\to 0}\lim_{d \to\infty} \sigma^{\Delta t, d}_t= 1-2t.$
    \\
    \textbf{Second phase.} We now consider $t\in[1/2, 1].$ Using the definition of $\tau_t$, we get from equation \eqref{eq:x:ve} that for $d$ large,
    \begin{equation}
        \dot{X}_{t}  = -\frac{2\kappa^2(2-2t)}{\kappa^2(2-2t)^2+\sigma^2}X_t+\frac{2k^2(2-2t)}{\kappa^2(2-2t)^2+\sigma^2} r\sgn(M_t).
    \end{equation}
    In particular, 
    \begin{equation}
        \dot{M}_{t}  = -\frac{2\kappa^2(2-2t)}{\kappa^2(2-2t)^2+\sigma^2}M_t+\frac{2k^2(2-2t)}{\kappa^2(2-2t)^2+\sigma^2} \sgn(M_t).
    \end{equation} 
    Since $M_{t=1/2}$ is either $1$ or $-1,$ we see that it will remain constant in the second phase. On the other hand, we have
    \begin{equation}
        \dot{X}^\perp_{t}  = -\frac{2\kappa^2(2-2t)}{\kappa^2(2-2t)^2+\sigma^2}X^\perp_t.
    \end{equation}
    Solving explicitly gives for $t\in[1/2, 1]$
    \begin{align}
        {X}^\perp_{t} = {X}^\perp_{1/2} \sqrt{\frac{\kappa^2(2-2t)^2+\sigma^2}{\kappa^2+\sigma^2}}.
    \end{align}
    From the analysis of the first phase, we know that 
    \begin{align}
        X^\perp_{1/2} \sim \mathcal{N}\left(0, \left(\hat \sigma^{\Delta t, d}_t\right)^2\text{Id}_{d-1}\right).
    \end{align}
    where $\lim_{\Delta t\to 0}\lim_{d \to\infty} \hat \sigma^{\Delta t, d}_t = \kappa.$ We hence get for $t\in[1/2, 1]$ that 
    \begin{align}
        X^\perp_{t} \sim \mathcal{N}\left(0, \left(\hat \sigma^{\Delta t, d}_t\right)^2\text{Id}_{d-1}\right).
    \end{align}
    where $\lim_{\Delta t\to 0}\lim_{d \to\infty} \hat \sigma^{\Delta t, d}_t = \kappa \sqrt{\frac{\kappa^2(2-2t)^2+\sigma^2}{\kappa^2+\sigma^2}}.$
\end{proof}

\section{Dilated interpolants capture both phases for CW model}
In this section, we prove Theorem 3 from the main text, showing that the dilated VE interpolant captures both phases for the CW distribution. We also state and prove Theorem 4, proving that the VP interpolant captures both phases. We will need the following lemma
\begin{lem}
    \label{lem:gen:den}
    For all $t\in [0,1]$, the law of the interpolant ${I}_{t}=c\alpha_t z + \beta_t a$ is the same as the law of ${X}_t,$ the solution to the probability flow ODE
    \begin{equation}   
        \label{eq:gen:y}
          \dot {{X}}_t = \frac{\dot\alpha_t}{\alpha_t} X_t + \frac{\alpha_t\dot\beta_t-\dot\alpha_t\beta_t}{\alpha_t}\mathbb{E}[a|{I}_t={X}_t], \qquad {X}_{t=0} \sim \mathcal{N}(0, c^2\text{Id})
    \end{equation}
\end{lem}
\begin{proof}
    This follows from combining the equations
    \begin{align}
        X_t = \alpha_t\mathbb{E}[z|{I}_t={X}_t] + \beta_t\mathbb{E}[a|{I}_t={X}_t]\\
        b_t(X_t) = \dot\alpha_t\mathbb{E}[z|{I}_t={X}_t] + \dot\beta_t\mathbb{E}[a|{I}_t={X}_t]
    \end{align}
    where the second equation follows from Theorem 2.6 in \cite{albergo2023stochasticinterpolantsunifyingframework}
\end{proof}
\begin{proof}[Proof of Theorem 3]
    We write $\eta_t(x):=\mathbb{E}[a|I^\text{E}_t=x]$ explicitly as
    \begin{equation}
        \eta^i_t(x) =\frac{pQ_{+}(x)\tanh\left(\beta m+\frac{\tau_t x_{i}}{d(1-\tau_t)^2}\right)+(1-p)Q_{-}(x)\tanh\left(-\beta m+\frac{\tau_t x_{i}}{d(1-\tau_t)^2}\right)}{pQ_{+}(x)+(1-p)Q_{-}(x)} 
    \end{equation}
    where
    \begin{equation}
        \label{eq:q:def}
        Q_{\pm}(x)=\prod_{i=1}^d\left[1\pm m\tanh\left(\frac{\tau_t x_{i}}{d(1-\tau_t)^2}\right)\right].
    \end{equation}

\textbf{First phase.} 
    For $t \in [0, 1/2],$ we have 
    \begin{align}
        \tanh\left(\beta m+\frac{\tau_t x_{i}}{d(1-\tau_t)^2}\right)&\approx \tanh(\beta m) = m \\
        Q_\pm(x) &\approx \exp\left(\pm m \frac{\tau_t}{d(1-\tau_t)^2} r\cdot x \right)
    \end{align}
    where $r=(1,\cdots,1)$ and we linearized the $\tanh$ to get the second equation. These approximations require $\frac{\tau_t x_i}{d(1-\tau_t)^2}$ to be small. We note that
    \begin{align}
        \frac{\tau_t x_i}{d(1-\tau_t)^2}=\frac{\tau_t \left(\sqrt{d}(1-\tau_t)z+\tau_t a\right)}{d(1-\tau_t)^2} \approx \frac{z}{\sqrt{d}(1-\tau_t)}.
    \end{align}
    Since for $t\in[0,1/2]$ we have $\sqrt{d}(1-\tau_t)\geq \kappa$, then $\frac{\tau_t x_i}{d(1-\tau_t)^2} = O\left(\frac{1}{\kappa}\right).$ Hence $\kappa$ regulates how good these approximations are, so that for $\kappa$ large enough, they are valid.
    
    Combining the two approximations and using that $p=e^{hm}/(e^{hm}+e^{-hm})$ gives
    \begin{equation}
        \label{eta:2nd:phase}
        \eta_t(x) = r m \tanh\left(mh+m \frac{\tau_t}{d(1-\tau_t)^2}r\cdot x\right) +o_\kappa(1)
    \end{equation}
    Lemma \ref{lem:gen:den} tells us that the law of the interpolant $I^E_t$ is the same as that of $X_t,$ the solution to the ODE
    \begin{equation}   
        \label{eq:gen:a}
        \dot {{X}}_t = \frac{\dot\tau_t}{1-\tau_t}(-X_t + \eta_t(X_t)), \qquad {X}_{t=0} \sim \mathcal{N}(0, d\text{Id}).
    \end{equation}
    Putting this together, and using $M_t=\frac{r\cdot X_t}{d}$ gives 
    \begin{align}   
        \label{eq:gen:m}
        \dot {{M}}_t = \frac{\dot\tau_t}{1-\tau_t}\left(-M_t + m \tanh\left(mh+m \frac{\tau_t}{(1-\tau_t)^2}M_t \right) +o_\kappa(1)\right)
    \end{align}
    with ${M}_{t=0} \sim \mathcal{N}(0, 1).$ Using the definition of $\tau_t,$ we get
    \begin{equation}   
        \label{eq:gen:m_t}
        \dot {{M}}_t = \frac{-M_t + m \tanh\left(mh+\frac{2tmM_t}{(1-2t)^2}\right) +o_\kappa(1)}{\tfrac{1}{2}-t} + O\left(\frac{1}{\sqrt{d}}\right).
    \end{equation}
    We hence get a well-defined equation for the magnetization. In fact, after taking the limits $\kappa\to \infty,\Delta t\to0,d\to\infty,$ by reparameterizing time $t(s)=s/2$ with $t:[0,1]\to[0,1/2],$ we get from Lemma \ref{lem:gen:gen} that this the $1$-dimensional velocity field that transports ${\mathcal{N}}(0,1)$ at $t=0$ to $p\delta_m + (1-p)\delta_{-m}$ at $t=1/2$ as desired.

    Now let $X^\perp_t=X_t-\frac{r\cdot X_t}{d} r$ and write $X^\perp_t=\sqrt{d}X_t^1+X_t^0$
    \begin{align}
        \label{eq:ode:aa}
        \sqrt{d}\dot {X}^1_t+\dot {X}^0_t = \frac{1-\tfrac{\kappa}{\sqrt{d}}}{\tfrac{1}{2}-t+\tfrac{\kappa t}{\sqrt{d}}}\left(-\sqrt{d}{X}^1_t-{X}^0_t + o_\kappa(1)\right)
    \end{align}
    Taylor expanding the RHS in powers of $\sqrt{d}$ and matching terms of order $\sqrt{d}$ gives
    \begin{align}
        \dot {X}^1_t=\frac{-X^1_t}{\tfrac{1}{2}-t}
    \end{align}
    where $X_t^1 \sim \mathcal{N}(0,\text{Id}).$ Hence for $t\in[0,1/2]$ we have $X^1_t = X^1_{t=0}(1-2t).$ We now match terms of constant order in \eqref{eq:ode:aa} to get
    \begin{align}
        \dot{X}^0_t &= \frac{-X^0_t+o_\kappa(1)}{\tfrac{1}{2}-t} + \frac{2\kappa X^1_t}{(1-2t)^2}\\
        &= \frac{-X^0_t+\kappa X^1_{t=0}+o_\kappa(1)}{\tfrac{1}{2}-t}.
    \end{align}
    We get $X^0_{t=1/2}=\kappa X^1_{t=0}+o_\kappa(1)$. Fix $w\perp r,$ $|w|=1.$ Since $X^1_t = X^1_{t=0}(1-2t),$  we then have for $t\in[0,1/2]$ that
    \begin{align*}
        \lim_{\kappa\to\infty}\lim_{\Delta t\to 0}\lim_{d \to\infty} \tfrac{1}{\sqrt{d}}w\cdot (X_t-(1-2t)X_{0}) = 0.
    \end{align*} \\
    \textbf{Second phase.} Consider $t\in[1/2, 1].$ Using Lemma \ref{lem:gen:den} again, we get that the law of $I^E_t$ is the same as that of $X_t,$ which solves the ODE
    \begin{equation}   
        \label{eq:gen:a}
        \dot {{X}}_t = \frac{\dot\tau_t}{1-\tau_t}(-X_t + \eta_t(X_t)).
    \end{equation}
    Using the definition of $\tau_t$
    \begin{equation}   
        \label{eq:gen:aa}
        \dot {{X}}_t = \frac{1}{1-t}(-X_t + \eta_t(X_t))
    \end{equation}

    For $\kappa$ large enough, we can approximate $Q_\pm$ for $t=1/2$ as
    $$
    Q_\pm(x) \approx \exp\left(\pm m \frac{\tau_{1/2}}{(1-\tau_{1/2})^2} M_{1/2} \right).
    $$
    We note that $\frac{\tau_{1/2}}{(1-\tau_{1/2})^2}>\frac{d}{\kappa^2}$ and recall from the analysis of the first phase that under the appropiate limits we get $M_{1/2}\sim p\delta_1+(1-p)\delta_{-1}.$ In particular, we have that for $d$ large, either $Q_+ \gg Q_-$ or $Q_- \gg Q_+.$ We will approximate the ODEs under the assumption that this holds for $t>1/2,$ i.e., either $Q_+ \gg Q_-$ or $Q_- \gg Q_+$ for all $t>1/2.$ The resulting ODEs will allow us to compute the value for $Q_\pm$ and check that indeed either $Q_+ \gg Q_-$ or $Q_- \gg Q_+$ for $t>1/2$ showing self-consistency and justifying the assumption. We first note that our assumption on $Q_\pm$ means that we can approximate
    \begin{align}
        \eta_t(X_t) = \tanh\left(\beta m \sgn(M_t)r + \frac{1}{\kappa^2(2-2t)^2}X_t\right).
    \end{align}
    where $\tanh$ is applied elementwise. Combining this with \eqref{eq:gen:aa} gives
    \begin{align}
        \label{eq:x:cw:2nd:b}
        \dot X_t = \frac{-X_t+\tanh\left(\beta m \sgn(M_t)r + \frac{1}{\kappa^2(2-2t)^2}X_t\right)}{1-t} + O\left(\frac{1}{{d}}\right)
    \end{align}

    Fix a coordinate $i\in\{1,\cdots, d\}$ and without loss of generality, consider $\sgn(M_t)=1.$ We have 
    
    \begin{align}
        \label{eq:x:cw:2nd:b22}
        \dot X^i_t = \frac{-X^i_t+\tanh\left(\beta m + \frac{1}{\kappa^2(2-2t)^2}X^i_t\right)}{1-t} + O\left(\frac{1}{{d}}\right)
    \end{align}

    From our analysis of the first phase, we have that $X^i_{1/2} = kZ + m + o_\kappa(1)$ where $Z\sim \mathcal{N}(0,1).$ Under the change of variables $t(s)=s/2+1/2,$ equation \eqref{eq:x:cw:2nd:b22} becomes 
    \begin{align}
        \label{eq:x:cw:2nd:b3}
        \dot X^i_s = \frac{-X^i_s+\tanh\left(\beta m + \frac{1}{\kappa^2(1-s)^2}X^i_s\right)}{1-s} + O\left(\frac{1}{{d}}\right).
    \end{align}
    In the limit of $d\to\infty,$ we get from Lemma \ref{lem:g:to:sgmm2} that this velocity field transports $\kappa Z+m$ with $Z\sim\mathcal{N}(0,1)$ to $a\sim \left(\frac{1+m}{2}\right)\delta_1 + \left(\frac{1-m}{2}\right)\delta_{-1}.$ In particular, we know that for $t\in [1/2, 1],$ $X^i_t\stackrel{d}{=}\kappa (2-2t)Z+a.$ If we instead had $\sgn(M_{1/2})=-1,$ we would have the same results except that $a\sim \left(\frac{1-m}{2}\right)\delta_1 + \left(\frac{1+m}{2}\right)\delta_{-1}.$

    We will now argue that $M_t$ will remain fixed for $t \in [1/2, 1].$ Again without loss of generality, we take $\sgn(M_t) = 1$ and we get the evolution of the $X_t^i$ in equation \eqref{eq:x:cw:2nd:b2}. Since the $X^i_t$ are iid at $t=1/2$ and evolve identically and independently, we get that by the law of large numbers as $d\to\infty$
    $$M_t=\frac{1}{d}\sum_{i=1}^d X^i_t\to \mathbb{E}[X^1_t].$$
    Similarly, we get that as $d\to\infty$ 
    \begin{align}
        \label{eq:x:cw:2nd:b3}
        \dot M_t = \frac{-M_t+\mathbb{E}\left[\tanh\left(\beta m + \frac{1}{\kappa^2(2-2t)^2}X^1_t\right)\right]}{1-t}.
    \end{align}
    Hence, using that for $t\in [1/2, 1],$ $X^i_t\stackrel{d}{=}\kappa (2-2t)Z+a,$ with $\lambda =1/(\kappa(2-2t))$ we have
    \begin{align}
        \label{eq:x:cw:2nd:b4}
        \dot M_t = \frac{-M_t+\mathbb{E}\left[\tanh\left(\beta m + \lambda Z + \lambda^2 a\right)\right]}{1-t}.
    \end{align}
    We claim that $\mathbb{E}\left[\tanh\left(\beta m + \lambda Z + \lambda^2 a\right)\right] = m,$ from where it follows immediately that $M_t$ remains constant for $t\in[1/2,1].$ To prove the claim note that since $\mathbb{E}[a] = m,$ it suffices to show 
    \begin{equation}
        \label{eq:clm:on:b}
        \mathbb{E}\left[\tanh\left(\beta m + \lambda Z + \lambda^2 a\right) - a\right] = 0.
    \end{equation}
    We have 
    \begin{align}
        \mathbb{E}&\left[\tanh\left(\beta m + \lambda Z + \lambda^2 a\right) - a\right]\\ &= \left(\tfrac{1+m}{2}\right)\mathbb{E}\left[\tanh\left(\beta m + \lambda Z + \lambda^2 \right) - 1\right] + \left(\tfrac{1-m}{2}\right)\mathbb{E}\left[\tanh\left(\beta m + \lambda Z - \lambda^2 \right) + 1\right]\\
        &= \int dz e^{-z^2/2}\left(\left(\tfrac{1+m}{2}\right) \left(\tanh\left(\beta m + \lambda z + \lambda^2 \right) - 1\right) + \left(\tfrac{1-m}{2}\right) \left(\tanh\left(\beta m + \lambda z - \lambda^2 \right) + 1 \right) \right)\\
        &= \int dz e^{-(z^2+\lambda^2)/2}\left(e^{z\lambda}\left(\tfrac{1+m}{2}\right) \left(\tanh\left(\beta m + \lambda z \right) - 1\right) + e^{-z\lambda}\left(\tfrac{1-m}{2}\right) \left(\tanh\left(\beta m + \lambda z \right) + 1 \right) \right)
        \label{eq:tanh_exp}
    \end{align}
    where in the last equality we changed variables $z\to z-\lambda$ in the first term of the integral and $z\to z+\lambda$ in the second term.

    Since $\tanh(\beta m)=m$ yields
    $$
    \tanh(\beta m + \lambda z) = \frac{\left(\tfrac{1+m}{2}\right)e^{z\lambda }-\left(\tfrac{1-m}{2}\right)e^{-z\lambda}}{\left(\tfrac{1+m}{2}\right)e^{z\lambda }+\left(\tfrac{1-m}{2}\right)e^{-z\lambda}}
    $$
    we then get by rearranging that 
    $$e^{z\lambda}\left(\tfrac{1+m}{2}\right) \left(\tanh\left(\beta m + \lambda z \right) - 1\right) + e^{-z\lambda}\left(\tfrac{1-m}{2}\right) \left(\tanh\left(\beta m + \lambda z \right) + 1 \right) =0 $$ which implies that the integrand of equation \eqref{eq:tanh_exp} is zero, giving the desired result.

    We now check the self-consistency of the assumption that either $Q_+ \gg Q_-$ or $Q_- \gg Q_+.$ Using the definition of $Q_\pm$ and $\tau_t$ and the fact that the $X^i$ evolve independently and identically, we get by the law of large numbers that 
    \begin{align}
        \frac{1}{d}\log Q_\pm &= \frac{1}{d}\sum^d_{i=1}\log\left(1\pm m\tanh \left(\frac{X^i_t}{\kappa^2(2-2t)^2}\right)\right)\\
        &\to \mathbb{E}\left[\log\left(1\pm m\tanh \left(\frac{X^1_t}{\kappa^2(2-2t)^2}\right)\right)\right].
    \end{align}

    This means that to leading order in $d,$ we have $Q_\pm = \mathbb{E}[Q_\pm].$ We then have using the fact that $X^i_t$ evolve independently
    \begin{align}
        \mathbb{E}[Q_\pm] = \left(1\pm m\mathbb{E}\left[\tanh\left(\frac{X^1_t}{\kappa^2(2-2t)^2}\right)\right]\right)^d.
    \end{align}
    A similar computation to the one used to prove equation \eqref{eq:clm:on:b} gives 
    \begin{align}
        \mathbb{E}\left[\tanh\left(\frac{X^1_t}{\kappa^2(2-2t)^2}\right)\right] = m e^{-\lambda^2/2}\mathbb{E}\left[\frac{\sinh(\lambda Z)^2}{\cosh(\lambda Z)}\right] > 0.
    \end{align}
    where we evolve taking $\sgn(M_t)=1.$ This yields $Q_+ \gg Q_-$ as desired. If we had taken $\sgn(M_t)=-1,$ we would have gotten $Q_- \gg Q_+.$
\end{proof}

\begin{thm}[Dilated VP captures both features for CW model]
Let $X^{\Delta t}_t$ be obtained from the probability flow ODE associated with the dilated VP interpolant for the CW distribution discretized with a uniform grid with step size $\Delta t.$ Let $r=(1,\cdots,1)$ \\
    \textbf{First phase}: For $t\in [0,\tfrac{1}{2}],$ we have that
        \begin{align*}
            \mu_t = \lim_{\Delta t\to 0}\lim_{d \to\infty} \frac{r\cdot X^{\Delta t}_t}{\sqrt{d}}
        \end{align*}
        fulfills 
        \begin{align*}
            \dot \mu_t &= 2\kappa m \tanh\left(mh+2\kappa m t \mu_t\right),\quad \mu_{t=0}\sim \mathcal{N}(0,1).
        \end{align*}
        with $h$ such that $p=e^{mh}/(e^{mh}+e^{-mh}).$  This implies $\mu_{t=1/2}\sim p\mathcal{N}(\kappa m, 1) + (1-p)\mathcal{N}(-\kappa m, 1).$\\ 
  In addition, for $w\perp r,$ $|w|=1,$ we have for $t\in[0,1/2]$
    \begin{align*}
        \lim_{d \to\infty} \tfrac{1}{\sqrt{d}}w\cdot (X_t-X_{0}) = 0
    \end{align*}
    \textbf{Second phase}: For $t\in [\tfrac{1}{2}, 1],$ we have that
        \begin{align*}
            M_t = \lim_{\kappa\to\infty}\lim_{\Delta t\to 0}\lim_{d \to\infty}  \frac{r\cdot X^{\Delta t}_t}{{d}}
        \end{align*}
        fulfills, for $t\in (1/2, 1)$, the ODE 
        \begin{align*}
            \dot M_t=\frac{-M_t+m\sgn(\mu_{1/2})}{1-t}
        \end{align*}
        Moreover, for any coordinate $i$ we have that
    \begin{align*}
        X^i_t = \lim_{\kappa\to\infty} \lim_{\Delta t\to 0}\lim_{d\to\infty}(X^{\Delta t, d}_t)^i
    \end{align*}
    satisfies the ODE for $t\in [1/2, 1)$
    \begin{align*}
        \dot X_t^i = \frac{-X_t^i+\tanh\left(\beta m\sgn(M_t)+ \frac{2t-1}{(2-2t)^2} X^i_t\right)}{1-t}
    \end{align*}
    with the intial condition $X^i_{1/2}\sim \mathcal{N}(0, 1).$ This equation implies that 
    \begin{align*}
        X_1^i \sim \left(\tfrac{1+m\sgn(M_1)}{2}\right)\delta_1+\left(\tfrac{1-m\sgn(M_1)}{2}\right)\delta_{-1}.
    \end{align*}
    
    \label{prp:char:cw}
\end{thm}

\begin{proof}[Proof of Theorem 4]
    Following \cite{bm}, we write $\eta_t(x):=\mathbb{E}[a|I^P_t=x]$ explicitly as
    \begin{equation}
        \label{eq:rsc:eta}
        \eta^i_t(x) =\frac{pQ_{+}(x)\tanh\left(\beta m+\frac{\tau_t x_{i}}{(1-\tau_t)^2}\right)+(1-p)Q_{-}(x)\tanh\left(-\beta m+\frac{\tau_t x_{i}}{(1-\tau_t)^2}\right)}{pQ_{+}(x)+(1-p)Q_{-}(x)} 
    \end{equation}
    where
    \begin{equation}
        \label{eq:Qpm}
        Q_{\pm}(x)=\prod_{i=1}^d\left[1\pm m\tanh\left(\frac{\tau_t x_{i}}{(1-\tau_t)^2}\right)\right].
    \end{equation}

\textbf{First phase.} For $t \in [0, 1/2],$ we have 
\begin{align}
    \label{eq:eta:approx}
    \tanh\left(\beta m+\frac{\tau_t x_{i}}{(1-\tau_t)^2}\right)&\approx \tanh(\beta m) = m \\
    \label{eq:Q:approx}
    Q_\pm(x) &\approx \exp\left(\pm m \frac{\tau_t}{(1-\tau_t)^2} r\cdot x \right)
\end{align}
where $r=(1,\cdots,1)$ and we linearized the $\tanh$ to get the second equation. These approximations are valid since in the first phase $\frac{\tau_t x_i}{(1-\tau_t)^2}=O\left(\frac{1}{\sqrt{d}}\right)$ is small. Combining the two approximations gives
\begin{equation}
    \label{eta:1st:phase}
    \eta_t(x) \approx r m \tanh\left(mh+m \frac{\tau_t}{(1-\tau_t)^2}r\cdot x\right)
\end{equation}
Lemma \ref{lem:gen:den} tells us that the law of the interpolant $I^P_t$ is the same as that of $X_t,$ the solution to the ODE
\begin{equation}   
    \label{eq:gen:y}
      \dot {{X}}_t = \frac{\dot\tau_t}{1-\tau_t}(-X_t + \eta_t(X_t)), \qquad {X}_{t=0} \sim \mathcal{N}(0, \text{Id}).
\end{equation}
Combining the last two equations with $\tau_t=2\kappa t/\sqrt{d}$ gives us the ODE
\begin{equation}
    \label{eq:ode:yt}
    \dot {{X}}_t = \frac{2\kappa}{\sqrt{d}} \left(-X_t+rm\tanh\left(mh+2kmt\frac{r\cdot X_t}{\sqrt{d}}\right) \right)+ O\left(\frac{1}{d}\right), \qquad {X}_{t=0} \sim \mathcal{N}(0, \text{Id})
\end{equation}
Writing $\mu_t = r\cdot X_t /\sqrt{d}$ gives the induced equation
\begin{equation}
    \dot \mu_t = 2\kappa m\tanh\left(mh+2kmt\mu_t\right) + O\left(\frac{1}{\sqrt{d}}\right), \qquad {\mu}_{t=0} \sim \mathcal{N}(0, 1)
\end{equation}
  Taking $d\to\infty$ yields the limiting ODE for the $\mu_t$ in the first phase. By reparameterizing time $t(s)=s/2$ with $t:[0,1]\to[0,1/2],$ we get from Lemma \ref{lem:g:to:sgmm} that this the $1$-dimensional velocity field associated to the interpolant $I_s=\sqrt{1-s^2}z+sa$ that transports $z\sim{\mathcal{N}}(0,1)$ at $t(s=0)=0$ to $a\sim p{\mathcal{N}}(\kappa m,1) + (1-p){\mathcal{N}}(-\kappa m, 1)$ at $t(s=1)=1/2.$
  Now fix $w\perp r$ with $|w|=1.$ Let $\nu_t=w\cdot X_t/\sqrt{d}.$ From equation \eqref{eq:ode:yt}, we get that for $t\in[0,1/2]$
  $$
  \dot \nu_t = O\left(\frac{1}{\sqrt{d}}\right)
  $$
  This means that $\lim_{d\to\infty} \nu_t-\nu_0 = 0$ as claimed. 

\textbf{Second phase.} For $t\in [1/2, 1],$ we have using Lemma \ref{lem:gen:den} and the definition of $\tau_t$
\begin{align}
    \label{eq:x:cw:2nda}
    \dot X_t = \frac{-X_t+\eta_t(X_t)}{1-t}.
\end{align}
We will approximate $\eta_t(x)$ based on the fact that either $Q_+ \gg Q_-$ or $Q_- \gg Q_+.$ To see this, write $a=smr + z$ where $p=\mathbb{P}(s=1) = 1-\mathbb{P}(s=-1)$ and $z\sim{\mathcal{N}}(0,\text{Id}_{d})$ and note that at $t=1/2$ we have 
$$
\mu_{1/2} = \frac{r\cdot I_{1/2}}{\sqrt{d}} \stackrel{d}{=} Z + \kappa ms + O\left(\frac{1}{\sqrt{d}}\right)
$$
where $Z\sim \mathcal{N}(0,1).$ This means that for $\kappa$ large enough, we can approximate $Q_\pm$ as
$$
Q_\pm(x) \approx \exp\left(\pm m \frac{\tau_{1/2}}{(1-\tau_{1/2})^2} \sqrt{d}\mu_{1/2} \right) \approx \exp\left(\pm km^2s \frac{\sqrt{d}\tau_{1/2}}{(1-\tau_{1/2})^2} \right).
$$
We note that $\frac{\sqrt{d}\tau_{1/2}}{(1-\tau_{1/2})^2}>\kappa.$ In particular, we have that with probability that goes to $1$ as $\kappa$ goes to infinity, either $Q_+ \gg Q_-$ or $Q_- \gg Q_+.$ We will approximate the ODEs under the assumption that this holds for $t>1/2,$ i.e., either $Q_+ \gg Q_-$ or $Q_- \gg Q_+$ for all $t>1/2.$ Similarly to the proof of Theorem 3, one can use the resulting equations to show self-consistency of this assumption. 

Our assumption on $Q_\pm$ implies that we can approximate
\begin{align}
    \eta_t(X_t) = \tanh\left(\beta m \sgn(M_t)r + \frac{2t-1}{(2-2t)^2}X_t\right).
\end{align}
where $\tanh$ is applied elementwise. Combining this with \eqref{eq:x:cw:2nda} gives
\begin{align}
    \label{eq:x:cw:2nd:b}
    \dot X_t = \frac{-X_t+\tanh\left(\beta m \sgn(M_t)r + \frac{2t-1}{(2-2t)^2}X_t\right)}{1-t} + O\left(\frac{1}{{d}}\right)
\end{align}
From our analysis of the first phase, we note that $X_{t=1/2}=Y+O(1/\sqrt{d})$ where $Y\sim \mathcal{N}(0,\text{Id}).$ Let us assume without loss of generality that $\sgn(\mu_{1/2})=1$ and fix a coordinate $i\in\{1,\cdots, d\}$
\begin{align}
    \label{eq:x:cw:2nd:b}
    \dot X^i_t = \frac{-X^i_t+\tanh\left(\beta m + \frac{2t-1}{(2-2t)^2}X^i_t\right)}{1-t} + O\left(\frac{1}{{d}}\right).
\end{align}
Under the change of variables $t(s)=s/2+1/2$ we get that the ODE becomes
\begin{align}
    \label{eq:x:cw:2nd:b2}
    \dot X^i_s = \frac{-X^i_s+\tanh\left(\beta m + \frac{s}{(1-s)^2}X^i_s\right)}{1-s} + O\left(\frac{1}{{d}}\right).
\end{align}
In the limit of $d\to\infty,$ we get from Lemma \ref{lem:g:to:sgmm2} that this velocity field transports $Z\sim\mathcal{N}(0,1)$ to $a\sim \left(\frac{1+m}{2}\right)\delta_1 + \left(\frac{1-m}{2}\right)\delta_{-1}$ using the interpolant $I_s=(1-s)Z+sa.$

From the equation for $X^i_t$ we deduce
\begin{align}
    \label{eq:x:cw:2nd:b}
    \dot M_t = \frac{-M_t+\mathbb{E}\left[\tanh\left(\beta m + \frac{2t-1}{(2-2t)^2}X^i_t\right)\right]}{1-t} + O\left(\frac{1}{{d}}\right).
\end{align}

A similar computation to the one in the proof of Theorem 3 yields $\mathbb{E}\left[\tanh\left(\beta m + \frac{2t-1}{(2-2t)^2}X^i_t\right)\right]=m,$ giving the desired equation for $M_t.$

\end{proof}

\section{Dilated VP $I_\tau=\sqrt{1-\tau^2}z+\tau a$ captures $p$ and $\sigma^2$}
We proved in Theorem 1 from the main text that taking the VP interpolant $I_\tau=(1-\tau)z+\tau a$ and using the dilation $\tau_t$ from equation (4) in the main text leads to correct estimation for both $p$ and $\sigma^2$ for the GM distribution. We now show that the same time dilation leads to correct estimation if we instead use the interpolant $I_\tau=\sqrt{1-\tau^2}z+\tau a.$ The analysis of the first phase mimics that of Theorem 1, since $\sqrt{1-\tau^2}\approx 1 \approx 1-\tau$ in the first phase. The second phases for these two interpolants are also similar, but the details of the ODEs change.
\begin{thm}
    Let $X^{\Delta t}_t$ be obtained from the probability flow ODE associated with the dilated VP interpolant $I_t=\sqrt{1-\tau_t^2}z+\tau_t a$ discretized with a uniform grid with step size $\Delta t.$ Then 
    \label{thm:dvp}
    \begin{align*}
        X^{\Delta t}_t - \frac{r\cdot X^{\Delta t}_t}{{d}}r\sim \mathcal{N}\left(0, \left(\hat \sigma^{\Delta t, d}_t\right)^2\text{Id}_{d-1}\right).
    \end{align*}
    where $\hat \sigma^{\Delta t, d}_t$ is characterized as follows:\\
    \textbf{First phase}: For $t\in [0,\tfrac{1}{2}]$ we have 
    \begin{align*}
        \lim_{\Delta t\to 0}\lim_{d \to\infty} \sigma^{\Delta t, d}_t =1. 
    \end{align*}
    In addition  
        \begin{align*}
            \mu_t = \lim_{\Delta t\to 0}\lim_{d \to\infty} \frac{r\cdot X^{\Delta t}_t}{\sqrt{d}}
        \end{align*}
        fulfills 
        \begin{align*}
            \dot \mu_t &= 2\kappa \tanh\left(h+2\kappa t \mu_t\right),\quad \mu_{t=0}\sim \mathcal{N}(0,1).
        \end{align*}
        where $h$ is such that $p=e^h/(e^h+e^{-h}).$ This implies $\mu_{t=1/2}\sim p\mathcal{N}(\kappa, 1) + (1-p)\mathcal{N}(-\kappa, 1).$\\ 
    \textbf{Second phase}: For $t\in [\tfrac{1}{2}, 1]$ we have 
    \begin{align*}
        \lim_{\Delta t\to 0}\lim_{d \to\infty} \sigma^{\Delta t, d}_t =\sqrt{1+(\sigma^2-1)(2t-1)^2}
    \end{align*}
    In addition 
        \begin{align*}
            M_t = \lim_{\Delta t\to 0}\lim_{d \to\infty}  \frac{r\cdot X^{\Delta t}_t}{{d}}
        \end{align*}
        fulfills, for $t\in (1/2, 1)$, the ODE 
        \begin{align*}
            \dot M_t=\frac{(\sigma^2-1)(2t-1)2}{1+(\sigma^2-1)(2t-1)^2}M_t + \frac{2r\sgn(M_t)}{1+(\sigma^2-1)(2t-1)^2}
        \end{align*}
        and  satisfies
        \begin{align*}
            M_1 = p^\kappa\delta_1 + (1-p^\kappa)\delta_{-1}
        \end{align*}
        where $p^\kappa$ is such that  $\lim_{\kappa\to \infty}p^\kappa=p$
\end{thm}
\begin{proof}
        Consider the dilated variance preserving interpolant $I_t = \sqrt{1-\tau_t^2}z + \tau_t a$ where $z\sim \mathcal{N}(0,\text{Id}),$ $a\sim \mu,$ and $\tau_t$ is given in equation (4) in the main text. Plugging in $\alpha_t=\sqrt{1-\tau_t}$ and $\beta_t=\tau_t$ into the velocity field given by Lemma \ref{lem:gen:gen} yields 
    \begin{align}
    \label{eq:gen:dil}
    \dot{X}_{t} = \frac{(\sigma^2-1)\tau_t\dot\tau_t}{1+(\sigma^2-1)\tau^2_t}X_t+\frac{\dot\tau_t}{1+(\sigma^2-1)\tau_t^2}  r\tanh\left(h+\frac{\tau_{t} r\cdot X_{t}}{1+(\sigma^2-1)\tau_t^2}\right)
    \end{align}
    \textbf{First phase.} For $t\in [0,1/2],$ we have $\tau_t = \frac{2 \kappa t}{\sqrt{d}}.$ Plugging in into equation \eqref{eq:gen:dil} gives
    \begin{align}
        \dot{X}_{t} & = \frac{2\kappa}{\sqrt{d}} r\tanh\left(h+2\kappa t \frac{r\cdot X_{t}}{\sqrt{d}}\right) + O\left(\frac{1}{d}\right).
    \end{align}
    The remaining of the analysis of the first phase to yield the desired results is almost identical to what we did in the proof of Theorem 1 and is omitted.\\
    \textbf{Second phase.} For $t\in[1/2,1],$ we have $\tau_t = \left(1-\frac{\kappa}{\sqrt{d}}\right)(2t-1)+\frac{\kappa}{\sqrt{d}}.$ Using equation \eqref{eq:gen:dil} again yields 
    \begin{align}
        \dot X_t = \frac{(\sigma^2-1)(2t-1)2}{1+(\sigma^2-1)(2t-1)^2}X_t + \frac{2r\tanh\left(h + \frac{(2t-1)r\cdot X_t + \kappa \frac{r\cdot X_t}{\sqrt{d}}}{1+(\sigma^2-1)(2t-1)^2}\right)}{1+(\sigma^2-1)(2t-1)^2} + O\left(\frac{1}{d}\right).
        \label{eq:ode:x:2}
    \end{align}
    Writing $\mu_t=\frac{r\cdot X_t}{\sqrt{d}},$ this implies
    \begin{align}
        \dot \mu_t = \frac{(\sigma^2-1)(2t-1)2}{1+(\sigma^2-1)(2t-1)^2}\mu_t + \frac{2\sqrt{d}\tanh\left(h + \frac{(2t-1)\sqrt{d}\mu_t + \kappa \mu_t}{1+(\sigma^2-1)(2t-1)^2}\right)}{1+(\sigma^2-1)(2t-1)^2} + O\left(\frac{1}{\sqrt{d}}\right).
        \label{eq:ode:mu:2}
    \end{align}
    From the analysis of the first phase (see equation \eqref{eq:vp:1st:mu} in the proof of Theorem 1), we have that for finite $d$ and discretizing with step size $\Delta t$ 
    \begin{align}
        \mu_{t=1/2} = \theta + O\left(\frac{1}{\sqrt{d}}\right) + o_{\Delta t}(1)
    \end{align}
    where $\theta \sim p\mathcal{N}(\kappa, 1) + (1-p)\mathcal{N}(-\kappa, 1)$ and the term $o_{\Delta t}(1)$ goes to zero as $\Delta t$ goes to zero independently of $d,$ since this error only comes from discretizing the $d$-independent ODE $\dot{\mu}_{t} = 2\kappa\tanh\left(h+2\kappa t \mu_t\right)$ with $\mu_{t=0}\sim \mathcal{N}(0,1).$

    At $t=1/2,$ the argument of the $\tanh$ is $h+\kappa\mu_{1/2}.$ Assume $\theta$ takes value on the $+\kappa$ mode. For $d$ large enough and $\Delta t$ small enough (independently of $d$) we have that $|\mu_{1/2} - \theta| < 1$. We also have that $h\ll \kappa \theta$ and hence $h\ll \kappa\mu_t,$ where both inequalities hold with probability going to $1$ as $\kappa$ goes to infinity. This means we can approximate the ODE for $\mu_t$ for $t=1/2$ as 
    \begin{align}
        \dot \mu_t = \frac{(\sigma^2-1)(2t-1)2}{1+(\sigma^2-1)(2t-1)^2}\mu_t + \frac{2\sqrt{d}\sgn(\mu_t)}{1+(\sigma^2-1)(2t-1)^2} + O\left(\frac{1}{\sqrt{d}}\right).
        \label{eq:ode:mu:3}
    \end{align}
    We note that this remains valid for $t>1/2$ since under the approximation we used in equation \eqref{eq:ode:mu:3}, we have that $\mu_t$ is increasing. Indeed, whenever $\mu_t=o(\sqrt{d}),$ the second term in the RHS of \eqref{eq:ode:mu:3} will dominate. If $b$ takes value on the $-\kappa$ mode instead, an analogous argument shows that \eqref{eq:ode:mu:3} is also valid in that case.
    
    We then use this approximation in the ODEs for $X_t$ to get for $t\in(1/2, 1)$
    \begin{align}
        \label{eq:final:x_t:2}
        \dot X_t=\frac{(\sigma^2-1)(2t-1)2}{1+(\sigma^2-1)(2t-1)^2}X_t + \frac{2r\sgn(M_t)}{1+(\sigma^2-1)(2t-1)^2} + O\left(\frac{1}{d}\right).
    \end{align}
    where $M_t=r\cdot X_t/d.$ This yields the limiting ODE for $M_t$ in the theorem statement. We recall that from the analysis of the first phase (after taking the limit first on $d\to\infty$ and then on $\Delta t\to 0$) we got 
    \begin{align}
        \mu_{t=1/2}\sim p\mathcal{N}(\kappa, 1) + (1-p)\mathcal{N}(-\kappa, 1).
    \end{align}
    We argued above that the sign of $\mu_t$ will be preserved for $t\in[1/2, 1]$ with probability going to $1$ as $\kappa$ tends to $\infty.$ This means that 
    \begin{align}
        M_1 = p^\kappa\delta_1 + (1-p^\kappa)\delta_{-1}
    \end{align}
    where $p^\kappa$ is such that  $\lim_{\kappa\to \infty}p^\kappa=p.$ 
    
    Let $X^\perp = X^{\Delta t}_t - \frac{r\cdot X^{\Delta t}_t}{{d}}r$ and note that
    \begin{align}
        \label{eq:final:x_t23}
        \dot X^{\perp}_t= \frac{(\sigma^2-1)(2t-1)2}{1+(\sigma^2-1)(2t-1)^2}X^{\perp}_{t}.
    \end{align}
    Since this is a linear ODE from a Gaussian initial condition, we have
    \begin{align}
    X^\perp_t \sim \mathcal{N}\left(0, \left(\hat \sigma^{\Delta t, d}_t\right)^2\text{Id}_{d-1}\right).
    \end{align}
    Under the change of variables $t(s)=s/2+1/2,$ the equation \eqref{eq:final:x_t23} for $X_t^\perp$ becomes
    \begin{align}
        \label{eq:final:x_t4}
        \dot X^{\perp}_s= \frac{-s+\sigma^2s}
        {(1-s^2)+\sigma^2s^2}X^{\perp}_{s}.
    \end{align}
    By taking one coordinate $i\in\{1,\cdots, d-1\}$ of $X_s^\perp$ we get from Lemma \ref{lem:g:to:g} that this is the velocity field associated with the interpolant $I_s=\sqrt{1-s^2}z+sa$ where $z\sim \mathcal{N}(0,1)$ is transported to $a\sim \mathcal{N}(0, \sigma^2).$ For fixed $s\in [0, 1],$ the interpolant $I_s$ has variance $(1-s^2)+\sigma^2s^2 = 1+(\sigma^2-1)(2t-1)^2$ as claimed.
\end{proof}

\section{Connection with the sub-VP SDE.}
Similarly to the connection described in Section 3.3 between our VP interpolant and the VP SDE from \cite{song2021scorebasedgenerativemodelingstochastic}, we note that the sub-VP SDE from \cite{song2021scorebasedgenerativemodelingstochastic} corresponds to the interpolant $I_t = (1-\tau_t)z + \tau_t a$ with $\tau_t=exp(-\int_0^{-\ln t}\gamma_u du)$ as in Section 3.3. Hence, the sub-VP SDE corresponds exactly to a time-dilation of the VP interpolant we analyze in Theorem 1.

\section{Details for the CelebA experiment}
As mentioned in the main text, we use pretrained models for the VP and VE SDEs from \cite{song2021scorebasedgenerativemodelingstochastic}. We use pretrained models on the CelebA-HQ dataset \cite{karras2018progressivegrowinggansimproved}. These models are available publicly on the HuggingFace library for the VP SDE \cite{huggingface_ddpm_celebahq_256} and for the VE SDE \cite{huggingface_ncsnpp_celebahq_256}. We note that for the VP SDE we actually use the DDPM model from \cite{ho2020denoising} which was later shown to correspond to a particular discretization of Song et al's VP SDE (see Appendix E in \cite{song2021scorebasedgenerativemodelingstochastic}.)

We then generate samples running the VP or VE SDEs with different number of discretization steps with a uniform grid. For a given number of discretization steps, we generate $7,500$ samples and then use the DeepFace library from \cite{serengil2024lightface} to detect whether there is a face in the generated image. This measures the low-level feature of the image. For the high-level feature, when the DeepFace library does detect a face, it tries to predict the race of the generated face, giving one of the following $6$ races: Asian, Black, Indian, Latino/Hispanic, Middle Eastern, White. Given the predicted races in the samples were a face was detected among the $7,500$, we calculate an empirical distribution supported on $6$ points. We also calculated the race distribution on the original CelebA-HQ dataset which has $30,000$ real images. We then compute the KL Divergence between the distribution on races of the generated images and the images in the dataset.

A technical detail is that the DDPM implementation from \cite{huggingface_ddpm_celebahq_256} can only handle a number of discretization steps that is of the form $\left \lfloor{1000/n}\right \rfloor $ where $n$ is an integer. For the VP SDE, we use one of the following options $\{10, 25, 50, 100, 250, 333, 500, 1000\}$ for the number of discretization steps. For the VE SDE, we instead use one of $\{333, 500, 750, 1000\}.$ A smaller number of discretization steps for the VE SDE leads to images that are too low-quality for our purposes.

As a sanity check, we include non-cherry-picked samples from the VP SDE in Figure \ref{fig:vp_sde} and from the VE SDE in Figure \ref{fig:ve_sde}. We confirm that diversity increases for the images generated by the VP SDE as the number of steps grows, whereas quality increases for the images generated by the VE SDE as we take larger number of steps.
\begin{figure}
    \centering
    \includegraphics[width=\linewidth]{merged_image (2) (1)-min.png}
    \caption{For different number of discretization steps, we include images generated by the VP SDE from \cite{song2021scorebasedgenerativemodelingstochastic} pretrained on the CelebA-HQ dataset \cite{huggingface_ddpm_celebahq_256}. We see that for small number of steps, the samples look alike, and diversity increases with the number of steps.}
    \label{fig:vp_sde}
\end{figure}

\begin{figure}
    \centering
    \includegraphics[width=1\linewidth]{merged_image (4).png}
    \caption{or different number of discretization steps, we show images generated by the VE SDE from \cite{song2021scorebasedgenerativemodelingstochastic} pretrained on the CelebA-HQ dataset \cite{huggingface_ncsnpp_celebahq_256}. Samples with small number of steps are lacking in quality, but not in diversity. As we increase the number of steps, the quality improves.}
    \label{fig:ve_sde}
\end{figure}

\bibliographystyle{apalike}
\bibliography{biblio}